\def\eqref#1{equation~\ref{#1}}
\def\1{\bm{1}}
\DeclareMathAlphabet{\mathsfit}{\encodingdefault}{\sfdefault}{m}{sl}
\SetMathAlphabet{\mathsfit}{bold}{\encodingdefault}{\sfdefault}{bx}{n}
\newcommand{\R}{\mathbb{R}}
\renewcommand{\paragraph}[1]{\noindent \textbf{#1}\quad}
\newtheorem{lemma}{Lemma}
\newtheorem*{lemma*}{Lemma}
\newtheorem{remark}{Remark}
\newtheorem{definition}{Definition}
\providecommand{\cref}[1]{Chapter~\ref{chap:#1}}
\providecommand{\R}{\ensuremath{\mathbb{R}}}
\providecommand{\inprod}[1]{\langle#1\rangle}
\providecommand{\equiv}{\coloneqq}
\newcommand{\x}{\bm{x}} 
\newcommand{\w}{\bm{w}} 
\newcommand{\T}{\bm{t}} 
\DeclareMathOperator{\tr}{tr}
\DeclareMathOperator{\vecop}{vec}
\newcommand{\cl}{\ell} 
\newcommand{\nsamp}{n} 
\newcommand{\load}{\tau} 
\newcommand{\cload}{\alpha} 
\newcommand{\tload}{\kappa} 
\newcommand{\tv}{\bm{w}}  
\newcommand{\ntv}{k} 
\newcommand{\nv}{\rho} 
\newcommand{\Ptest}{\mathcal{P}_\mathrm{test}}
\newcommand{\eicl}{e_\mathrm{ICL}}
\newcommand{\eidg}{e_\mathrm{IDG}}
\newcommand{\rhotr}{\rho_\mathrm{train}}
\newcommand{\rhotest}{\rho_\mathrm{test}}
\newcommand{\Ctr}{C_\mathrm{train}}
\newcommand{\Ctst}{C_\mathrm{test}}
\newcommand{\ctr}{c_\mathrm{train}}
\newcommand{\ctst}{c_\mathrm{test}}
\newcommand{\Mk}{\mathcal{M}_\kappa}
\newcommand{\ealign}{e_\mathrm{misalign}}
\newcommand{\ps}{\pi}
\newcommand{\pM}{B_\mathrm{placeholder}}
\newcommand{\pMe}{B_\mathrm{ext}}
\newcommand{\btr}{\bm{b}_k}
\newcommand{\Rtr}{R_k}
\newcommand{\Bte}{B_\mathrm{test}}
\newcommand{\Eb}[2]{\mathbb{E}_{#1}\left[#2\right]}
\begin{document}

\title{Pretrain–Test Task Alignment Governs Generalization in In-Context Learning}

\author{Mary I. Letey}
\email{maryletey@fas.harvard.edu}
\affiliation{John A. Paulson School of Engineering and Applied Sciences, Harvard University, Cambridge, MA, USA}
\affiliation{Kempner Institute for the Study of Natural and Artificial Intelligence, Harvard University, Cambridge, MA, USA}

\author{Jacob A. Zavatone-Veth}
\email{jzavatoneveth@fas.harvard.edu}
\affiliation{Society of Fellows, Harvard University, Cambridge, MA, USA}
\affiliation{Center for Brain Science, Harvard University, Cambridge, MA, USA}

\author{Yue M. Lu}
\email{yuelu@seas.harvard.edu}
\affiliation{John A. Paulson School of Engineering and Applied Sciences, Harvard University, Cambridge, MA, USA}

\author{Cengiz Pehlevan}
\email{cpehlevan@seas.harvard.edu}
\affiliation{John A. Paulson School of Engineering and Applied Sciences, Harvard University, Cambridge, MA, USA}
\affiliation{Kempner Institute for the Study of Natural and Artificial Intelligence, Harvard University, Cambridge, MA, USA}
\affiliation{Center for Brain Science, Harvard University, Cambridge, MA, USA}

\begin{abstract}
\noindent In-context learning (ICL) is a central capability of Transformer models, but the structures in data that enable its emergence and govern its robustness remain poorly understood. In this work, we study how the structure of pretraining tasks governs generalization in ICL. Using a solvable model for ICL of linear regression by linear attention, we derive an exact expression for ICL generalization error in high dimensions under arbitrary pretraining–testing task covariance mismatch. This leads to a new alignment measure that quantifies how much information about the pretraining task distribution is useful for inference at test time. We show that this measure directly predicts ICL performance not only in the solvable model but also in nonlinear Transformers. Our analysis further reveals a tradeoff between specialization and generalization in ICL: depending on task distribution alignment, increasing pretraining task diversity can either improve or harm test performance. Together, these results identify train-test task alignment as a key determinant of generalization in ICL.
\end{abstract}

\maketitle

\setlength{\abovedisplayskip}{6pt}
\setlength{\belowdisplayskip}{6pt}
\setlength{\abovedisplayshortskip}{0pt}
\setlength{\belowdisplayshortskip}{4pt}


\section{Introduction}
Pre-training on simple next-token prediction enables Transformer models to acquire a remarkably broad array of capabilities, from language translation to code generation and mathematical reasoning \cite{achiam2023gpt,TheC3,deepseekai2025,vaswani2017attention}. Among the emergent abilities that enable Transformers to flexibly execute a myriad of tasks, their capacity for in-context learning (ICL) is particularly striking, as it allows for test-time task execution without task-specific pretraining \cite{vonoswald2023transformers,wei2022emergent}. In other words, ICL reflects the ability to emergently meta-learn a learning algorithm during pretraining, which is then applied to learn from data within a context at test time \cite{akyurek2023what,raventos2023pretraining,zhang2023trained}. 

For ICL to be effective, the tasks encountered at test time must not be totally unrelated to those encountered during pretraining, as there is no free lunch. Though substantial theoretical attention has been devoted to the question of why and how ICL emerges and how well the resulting algorithms perform, this key issue of how pretraining tasks should be selected to enable ICL in the real, test-time world remains underexplored \cite{lu2025asymptotictheoryincontextlearning,zhang2023trained,zhang2024incontext}. This motivates the central question of our work: 
\begin{tcolorbox}[colframe=NavyBlue, opacityback=0.95, title=Central Question,fonttitle=\bfseries]
How does mismatch between the structure of tasks seen in pretraining and the structure of tasks faced at test time affect the ability of in-context learners to generalize?
\end{tcolorbox}
Here, we investigate how pretrain-test task alignment affects generalization in a simple model setting: ICL of linear regression. Our main contributions are as follows: 
\begin{itemize}[leftmargin=*]
    \item We give a precise mathematical analysis of the performance of a simplified linear Attention module learning to do linear regression in-context. We model pretraining and test task distributions with arbitrary covariance structure, generalizing previous works on this model that assumed identical task distributions \cite{lu2025asymptotictheoryincontextlearning,zhang2023trained}. In this solvable model, ICL generalization is governed by a particular alignment metric between  pretraining and test task distributions. 
    \item Though derived for a simplified linear model, we show that this alignment measure predicts the generalization performance of nonlinear Transformers trained to do linear regression in-context.
    \item Finally we show in the solvable model that it is not always optimal from a generalization standpoint to pretrain on the same distribution of tasks that the model will encounter at test time, as echoed in kernel regression \cite{canatar2022outofdistributiongeneralizationkernelregression}.
\end{itemize} 
In all, our work sheds light on how pretrain-test task alignment shapes the performance of in-context algorithms. It reveals how the inductive biases of Transformers can result in optimal task \textit{misalignment}: rather than teaching to the test, a different curriculum of pretraining tasks may better enable a Transformer to learn the algorithm that enables it to generalize well at test time.

\subsection{Related Work}

\paragraph{Empirical studies of ICL.} Empirical work has shown that LLMs can learn diverse tasks from examples alone, with performance improving predictably with scale \cite{achiam2023gpt,TheC3,deepseekai2025,vaswani2017attention,wei2022emergent,vonoswald2023transformers}. Several studies document how architectural components, such as attention heads or MLP layers, are recruited during training to implement ICL \cite{reddy2023mechanistic, zhang2024incontext, tong2024mlpslearnincontext,kratsios2025incontextuniversalityenoughmlps}. Various works have also focused on understanding what algorithms transformers can learn to perform, including gradient descent, Bayesian inference, and compression. \cite{ahn2023transformerslearnimplementpreconditioned,mahankali2023stepgradientdescentprovably,shen2025understandingincontextlearningstructured,cole2025incontextlearninglineardynamical,Liu_2024,singh2023transientnatureemergentincontext,zhang2023doesincontextlearninglearn,wurgaft2025incontextlearningstrategiesemerge,mccracken2025uncoveringuniversalabstractalgorithm,elmoznino2025incontextlearningoccamsrazor,lee2025distinct}.  Others have explored the role of task diversity in shaping generalization, showing that diverse pretraining induces transitions from memorization to generalization \cite{raventos2023pretraining}. The specific effect of data structure and its role in ICL emergence has also been studied empirically, notably by \citet{chan2022data} highlighting the importance of \textit{anisotropic} data for the emergence of ICL abilities. Our work provides a theoretical counterpart to these empirical investigations, as we model the generalization effects of structured task distributions. 

\paragraph{Theoretical studies of ICL.} Theoretical work has flourished in simplified Transformer models, particularly with linear or kernelized attention. A number of studies show that these architectures can implement classical learning algorithms, including kernel regression, ridge regression, or gradient descent, purely from in-context tokens \cite{bai2023transformers,li2023transformers,akyurek2023what,vonoswald2023transformers,zhang2024incontext}. These insights have advanced our mechanistic understanding of ICL as algorithm emulation. However, most of these works make restrictive assumptions. Commonly, data is drawn from isotropic Gaussians, train and test distributions are matched, and generalization is studied only in the infinite-sample or population limit. Even recent studies of finite-sample ICL retain these simplifying assumptions \emph{i.e.} without full generality of training and testing task distribution \cite{zhang2024incontext,fu2023transformers,lu2025asymptotictheoryincontextlearning,finegrained_data_arch_oymak}. A notable exception is the work of \citet{goddard2025incontextlearninggeneralizetask} that studies tasks sampled from separate portions of a spherical task manifold. Our work advances this ``task generalization" frontier by deriving an exact expression for the ICL generalization error in the presence of arbitrary task covariances and finite-sample regimes. This allows us to explore how task-structure mismatch affects generalization, a setting largely absent from prior theoretical models. 

\paragraph{Train-Test task alignment in other settings.} Outside of ICL, the idea of train-test task alignment in linear regression has been studied in the context of out-of-distribution generalization under target vector and covariate shifts. In particular, past works have studied which measures of alignment between train and test feature covariance matrices determine out-of-distribution generalization in high-dimensional ridge regression \cite{canatar2021out,tripuraneni2021covariate,patil2024optimal,atanasov2025risk}. These works build on a substantial body of results showing how the generalization performance of ridge regression is determined by the structure of the training feature covariance---principal directions with larger population variance are learned first---and the alignment of the task vector with those high-variance directions \cite{Canatar_2021,hastie2022surprises,dobriban2015highdimensionalasymptoticspredictionridge,atanasov2024scaling,advani2020high}. Our work brings this perspective into the ICL setting by analyzing the effects of the pretraining covariance, and pretrain-test misalignment on generalization. We show that even in simple linear regression settings, structure mismatch induces rich, nontrivial behavior, reinforcing the broader principle that distributional alignment between pretraining and test-time data is a key driver of generalization.

\section{Model Setup}\label{sec:setup}

We begin by setting up the solvable model used to derive the results of this work. This setup builds on previous works that apply a reduced-parameter version of linear attention to linear regression ICL \cite{lu2025asymptotictheoryincontextlearning,zhang2023trained,wu2023pretraining,wang2020linformer}. 

\paragraph{ICL of linear regression.} We consider an in-context regression task: the input to the model is a sequence of the form $\lbrace \x_1, y_1, \x_2, y_2, \ldots, \x_{\cl},y_{\cl}, \x_{\cl+1}\rbrace$, and the required output is the matching $y_{\ell+1}$ corresponding to $\x_{\cl+1}$. This input is called a context, and $\ell$ the context length. We consider the case that the relationship between $\x$ and $y$ is approximately linear: \begin{equation}\label{eq:linear_function}
    y_i = \inprod{\x_i, \w} + \epsilon_i\,
\end{equation} for noise $\epsilon_i$ and task vector $\w$. Thus, the model needs to form an estimate of $\w$ using $\lbrace \x_1, y_1, \x_2, y_2, \ldots, \x_{\cl},y_{\cl}\rbrace$ and then apply it to $\x_{\cl+1}$ to estimate $y_{\cl+1}$.

\paragraph{Pretraining data.} The pretraining data batch will contain $n$ sample sequences of the above form, \emph{i.e.}, for $\mu=1,...,n$, the $\mu$th sample sequence $\lbrace \x_1, y_1, \x_2, y_2, \ldots, \x_{\cl},y_{\cl}, \x_{\cl+1}\rbrace$ related by the approximate linear mapping from (\ref{eq:linear_function}), $ y_i^{\mu} = \inprod{\x_i^{\mu}, \w^{\mu}} + \epsilon_i^{\mu}$, where now $\w^\mu$ is the task vector corresponding to the $\mu$-th sample context. 

We will sample the pretraining batch as follows: For $i=1,\dots,\ell$ and $\mu = 1,\dots,n$, 
\begin{align}
    \x_i^{\mu} &\sim_\text{i.i.d.} \mathcal{N}(0, I_d/d)\,,\qquad
    \epsilon_i^{\mu} \sim_\text{i.i.d.} \mathcal{N}(0,\nv)\,,\qquad \label{eq:pretrainingdistribution}  \\
    \w^\mu &\sim_\text{unif} \{\T_1\,,\cdots\,, \T_k\} \qquad
    \text{where } \T_j \sim_\text{i.i.d.} \mathcal{N}(0,\Ctr)\, \text{ for  $j=1,\dots,k$}\,. \nonumber
\end{align}
The parameter $k$ here is called \textit{task diversity}. Note that if $k<n$, the pretraining batch contains some tasks repeated across the contexts. In this way, we control both the amount $k$ of actually unique tasks seen during pretraining, as well as the structure of the task distribution using $\Ctr$. This distinguishes our setup from previous studies which typically focus on isotropic or matched tasks \cite{lu2025asymptotictheoryincontextlearning,raventos2023pretraining,fu2023transformers,finegrained_data_arch_oymak}.

\paragraph{Linear attention.} We will study the performance of the linear self-attention block \cite{wang2020linformer} on this in-context regression task. The input to the linear self-attention model is an embedding matrix $Z$ made up of our context sequence. Here, following the convention of \citet{zhang2023trained,wu2023pretraining,wang2020linformer}, we chose to embed $\lbrace \x_1, y_1, \x_2, y_2, \ldots, \x_{\cl},y_{\cl}, \x_{\cl+1}\rbrace$ as
\begin{align}\label{eq:Zstructure}
Z = \left[\begin{array}{ccccc} \x_1 & \x_2 & \ldots & \x_{\cl} & \x_{\cl+1} \\ y_1 & y_2 & \ldots & y_{\cl} & 0 \end{array}\right] \in \R^{(d+1)\times(\cl+1)},
\end{align}
where $0$ in the lower-right corner is a placeholder token for the $y_{\cl+1}$ we wish to predict. The model's output \cite{shen2021efficient,katharopoulos2020transformers,wang2020linformer} is given by
\begin{align}\label{eq:LA}
A = Z + VZ(KZ)^\top(QZ)/\ell
\end{align} for value matrix $V\in\mathbb{R}^{(d+1)\times (d+1)}$ and key, query matrices $K,Q$ such that $K^\top Q\in \mathbb{R}^{(d+1)\times (d+1)}$. 
Following the positional encoding in (\ref{eq:Zstructure}), the linear attention model's prediction for $y_{\ell+1}$ is 
\begin{align}\label{eq:full_model}
\hat y = A_{d+1,\cl+1}.
\end{align} 
Previous works from \citet{zhang2023trained} and \citet{lu2025asymptotictheoryincontextlearning} have shown that the output $\hat y = A_{d+1,\cl+1}$ of the model can be reduced to give a simpler, analytically-tractable model. Writing the attention and value matrices as \begin{align}
V = \left[\begin{array}{cc} V_{11} & \bm{v}_{12}\\ \bm{v}_{21}^{\top} & v_{22}
\end{array}\right], \quad M  = \left[\begin{array}{cc} M_{11} & \bm{m}_{12} \\ \bm{m}_{21}^{\top} & m_{22}
\end{array}\right] \equiv K^{\top}Q\,, 
\end{align} the predictor expands as 
\begin{align} 
\hat y =  \frac 1{\cl} \bm{x}_{\cl+1}^{\top} \bigg(v_{22}M_{11}^{\top} \sum_{i = 1}^{\cl} y_i \bm{x}_i+ v_{22}\bm{m}_{21}\sum_{i=1}^{\cl}y_i^2 + M_{11}^{\top}\sum_{i=1}^{\cl+1} \bm{x}_{i}\bm{x}_{i}^{\top}\bm{v}_{21}+\bm{m}_{21}\sum_{i=1}^{\cl}y_i \bm{x}_{i}^{\top}\bm{v}_{21}\bigg). 
\end{align}
\citet{zhang2023trained} and \citet{lu2025asymptotictheoryincontextlearning} argued that that the final two terms depending on $\bm{v}_{21}$ can be removed without affecting the performance of the estimator: the first, depending on $\bm{x}_i\bm{x}_i^\top \bm{v}_{21}$, does not contain any task information, and thus does not help us estimate $\bm{w}$. The second, depending on $y_i\bm{x}_i^\top \bm{v}_{21}$ provides only a one-dimensional projection of $\bm{x}$ and $\bm{w}$, and so for large-dimensional tokens, does not effectively contribute to good estimate of $\bm{w}$ either. For this reason, we set $\bm{v}_{21}=\bm{0}$. \citet{zhang2023trained} showed that this choice of parameter initialization is stable under SGD, further validating this assumption.  With this simplification, we can rewrite the simplified model's output as \begin{equation}\label{eq:red}
\hat y_{\ell+1} = \tr(\Gamma H_Z^\top)
\end{equation} for a parameter matrix  
\begin{align}\label{eq:Gamma}
    \Gamma \equiv v_{22}\begin{bmatrix} M_{11}^{\top}/d & \bm{m}_{21}\end{bmatrix} \in \mathbb{R}^{d\times(d+1)}\,.
\end{align} and a data matrix 
\begin{align}\label{eq:H_Z}
    H_Z \equiv  \x_{\cl+1} \begin{bmatrix} \frac{d}{\cl} \sum_{i\leq \cl} y_i \x_i^{\top} & \frac{1}{\cl}\sum_{i\leq \cl}y_i^2\end{bmatrix} \in \mathbb{R}^{d\times (d+1)} . 
\end{align} 

\paragraph{Reduced model optimization.} Given a batch $\lbrace \x_1^{\mu}, y_1^{\mu}, \ldots, \x_{\cl+1}^{\mu}, y_{\cl+1}^{\mu}\rbrace_{\mu =1}^n$ of pretraining data (as explained above), we can find finite-sample optimal parameters by minimizing MSE loss on next-output prediction with ridge regularization, giving 
\begin{align}\label{eq:ridge_LT}
    \Gamma^\ast &= \underset{\Gamma}{\arg\,\min}\, \sum_{\mu =1 }^{\nsamp} \left(y_{\cl+1}^{\mu} - \tr(\Gamma(H^\mu)^\top) \right)^2 +   \frac{\nsamp}{d}\lambda\tr(\Gamma\Gamma^\top)\,, 
\end{align}
where $\lambda > 0$ is a regularization parameter, and $H^\mu$ is defined by (\ref{eq:H_Z}) for the $\mu$th context. We will focus on the minimum-norm predictor, \textit{i.e.}, on the limit where $\lambda \to 0$ \cite{hastie2022surprises}. 

\paragraph{Test error.} We finally wish to test the pretrained model on a general task to see if the model can genuinely perform in-context regression. The test distribution $\Ptest$ is then \begin{align}\label{eq:testingdistribution}
    \x_i^\text{test} \sim_\text{i.i.d.} \mathcal{N}(0, I_d/d)\,,  \qquad 
    \epsilon_i^\text{test} \sim_\text{i.i.d.} \mathcal{N}(0,\nv)\,, \qquad 
    \bm{\tv}^\text{test} \sim_\text{i.i.d.} \mathcal{N}(0,\Ctst)\,.
\end{align} We will measure ICL performance by the average MSE error of our optimal estimator $\hat{y}^*_{\ell+1} = \tr(\Gamma^* H^\top)$ over the test distribution, \begin{equation}\label{eq:mse_avg} \mathcal{E}_\mathrm{ICL}(\Gamma^*) = \mathbb{E}_{\Ptest}[ (y_{\cl+1} - \tr(\Gamma^* H^\top) )^2 ]\,.\end{equation} We highlight here the generality of the test task distribution through the new matrix $\Ctst$, allowing us to study the interaction of training and testing task structure. 

\paragraph{Data parameters in the high-dimensional limit.} We have introduced four data parameters: token dimension $d$, context length $\ell$, pretraining batch size $n$, and task diversity $k$. As is standard in the theory of high-dimensional regression \cite{hastie2022surprises,advani2020high,atanasov2024scaling,lu2025asymptotictheoryincontextlearning}, we will consider a scaling limit where all four of these parameters are taken to infinity in a way such that the following ratios remain constant: \begin{align}\label{eq:scalings}
\frac{\cl}{d} \equiv \cload, \qquad \frac{\nsamp}{d^2} \equiv \load, \qquad \textrm{and} \quad \frac{\ntv}{d} \equiv \tload . 
\end{align}
Considering this limit makes the model analytically tractable, but preserves interesting phenomena that are present at finite size. Going forward, we will refer to $\alpha$, $\tau$, and $\kappa$ as the context length, batch size, and task diversity parameters, respectively.  

\paragraph{Pretraining task quantities.}
Before presenting our formula for this ICL test error, it will be helpful to first define some task-distribution quantities, which depend on the pretraining task covariance $\Ctr$ and task diversity parameter $\kappa$. These quantities effectively tell us how well we can reconstruct $\Ctr$ from the $k$-sample task covariance that the model sees during pretraining, \begin{equation}\label{eq:finitecovariance}
    \Rtr = \frac{1}{k}\sum_{j\in[k]} \T_j\T_j^\top\,.
\end{equation} Because the pretraining tasks $\{\T_1,...,\T_k\}$ are random, $\Rtr$ is a random matrix. However, in high dimensions, the following \emph{deterministic} quantities capture the relevant information about $\Rtr$: let the deterministic matrix $F_\kappa(z)$ and deterministic scalar $\Mk(z)$ be defined through the implicit equations
\begin{align}
    F_\kappa(z) &= \left(\left(1-\frac{1}{\kappa}+\frac{z}{\kappa}\mathcal{M}_\kappa(z)\right)\Ctr +z I_d\right)^{-1} \label{def:F} \\
    \mathcal{M}_\kappa(z) &= \frac{1}{d}\tr(F_\kappa(z)) \label{def:M}\,.
\end{align}  Then, we have the high-dimensional equivalence 
\begin{align}\left(\Rtr + z I_d\right)^{-1} \simeq F_\kappa(z).\end{align}
As we define formally in Definition \ref{def:deterministicequivalence} of Appendix \ref{sec:rmt}, this equivalence holds in the sense that the (normalized) traces of $(\Rtr + z I_d)^{-1}$ and $F_{\kappa}(z)$ multiplied against test matrices coincide in the high-dimensional limit. Here, $z\in\mathbb{R}_+$ is a noise threshold parameter suppressing smaller eigenvalues of $R_k$. 

Intuitively, $F_\kappa(z)$ and $\mathcal{M}_\kappa(z)$ give us information about how much signal in $\Ctr$ can be recovered after a finite number $k$ of pretraining samples, filtered by noise level $z$. As $\kappa \to \infty$, $\Rtr$ approaches $\Ctr$, and we therefore fully recover the original distribution of tasks. We will use $F'_\kappa(z)$ to refer to the derivative of $F_\kappa(z)$ with respect to $z$; this gives a measure of the sensitivity of this covariance recovery matrix to the noise level $z$. These quantities will play a key role in our discussion of task alignment. 

Given this setup, we compute a formula for the ICL generalization error $\mathcal{E}_\mathrm{ICL}(\Gamma^*)$ in terms of the data parameters $\alpha, \kappa,\tau$, the pretraining task covariance $\Ctr$, and the testing task covariance $\Ctst$. We present this formula and discuss its implications in the following sections.

\paragraph{Notation.} We  write $\tr[A] \equiv \tr(A)/d$ for the dimension-scaled trace of  matrix $A$. We use a normalized Frobenius inner product between two matrices, \emph{i.e.} $\langle A, B \rangle \equiv \tr[AB^\top] = \tr(AB^\top)/d$. We abbreviate the normalized traces of the task covariances as $\ctr = \tr[\Ctr]$ and $\ctst=\tr[\Ctst]$. We denote high-dimensional equivalence (as in Appendix Definition \ref{def:deterministicequivalence}) by $\simeq$.

\section{ICL Test Error Depends on Task Misalignment}

\subsection{Task alignment determines generalization in the solvable model}

We state the main result of our analysis of the simplified linear attention model, which is an analytical formula for the ICL error (\ref{eq:mse_avg}) in high dimensions. Here we give only an informal statement of this result and focus on its implications; the formal statement is given by (\ref{eq:full_ICL_formula_appendix_result}) in Appendix \ref{sec:asymp_result_proofs}.

\begin{tcolorbox}[colframe=NavyBlue, opacityback=0.95, title=\textbf{Result:} High-dimensional formula for the ICL error $\mathcal{E}(\Gamma^*)$] 
The ICL test error (\ref{eq:mse_avg}) of the simplified linear attention model set up above is given by
\begin{align}\label{eq:iclerrorformula}
    \mathcal{E}_\mathrm{ICL}(\Gamma^*) \simeq e_{\text{scalar}}(\bm{\lambda}_\mathrm{train},\ctst) + \ealign(\Ctr,\Ctst) \equiv \eicl(\Ctr,\Ctst)
\end{align} 
in the high-dimensional limit, where 
\begin{align}\label{eq:ealign_definition}
    \ealign(\Ctr,\Ctst) = \left\langle \Ctst,\,\, \mathcal{K} \right\rangle\,
\end{align}
measures the alignment between $\Ctst$ and 
\begin{align}\label{def:G}
    \mathcal{K}  \equiv qF_\kappa(\sigma) + (q\tilde{\lambda}-\sigma^2)F'_\kappa(\sigma). 
\end{align}
Appearing in this formula are an effective ridge variable $\tilde{\lambda}$ and effective noise variable $\sigma$ given by the solution of the joint equations 
\begin{align}
    &\tilde{\lambda}\mathcal{M}_\kappa\left(\sigma\right) = 1-\tau \quad  \text{for $\tau < 1$}\,, \qquad \tilde{\lambda}= 0 \quad  \text{for $\tau > 1$} \label{def:lambdatildedefinition}\\
    &\sigma = (\rho + \ctr)/{\alpha} + \tilde{\lambda} \label{def:sigmadefinition}\,. 
\end{align} 
where $\mathcal{M}_{\kappa}(\cdot)$ is determined self-consistently as in (\ref{def:M}). Finally $q$ is a pretraining-error term given by (\ref{eq:defintion_of_q}) and $e_\text{scalar}(\bm{\lambda}_\mathrm{train},\ctst)$ is given by (\ref{eq:definition_of_escalar}); note that both $q$ and $e_\mathrm{scalar}$ only depend on $\Ctst$ through its trace $\ctst$, and $e_\mathrm{scalar}$ only depends on $\Ctr$ through its spectrum $\bm{\lambda}_\mathrm{train}$.
\end{tcolorbox}

\begin{figure}[htbp]
    \centering
    \includegraphics[width=\textwidth]{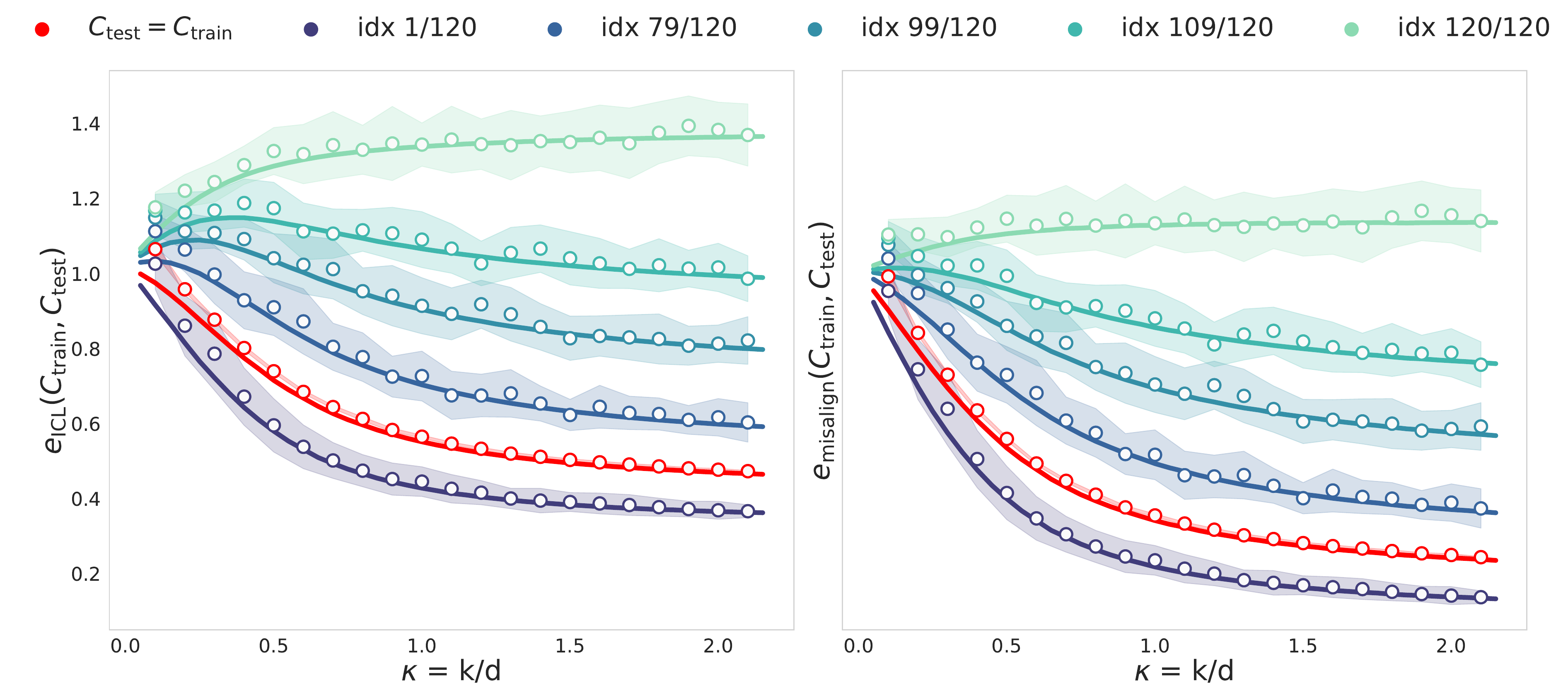}
    \caption{Theoretical $\eicl$ (left panel) and $\ealign$ (right panel) curves plotted against numerical simulations of $\mathcal{E}(\Gamma^*)$ computed directly from sampled data. We choose $\Ctr$ with uniform eigenvalue distribution: $\Ctr \propto \text{diag}([d, d-1,\cdots,1])$ such that $\tr(\Ctr) = d$. We compare $\Ctst = \Ctr$ (red curves) with testing on single task directions, \emph{i.e.}, the ``\texttt{idx i/d}'' labels correspond to rank-1 test covariances $\Ctst^i = d\bm{e}_i\bm{e}_i^\top$ spiked at index $i$. In this way, $\Ctst^1$ captures the strongest task direction of $\Ctr$ and $\Ctst^d$ captures the weakest. \textit{Parameters:} $d = 120$, $\alpha = 2$, $\tau = 4$, $\rho = 0.01$. Shading represents $\pm$std of numerical simulations. We calculate the simulation values of $\ealign$ in the right panel by subtracting $e_\mathrm{scalar}$ from the MSE simulation values $\mathcal{E}(\Gamma^*)$.}
    \label{fig:spike_align_demonstration}
\end{figure}

We first support this result through Figure \ref{fig:spike_align_demonstration}, which shows agreement between our theory curve formula $\eicl(\Ctr,\Ctst)$ and numerical simulations of MSE error $\mathcal{E}_\mathrm{ICL}(\Gamma^*)$ given by (\ref{eq:ridge_LT}) and the test distribution (\ref{eq:testingdistribution}). 
By comparing the left ($\eicl$) and right ($\ealign$) panels of Figure \ref{fig:spike_align_demonstration} we see that the behavior of $\eicl$ is highly dependent on the $\ealign$ term: for well-aligned training and testing distributions, low and decreasing $\ealign$ in $\kappa$ immediately leads to a monotonic decrease in $\eicl$ in $\kappa$; for worse-aligned training and testing distributions, $\eicl$ can be nonmonotonic or even monotonically increasing in $\kappa$. This is intriguing, as one would na\"ively expect additional task samples to improve in-context performance, but this is not true in general: whether additional task samples are helpful for ICL depends on the alignment of the pretraining and testing distributions. 

To gain intuition for this misalignment error (\ref{eq:ealign_definition}), a useful analogy is to consider the simplest matrix ``misalignment" measure $\langle \Ctst\Ctr^{-1}\rangle.$ Firstly, this measure captures misalignment that can occur from misalignment of the \textit{eigenvectors} of $\Ctst$ and $\Ctr$; see Corollary \ref{res:eigenspacealignment} and its proof for further discussion. Going beyond eigenvectors, because the eigenvalues of $\Ctr^{-1}$ are ordered opposite to the eigenvalues of $\Ctr$, this measure effectively captures the relative strength of signal directions between $\Ctst$ and $\Ctr$: if $\Ctr$ and $\Ctst$ share the same eigenvalues, alignment will be maximized when these eigenvalues appear in the same order, and minimized when these eigenvalues appear in opposing orders. This is precisely why $\langle \Ctst\Ctr^{-1}\rangle$ can be interpreted as a misalignment measure, as even this very simple \textit{Ansatz} shows how misalignment arises from mismatches in the relative weighting of signal directions between train and test tasks. This relative strength argument holds for our alignment measure, where instead of $\langle \Ctst \Ctr^{-1}\rangle$ we use $\langle \Ctst \mathcal{K}\rangle$. Here $\mathcal{K}$ obviously depends on $\Ctr$---they share the same eigenvectors---and importantly, has the same property as $\Ctr^{-1}$ that its eigenvalues are oppositely ordered to the eigenvalues of $\Ctr$ (we prove this claim in Appendix \ref{sec:eigenvalueordering} for $\tau > 1$ and study it numerically for $\tau < 1$; we conjecture that it holds for all $\tau$). 

However, this eigenvalue-ordering argument is incomplete for the ICL setting. It assumes that $\Ctr$ can be fully learned, but with finite context length, finite task diversity, and label noise, this cannot be the case. Thus, the model cannot access the full covariance structure, only a partial version, whose resolution depends on both sampling and noise. This is precisely what the resolvent terms $F_\kappa(\cdot)$ and $F'_\kappa(\cdot)$ in $\mathcal{K}$ can capture, as explained in the setup. This is analogous to the alignment measures that emerge in ordinary ridge regression under covariate shift, which capture how much of the training feature covariance can be resolved from finitely many samples \cite{canatar2021out,patil2024optimal,atanasov2025risk,tripuraneni2021covariate}. Furthermore, the effective noise $\sigma$ must play a key role in alignment, as the model does not know \textit{a priori} that this is an ``in-context" learning problem, and must learn to decouple the tokens from the task for each context in order to extract the task information implicit in that context. The ability of the model to do this depends on both the label noise $(\rho)$, the context length $(\alpha)$, and a sufficient number of contexts $(\tilde{\lambda})$. In fact, the form of $\sigma = (\rho+\tr[\Ctr])/\alpha + \tilde{\lambda}$ is also familiar from ordinary ridge regression as an effective noise-to-signal term: the optimal ridge regularization parameter balances the variance due to label noise ($\rho$) with the estimation error from having finite data \cite{hastie2022surprises,patil2024optimal,Canatar_2021,atanasov2024scaling}. At infinite sample size, the optimal ridge is simply $\rho$, the label noise. However, at finite sample size, the effective regularization is increased due to the finite-sample estimation error, and becomes $(\rho + \sigma^2_w)/\alpha$, where $\sigma^2_w$ characterizes variability or complexity of the regression task. In the same way, our model has to resolve the statistics of the tokens $\x$ over samples ($\ell$, measured by $\alpha$), and so we expect terms familiar from linear regression to appear in our formula. 

\begin{figure}[htbp]
    \centering
        \includegraphics[width=\textwidth]{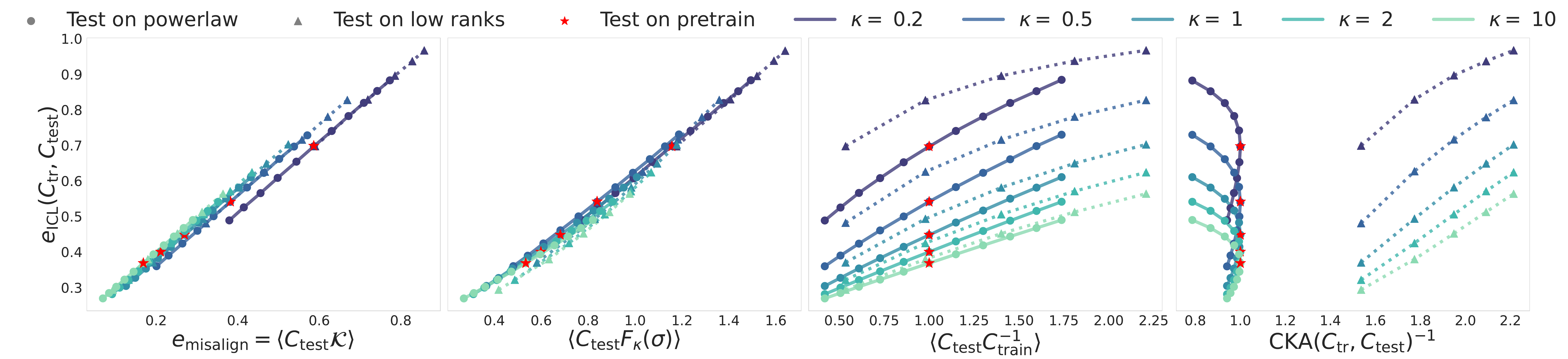}
    \caption{$\eicl(\Ctr,\Ctst)$ against alignment measures: $\ealign(\Ctr,\Ctst)$, $\tr[\Ctst F]$, $\tr[\Ctst\Ctr^{-1}]$, and $1/\mathrm{CKA}(\Ctr,\Ctst)$ from left to right. $\Ctr$ is fixed to be a diagonal matrix with powerlaw spectrum $\Ctr \propto \mathrm{diag}([1^{-p},...,d^{-p}]$ with power $p=0.9$ and $\tr[\Ctr]=1$. $\Ctst$ is varied over a range of different covariance matrices that are simultaneously diagonalizable with $\Ctr$, specifically power spectrum with different powers (circles connected by solid line), and low-rank covariance matrices $C_r = \mathrm{diag}[(d/r)\mathbf{1}_r\,, \mathbf{0}_{d-r}]$ (triangular markers connected by dashed line). Changing the power of the powerlaw tests or the rank of the low-rank tests will make them either more or less aligned with $\Ctr$. \textit{Parameters:} $d=120$, $\alpha = 2$, $\tau = 4$, $\rho = 0.01$.}
    \label{fig:FIGURE2_linearalignments}
\end{figure}

In summary, our alignment measure is motivated both by arguments regarding spectral ordering and relative strength, as well as having all  components necessary to capture finite size effects. In Figure \ref{fig:FIGURE2_linearalignments} we compare our alignment measure to three others: the population matrix measure $\langle \Ctr\Ctst^{-1} \rangle$, a simpler version of our alignment measure involving just the resolvent $\langle \Ctr F_\kappa(\sigma) \rangle $, and finally the Centered Kernel Alignment (CKA) measure \cite{kornblith2019similarityneuralnetworkrepresentations}. This comparison illustrates how different measures emphasize different aspects: $\langle \Ctr F_\kappa(\sigma) \rangle$ accounts for finite-sample resolution effects but not as specifically as $\langle \Ctr \mathcal{K} \rangle$ does, while $\langle \Ctr\Ctst^{-1} \rangle$ and CKA both miss finite-sample effects altogether, CKA instead being designed to detect nonlinear representational similarity. We perform this comparison by first noting that, trivially, $\ealign$ is monotonically related to $\eicl$: larger $\ealign$ implies larger $\eicl$. Figure \ref{fig:FIGURE2_linearalignments}  shows $\eicl$ for fixed $\Ctr$ and varying $\Ctst$ plotted against the above matrix alignment measures between $\Ctr$ and $\Ctst$. We see that $\langle \Ctr \mathcal{K} \rangle$ and $\langle \Ctr F_\kappa(\sigma) \rangle$ are the strongest drivers of $\eicl$ (\emph{i.e.} most monotonically related, with $\langle \Ctr \mathcal{K} \rangle$ obviously being perfectly correlated), while the performance of $\langle \Ctst \Ctr^{-1}\rangle$ and CKA as predictors of $\eicl$ is lacking.

\subsection{Task alignment predicts generalization in nonlinear Transformers}
\begin{figure}[htbp]
    \centering
        \includegraphics[width=\textwidth]{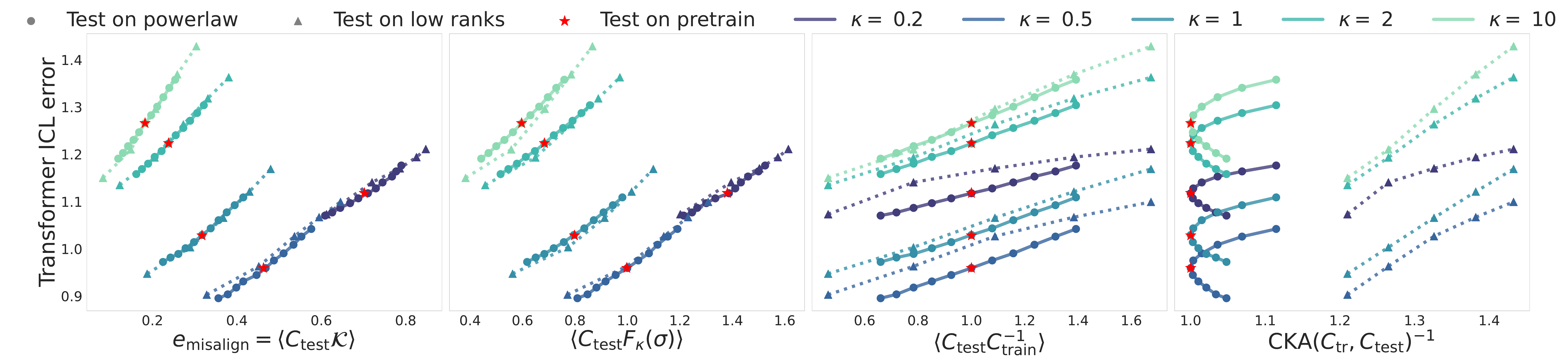}
    \caption{ICL test loss of a nonlinear transformer against different alignment measures. The setup of the covariances is identical to Figure \ref{fig:FIGURE2_linearalignments}, the only difference is that here ICL error is computed as the MSE on the test task as performed by a trained two-layer architecture with softmax attention and MLP connections. Our measure $\ealign$ achieves the best correlation with ICL error: the Spearman coefficients (measuring monotonicity, over all test covariances and averaged over the different $\kappa$ values) are \textbf{0.99 (ours)}, 0.98,  0.96, and 0.39 from left to right. \textit{Parameters:} $d=20$, $\alpha = 2$, $\tau = 4$, $\rho = 0.01$.}
    \label{fig:FIGURE3_nonlinearalignments}
\end{figure}

We further support the predictive power of our alignment measure defined by $\ealign$ for ICL error in a two-layer transformer architecture with softmax attention by Figure \ref{fig:FIGURE3_nonlinearalignments}. See Appendix \ref{sec:experimentaldetails} for details on the architecture setup. We see that even for an architecture far from the linear attention considered by the theory, our theoretically-derived alignment measure is still the best predictor of how changing the test distribution affects ICL error, compared to other alignment measures. 

\section{Mismatched Task Distributions Are Often Optimal}

We have identified the relevant measure of alignment between the pretraining and testing task distributions for linear transformers learning to perform ICL. It is then natural to ask whether for a fixed test distribution $\Ctst$, is it always optimal to pretrain on the test distribution, \emph{i.e.}, is $\eicl(\Ctst,\Ctst) \leq \eicl(\Ctr,\Ctst)$? In this section, we show that the answer to this question is in general no, it is \emph{not} always optimal to pretrain on the test distribution.  

\subsection{Covariance Alignment and Eigenspace Structure} 

We first show that it suffices to consider the case in which $\Ctr$ and $\Ctst$ are co-diagonalizable, as the ICL error is extremal in this case. Note that we can focus on $\ealign$, as other terms in the ICL error (\ref{eq:iclerrorformula}) contained in $e_\mathrm{scalar}$ (\ref{eq:definition_of_escalar}) are independent of the eigenvectors of these two matrices. 
\begin{corbox}[label=res:eigenspacealignment]{Eigenspace alignment extremizes misalignment error}{}
    The misalignment error $\ealign(\Ctr,\Ctst) = \langle \Ctst, \mathcal{K} \rangle$ is extremized when $\Ctst$ and $\Ctr$ are co-diagonalizable. Concretely, letting $\lambda_{1}(\Ctst) \geq \cdots \geq \lambda_{d}(\Ctst)$ and $\lambda_{1}(\mathcal{K})\geq \cdots \geq \lambda_{d}(\mathcal{K})$ be the ordered eigenvalues of these two real symmetric matrices, we have
    \begin{align} \label{eq:trace_inequality}
        \frac{1}{d} \sum_{j=1}^{d} \lambda_{j}(\Ctst) \lambda_{d-j+1}(\mathcal{K}) \leq \ealign \leq \frac{1}{d} \sum_{j=1}^{d} \lambda_{j}(\Ctst) \lambda_{j}(\mathcal{K}), 
    \end{align}
    with equality in either the upper or lower bound if and only if $\Ctr$ and $\Ctst$ are co-diagonalizable. 
\end{corbox}
To show that this result holds, we observe that, as $\Ctst$ and $\mathcal{K}$ are real symmetric matrices, the desired bound (\ref{eq:trace_inequality}) is simply a restatement of Ruhe's trace inequality normalized by $1/d$, where equality holds if and only if $\Ctst$ and $\mathcal{K}$ are co-diagonalizable. We give a self-contained proof of this inequality in Appendix \ref{app:ruhe} (see also \citet{marshall2010majorization} or \citet{li2020vonneumann}'s expository blog post).  By its definition (\ref{def:G}), $\mathcal{K}$ and $\Ctr$ have the same eigenvectors, so the claim follows. 

Therefore, the misalignment error can be minimized \textit{or} maximized if we assume that $\Ctr$ and $\Ctst$ are simultaneously diagonalizable. We therefore restrict our attention to the co-diagonalizable setting, and write the ordered eigenvalues of $\Ctr$ and $\Ctst$ as $\lambda_1(\Ctr) \geq \cdots \geq \lambda_d(\Ctr)$ and $\lambda_1(\Ctst) \geq \cdots \geq \lambda_d(\Ctst)$, respectively. 

\subsection{Optimal Test Covariance for Fixed Pretraining Distribution}

Before presenting results on optimal pretraining structure for fixed task structure, which is a more practical question to answer, we begin first with a simpler question: For fixed training distribution $\Ctr$, is it always optimal to test on the pretraining distribution, \emph{i.e.}, is $\eicl(\Ctr,\Ctr) \leq \eicl(\Ctr,\Ctst)$? The following result answers this question in the negative:

\begin{corbox}[label=res:simplex]{Trace-constrained optimal test covariance}{}
For fixed $\Ctr$ and fixed $\ctr=\ctst$, and $\tau > 1$, we have that $\eicl(\Ctr,\Ctst)$ is minimized by the single-index spike covariance with eigenvalues $$\begin{bmatrix}
        \lambda_1(\Ctst)& \cdots & \lambda_i(\Ctst) & \cdots &\lambda_d(\Ctst)
    \end{bmatrix} = \begin{bmatrix}
        d\ctr & \cdots & 0  & \cdots & 0
    \end{bmatrix}$$ \emph{i.e.}, all test signal is aligned with the largest eigenvector of $\Ctr$. 
\end{corbox}
To see that this holds, note that $\eicl(\Ctr,\Ctst)$ is a linear function of the eigenvalues of $\Ctst$ as $\ealign(\Ctr,\Ctst)$ is linear in $\Ctst$ and $e_\mathrm{scalar}(\bm{\lambda}_\mathrm{train},\ctst)$ does not depend on $\Ctst$ when $\ctst = \ctr$. Furthermore, the constraint $\ctst = \ctr$ is a simplex in the space of $\Ctst$ eigenvalues. The minimum will thus be attained along the simplex, at the vertex corresponding to the lowest eigenvalue of $\mathcal{K}$. As argued in Appendix \ref{sec:eigenvalueordering}, this corresponds to the largest eigenvalue of $\Ctr$ when $\tau > 1$. As further discussed in Appendix \ref{sec:eigenvalueordering}, we conjecture this to be true for all $\tau$.

This result illustrates a bias towards exploiting low-dimensional structure seen in pretraining to optimize performance at test time. In these cases, where we are optimizing test structure $\Ctst$ for a fixed pretraining structure $\Ctr$, we observe that the best generalization is achieved by concentrating all signal into a single shared direction, collapsing the task manifold into a single dimension. In other words, it is easier to learn to generalize over a degenerate low-rank test structure highly aligned to the pretraining structure than to generalize over the entire pretraining structure learned from limited samples. This is familiar from spectral bias results of ordinary ridge regression \cite{Canatar_2021,canatar2021out}. Note that, had our training distribution been isotropic ($\Ctr = I_d$) then all trace-fixed test covariances will perform equally; utilizing anisotropy is crucial.

\subsection{Non-Optimality of Pretraining on Test Structure} 

We now investigate the original question of the optimal pretraining task covariance $\Ctr$, given a particular test task covariance $\Ctst$. We will provide an example showing it is in fact \textit{not optimal} in general to have $\Ctr = \Ctst$. We particularly emphasize that whether or not a particular  task structure is better for pretraining depends strongly on the task diversity $\kappa$. 

Consider the case of power-law task distributions at both training and test time, \emph{i.e.}, \begin{align}
    \Ctr \propto \text{diag}\begin{bmatrix}
        1^{-p_\mathrm{train}} & \cdots & d^{-p_\mathrm{train}}
    \end{bmatrix}\,, \qquad \Ctst \propto \text{diag}\begin{bmatrix}
        1^{-p_\mathrm{test}} & \cdots & d^{-p_\mathrm{test}}
    \end{bmatrix},
\end{align} with the normalization constant chosen such that $\tr[\Ctr] = \tr[\Ctst]$. We can study the effect of changing $\Ctr$ relative to $\Ctst$ on $\eicl$ by changing the exponent $p_\mathrm{train}$ relative to fixed $p_\mathrm{test}$. 

Figure \ref{fig:heatmap} shows that, for low $\kappa$, pretraining on $\Ctr$ with higher spectral power relative to the test power can improve ICL error. We see that focusing pretraining on a low-dimensional subspace, effectively creating a strong inductive bias, can improve in-context learning performance when training data is scarce. The model generalizes better by overfitting to fewer directions, rather than learning more directions weakly. However, increasing the pretraining power too much relative to the test power will worsen ICL performance, as the pretraining task set is now too low-dimensional to cover enough variation at test time. We highlight that the potential advantage coming from increasing training spectral power weakens as soon as $\kappa$ is large enough for the model to be able to resolve enough task directions during pretraining.

\begin{figure}[h!]
  \centering
  \begin{minipage}[c]{0.51\textwidth}
    \centering
    \includegraphics[width=\linewidth]{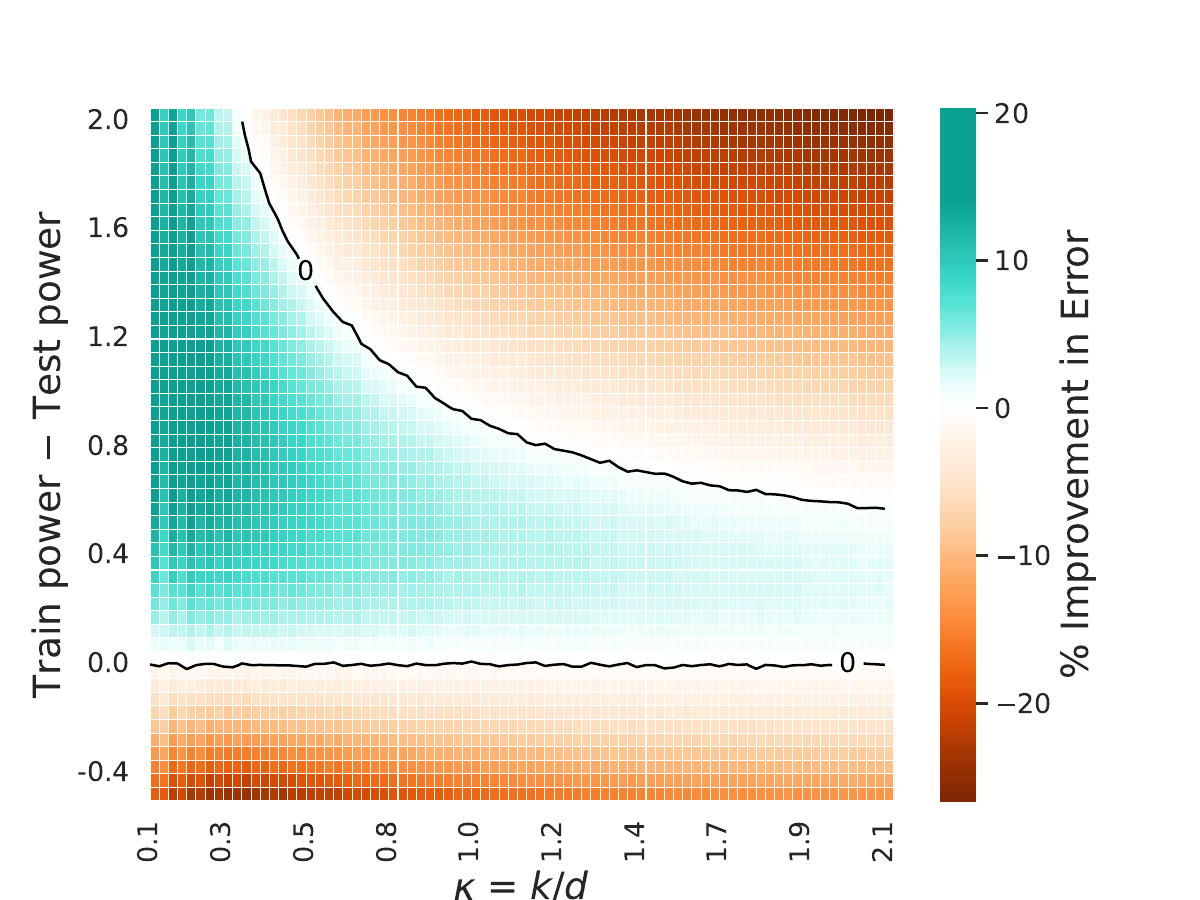}
  \end{minipage}%
  \hfill
  \begin{minipage}[c]{0.48\textwidth}
    \captionof{figure}{Heatmap of theoretical ICL error given by (\ref{eq:iclerrorformula}) for simultaneously diagonalizable powerlaw task covariances $\Ctr$ and $\Ctst$ with spectral power $p_\mathrm{train}$ (variable) and $p_\mathrm{test}$ (fixed). The $x$-axis shows task diversity $\kappa$ and the $y$-axis shows the difference $p_\mathrm{train}-p_\mathrm{test}$ between task spectral powers. The colorbar shows the \% improvement in error by training on $\Ctr$ instead of $\Ctst$.
    This shows that increasing spectral power in the pretraining tasks can markedly improve ICL error on the same test distribution. \textit{Parameters:} $d=100$, $p_\mathrm{test} = 0.9$, $\alpha = 1$, $\tau = 4$, $\rho = 0.01$.}
    \label{fig:heatmap}
  \end{minipage}
\end{figure}

\section{Conclusions}

We have developed a framework of in-context task alignment, showing that our derived measure of \textit{error from task misalignment} is a robust predictor of ICL performance, even in nonlinear architectures. This derivation builds on a model of in-context learning of linear regression, which we extend to \textit{fully general} task covariates. Previous works have shown that the performance of ICL models can suffer under task shift, particularly in cases of low task diversity \cite{zhang2023trained,garg2022what,goddard2025incontextlearninggeneralizetask}. We highlight that, while task misalignment is provably a driver of ICL error, our fully-general covariance framework can be used to show that task misalignment can indeed be utilized to \textit{improve} ICL performance in cases of limited data. 

Looking forward, our analysis provides a rich model of ICL with many further avenues of investigation. A more detailed analysis of the interaction between pretraining-specific error $e_\text{scalar}$ and misalignment error $\ealign$ could be used to derive a general heuristic for optimal pretraining, with potential implications for practical settings. Beyond task alignment, additional highly relevant phenomena can be investigated: we highlight (1) a generalized learning transition in task diversity (empirically studied by \citet{raventos2023pretraining}; see Appendix \ref{sec:phase_transition_general}) and (2) test-time scaling (see \citet{gozeten2025testtimetrainingprovablyimproves} and Appendix \ref{sec:context_length_scaling}). We leave exploration of these phenomena to future work. 

\section*{Acknowledgments}
\paragraph{Author Contributions}
All authors conceived the project. M.I.L. performed all computations with assistance from J.A.Z.-V., Y.M.L., and C.P.$\,$. M.I.L. performed all numerical analysis and experiments, and wrote an initial draft of the paper. J.A.Z.-V., Y.M.L., and C.P. supervised the project and contributed to review and editing of the manuscript. 

\paragraph{Funding Acknowledgments} M.I.L. is supported by a Graduate Fellowship from the Kempner Institute. J.A.Z.-V. is supported by a Junior Fellowship from the Harvard Society of Fellows. Y.M.L is supported by a Harvard College Professorship, the Harvard FAS Dean's Fund for Promising Scholarship, and DARPA under grant DIAL-FP-038. C.P. is supported by an NSF CAREER Award (IIS-2239780), DARPA grants DIAL-FP-038 and AIQ-HR00112520041, the Simons Collaboration on the Physics of Learning and Neural Computation, and the William F. Milton Fund from Harvard University. This work has been made possible in part by a gift from the Chan Zuckerberg Initiative Foundation to establish the Kempner Institute for the Study of Natural and Artificial Intelligence.

\bibliography{refs}

\clearpage
\appendix

\section*{Supplementary Information}

In the following sections, we provide proofs of the results presented in the main document. We begin in Section \ref{sec:easyproofs} with some preliminary simplifications of the ICL and IDG population risk for general $\Gamma$. Then in Section \ref{sec:rmt} we will set up and complete the random matrix theory calculation necessary to justify our main result, equation \ref{eq:iclerrorformula}, which will be done in Section \ref{sec:asymp_result_proofs}. Section \ref{app:ruhe} will cover the proof of the phenomena-based Corollary \ref{res:eigenspacealignment}. Then, we provide additional analysis of transitions in the behavior of the ICL error as a function of the task diversity $\kappa$ in Section \ref{sec:phase_transition_general}, and of shifts in the test-time context length in Section \ref{sec:context_length_scaling}. We conclude with details of our numerical experiments in Section \ref{sec:experimentaldetails}. 

For convenience, we include the following table detailing quantities appearing throughout the main paper as well as this supplementary section. 

\begin{table}[htbp]
    \centering
    \begin{tabular}{c || c | c }
         $\qquad$ Symbol $\qquad$ & $\qquad$ Meaning $\qquad$ & $\qquad$ Defined by $\qquad$ \\
         \hline
         \hline
         $d$ & Task and token dimension & Sec. \ref{sec:setup} \\
         $n$ & Number of pretraining contexts & Sec. \ref{sec:setup} \\
         $\ell$, $\ell_\mathrm{test}$ & Pretraining and testing context length & Sec. \ref{sec:setup} \\
         $k$ & Pretraining task diversity & Sec. \ref{sec:setup} \\
         $Z$ or $Z_\mu$ & Embedding choice of a context (context $\mu$) sequence & 
         (\ref{eq:Zstructure}) \\
         $\Gamma$, $H_Z$ & Reduced model parameters and data & (\ref{eq:Gamma}), (\ref{eq:H_Z}) \\
         $\lambda$ & Ridge parameter & (\ref{eq:ridge_LT}) \\
         $\Gamma^*$ & Optimised $\Gamma$ given data sampling $H_{Z_1},...,H_{Z_n}$ & (\ref{eq:ridge_LT}) \\
         $\mathcal{E}_\text{ICL}(\Gamma)$ & ICL test errors for parameter $\Gamma$ & (\ref{eq:mse_avg}) \\
         $\Ctr$ & Training task covariance (this is finitely sampled by $k$) & (\ref{eq:pretrainingdistribution}) \\
         $\Ctst$ & Testing task covariance (no sampling restriction) & (\ref{eq:testingdistribution}) \\
         $\rho$ & Label noise in both pretraining and testing & (\ref{eq:pretrainingdistribution})\\
        $\tau$ & Asymptotic samples parameter between $n$ and $d^2$ & (\ref{eq:scalings})\\
        $\alpha$ & Asymptotic context length parameter between $\ell$ and $d$ & (\ref{eq:scalings})\\
        $\kappa$ & Asymptotic task diversity parameter between $k$ and $d$ & (\ref{eq:scalings}) \\
        $R_k$ & Pretraining task sample covariance & (\ref{eq:finitecovariance}) \\
        $\tilde{\lambda}, \sigma$ & Rescaled ridge and effective noise & (\ref{def:lambdatildedefinition}), (\ref{def:sigmadefinition}) \\
        $F_\kappa(z)$ & Deterministic equivalent of resolvent of $\Rtr$ at ridge $z$ & (\ref{def:F}) \\
        $\Mk(z)$ & Stieltjes transform of $\Rtr$ at ridge $z$ & (\ref{def:M}) \\
        $\eicl(\Ctr,\Ctst)$ & Deterministic equivalent (i.e. data-averaged) of finite-sample $\mathcal{E}_\text{ICL}(\Gamma^*)$ & (\ref{eq:iclerrorformula}) \\
        $\ealign(\Ctr,\Ctst)$ & Pretrain-Test misalignment term in $\eicl(\Ctr,\Ctst)$ & (\ref{eq:ealign_definition}) \\
        $\mathcal{K}$ & Pretraining-dependent comparison matrix in $\ealign$ & (\ref{def:G})
    \end{tabular}
    \caption{Quantities appearing throughout the main document}
    \label{tab:main_paper_lexicon}
\end{table}

\begin{table}[htbp]
    \centering
    \begin{tabular}{c || c | c }
         $\qquad$ Symbol $\qquad$ & $\qquad$ Meaning $\qquad$ & $\qquad$ Defined by $\qquad$ \\
         \hline
         \hline
         $\Sigma$ & Token covariance matrix (taken to be $I_d$) & Result \ref{res:populationrisk_SIGMA} \\
         $\mathcal{E}_\text{IDG}(\Gamma)$ & IDG test error for parameter $\Gamma$ & (\ref{eq:idg_def}) \\
        $A_\mathrm{IDG}$, $A_\mathrm{ICL}$ & Linear response matrix in simplified $\mathcal{E}_\text{IDG}$, $\mathcal{E}_\text{ICL}$ & (\ref{eq:A_B_IDG}), (\ref{eq:A_B_ICL}) \\
         $B_\mathrm{IDG}$, $B_\mathrm{ICL}$ & Quadratic response matrix in simplified $\mathcal{E}_\text{IDG}$, $\mathcal{E}_\text{ICL}$ & (\ref{eq:A_B_IDG}), (\ref{eq:A_B_ICL}) \\
        $\btr$, $\Rtr$ & Pretraining task sample mean and covariance & (\ref{eq:tasksamplestats}) \\
         $G$ & Resolvent of feature $H_Z$ covariance used in $\Gamma^*$ & (\ref{eq:resolvent}) \\
         $\pi$, $G(\pi)$ & Parameterisation of resolvent for computational convenience & (\ref{eq:resolvent_pM}) \\
         $G_\mathrm{ext}(\pi)$ & Extended parameterised resolvent that contains $\Gamma^*$ explicitly & (\ref{eq:Ge}) \\
         $G_\mathrm{ext}^{[\mu]}$ & Leave-one-out extended resolvent & (\ref{eq:leaveoneoutresolvent}) \\
         $\widehat{\mathcal{G}}_e(\ps)$ & Token- and noise- averaged version of $G_\mathrm{ext}(\pi)$ & (\ref{eq:G_equiv_hat}) \\
         $\Gamma^*_\mathrm{eq}$ & Deterministic equivalent (data-averaged) version of optimal parameters $\Gamma^*$ & (\ref{eq:gamma_star_eq}) \\
         $\eidg(\bm{\lambda}_\mathrm{train})$ & Deterministic equivalent (i.e. data-averaged) of finite-sample $\mathcal{E}_\text{IDG}(\Gamma^*)$ & (\ref{eq:full_IDG_formula_appendix_result}) \\
         $\eicl(\Ctr,\Ctst)$ & Full explicit formula for (\ref{eq:iclerrorformula}) & (\ref{eq:full_ICL_formula_appendix_result}) \\
         $e_\mathrm{scalar}(\bm{\lambda}_\text{train},\ctst)$ & Term in $\eicl(\Ctr,\Ctst)$ that contains no directional test information & (\ref{eq:definition_of_escalar}) \\
    \end{tabular}
    \caption{Quantities appearing only in the supplementary information, particularly Section \ref{sec:rmt}}
    \label{tab:supp_lexicon}
\end{table}

\paragraph{Notation} Here $\vecop(\cdot)$ will mean the vectorization operation under the \emph{row-major} convention: for a $d_1 \times d_2$ matrix $A$, $\vecop(A)$ is a vector in $\R^{d_1 d_2}$, formed by stacking the rows of $A$ together. We will use this together with the matrix Kronecker product $\otimes$, where by standard results we have \begin{align}
    \vecop(uv^\top) &= u \otimes v \\
    (u\otimes v)(w\otimes s)^\top &= (uw^\top) \otimes (vs^\top)\,. 
\end{align}
We will also use the notation $[M]_{\backslash0}$ to mean the principal minor of a matrix $M$ (\emph{i.e.}, first row and column removed). We use the following normalized trace $$\tr[A] \equiv \frac{1}{d}\tr(A)\,.$$ We use $\equiv$ when defining new quantities. We use $\simeq$ for \textit{deterministic equivalence}, which describes the large-dimensional limit of a random quantity (either a scalar or matrix) that concentrates to a deterministic value as dimension approaches infinity.

\numberwithin{equation}{section}

\section{General Error Formulas}\label{sec:easyproofs}
Here we will set up the definition of various test errors more generally than in the main document. As this is an in-context learning setup, there are different types of ``generalization'' that can be studied: generalization over the tokens and generalization over the tasks. We will define two test errors to capture these separately. 

In general, we wish to understand the performance and behavior of this estimator $\Gamma^*$, which is pretrained on data from the pretraining distribution $\mathcal{P}_\mathrm{train}$, when tested on new data. Namely, we will analyze the average performance of an estimator $\hat{y} = \tr(\Gamma, H_Z)$ as a function of given fixed $\Gamma$ under the MSE loss, which is the natural loss for a regression test. The most general this expression can be is \begin{equation}\label{eq:mse_avg_appendix} \mathcal{L}_\text{MSE}(\Gamma) = \mathbb{E}_{\Ptest}\left[ \left(y_{\cl+1} - \tr(\Gamma, H_Z) \right)^2 \right]\,\end{equation} for a general test distribution $\mathcal{P}_\mathrm{test}$ detailing how to sample tokens $x$, tasks $w$, and noise $\epsilon$ at test time.

We will consider two different testing regimes: the \textit{in-context learning (ICL)} test, where the model sees new tasks $w$, and the \textit{in-distribution generalization (IDG)} (or \textit{in-weights}) test, where the model sees the \textit{exact same tasks} used in pretraining $\{\T_1\,,\cdots\,,\T_k\}$, where each $\T_j \sim_{\text{i.i.d.}} \mathcal{N}(0,\Ctr)$. Explicitly, we define these test distributions and corresponding error functions as follows: 
\begin{align}
    \mathcal{E}_\text{IDG}(\Gamma) &= \mathbb{E}_{\mathcal{P}_\text{IDG}}\left[ \left(y_{\cl+1} - \tr(\Gamma, H_Z) \right)^2 \right] \label{eq:idg_def}\\
    \mathcal{P}_\text{IDG} &:=  \x_i^\mu \sim_\text{i.i.d.} \mathcal{N}(0, I_d/d)\,,\quad  \w^\mu \sim_\text{unif} \{\T_1\,,\cdots\,,\T_k\}\,, \quad 
\epsilon_i^\mu \sim_\text{i.i.d.} \mathcal{N}(0,\rho) \,, \nonumber
\end{align} 
where $ i \in [\ell]$ and $\mu\in[n]$, and 
\begin{align}
    \mathcal{E}_\text{ICL}(\Gamma) &= \mathbb{E}_{\mathcal{P}_\text{ICL}}\left[ \left(y_{\cl+1} - \tr(\Gamma, H_Z) \right)^2 \right] \label{eq:icl_def}\\
    \mathcal{P}_\text{ICL} &:=  \x_i^\mu \sim_\text{i.i.d.} \mathcal{N}(0, I_d/d)\,,\quad  \w^\mu \sim_\text{i.i.d.} \mathcal{N}(0,\Ctst)\,, \nonumber \quad 
\epsilon_i^\mu \sim_\text{i.i.d.} \mathcal{N}(0,\rho) \,, \nonumber
\end{align} 
where $i \in [\ell_\mathrm{test}]$ and $\mu\in[n]$. 

Notice here that we've introduced two different context lengths: $\ell$ for the IDG distribution, which is the same context length as the pretraining setup $\mathcal{P}_\mathrm{train}$ given by (\ref{eq:pretrainingdistribution}), and $\ell_\text{test}$ which is the context length at testing time. This allows us to later explore the effect of pretraining and testing on sequences of different context lengths. 

We assume that both task covariance matrices $\Ctr$ and $\Ctst$ are well-behaved in high dimensions, specifically that $\tr[\Ctr], \tr[\Ctst] = \Theta(1).$ This ensures the task signals are not over or under amplified as $d \to \infty$. 

\begin{lemma}\label{res:populationrisk_SIGMA}
    \textbf{Simplified test losses.} Consider IDG and ICL test distribution and corresponding error functions as given by (\ref{eq:idg_def}) and (\ref{eq:icl_def}). For fixed parameters $\Gamma \in \mathbb{R}^{d \times (d+1)}$ and data sampled according to test distribution $\mathcal{P}_\mathrm{IDG}$ or $\mathcal{P}_\mathrm{ICL}$, the corresponding errors $\eidg$ and $\eicl$ can be expressed as  \begin{align}
        \mathcal{E}_\mathrm{IDG}(\Gamma) &\simeq\rho + \tr[\Sigma\Rtr] - 2\tr[\Gamma A_{\mathrm{IDG}}^\top] + \tr[\Sigma \Gamma B_{\mathrm{IDG}} \Gamma^\top] \\
        \mathcal{E}_\mathrm{ICL}(\Gamma) &\simeq\rho + \tr[\Sigma \Ctst] - 2\tr[\Gamma A_{\mathrm{ICL}}^\top] + \tr[\Sigma \Gamma B_{\mathrm{ICL}} \Gamma^\top]
    \end{align} where \begin{align}
        A_\mathrm{IDG} &= \begin{bmatrix}
         \Sigma \Rtr \Sigma  & \left(\tr[\Sigma \Rtr] + \rho\right) \Sigma \btr
    \end{bmatrix}\,, 
    \\
    A_\mathrm{ICL} &= \begin{bmatrix}
         \Sigma \Ctst \Sigma & \quad 0
    \end{bmatrix} \,,
    \\
    B_\mathrm{IDG} &= \begin{bmatrix}
         \Sigma \Rtr \Sigma  + \frac{d}{\ell}\left(\tr[ \Sigma \Rtr] + \rho \right)\Sigma & \left( \tr[ \Sigma\Rtr] + \rho \right) \Sigma \btr \\
            \left( \tr[\Sigma\Rtr] + \rho \right)(\Sigma \btr)^\top & \left( \tr[\Sigma \Rtr] + \rho \right)^2
    \end{bmatrix}  \label{eq:A_B_IDG}\\
    B_\mathrm{ICL} &= \begin{bmatrix}
         \Sigma \Ctst \Sigma  + \frac{d}{\ell_\mathrm{test}}\left(\tr[
         \Sigma \Ctst] + \rho\right)\Sigma &  \quad 0 \\
        0^\top & \left(\tr[\Sigma \Ctst] + \rho\right)^2 
    \end{bmatrix} \label{eq:A_B_ICL} \, 
\end{align} for $\btr$ and $\Rtr$ defined by the pretraining task sample $\{w_1\,,...\,,w_k\}$ as \begin{align}
    \btr = \frac{1}{k}\sum_{j\in[k]} \T_j \,, \qquad \Rtr = \frac{1}{k}\sum_{j \in [k]} \T_j\T_j^\top \label{eq:tasksamplestats} \,.
\end{align}The ``$\simeq$'' in these formulas comes from ignoring terms of order $1/d$ and weaker, i.e. a high-dimensional treatment. \end{lemma}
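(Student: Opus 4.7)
The plan is to prove this by direct moment calculation: expand the mean-squared error, exploit the rank-one factorization of $H_Z$ to reduce it to a quadratic form in $\Gamma$, then compute the coefficient matrices $A$ and $B$ as blockwise Gaussian moments.

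First I would expand $\mathcal{E}(\Gamma) = \mathbb{E}[y_{\ell+1}^2] - 2\mathbb{E}[y_{\ell+1}\tr(\Gamma H_Z^\top)] + \mathbb{E}[\tr(\Gamma H_Z^\top)^2]$. The constant term immediately yields $\rho + \tr[\Sigma C]$ with $C = \mathbb{E}[\bm{w}\bm{w}^\top] \in \{\Rtr,\Ctst\}$, from $y_{\ell+1} = \bm{x}_{\ell+1}^\top \bm{w} + \epsilon_{\ell+1}$ and $\mathbb{E}[\bm{x}_{\ell+1}\bm{x}_{\ell+1}^\top]=\Sigma/d$. The key simplification comes from the rank-one factorization $H_Z = \bm{x}_{\ell+1}[\,h^\top \,|\, s\,]$ with $h = \tfrac{d}{\ell}\sum_i y_i \bm{x}_i$ and $s = \tfrac{1}{\ell}\sum_i y_i^2$. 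Writing $\Gamma = [\Gamma_1 \,|\, \bm{\gamma}_2]$ this gives $\tr(\Gamma H_Z^\top) = \bm{x}_{\ell+1}^\top (\Gamma_1 h + s\bm{\gamma}_2)$. Because $\bm{x}_{\ell+1}$ is independent of the context (and of $\bm{w}$), pulling $\mathbb{E}_{\bm{x}_{\ell+1}}[\bm{x}_{\ell+1}\bm{x}_{\ell+1}^\top]=\Sigma/d$ through the square and the cross term collapses everything onto the claimed quadratic form in $\Gamma$, with $A = d\,\mathbb{E}[y_{\ell+1}H_Z]$ and $B$ the $(d+1)\times(d+1)$ block matrix built from $\mathbb{E}[hh^\top]$, $\mathbb{E}[sh]$, and $\mathbb{E}[s^2]$.

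Next I would compute the blocks by summing over the context indices. Each is a Gaussian moment in $\bm{x}_i,\bm{x}_j$ conditional on $\bm{w}$, followed by an average over $\bm{w}$. For $\mathbb{E}[hh^\top]$, the $i\neq j$ sum evaluates to $\Sigma C\Sigma$ by token independence, while the $i = j$ sum is handled by Isserlis' theorem and contributes the $\tfrac{d}{\ell}(\tr[\Sigma C]+\rho)\Sigma$ correction in the $B_{11}$ block. The blocks $\mathbb{E}[sh]$ and the second column of $\mathbb{E}[y_{\ell+1}H_Z]$ are cubic in $\bm{w}$: under the centered Gaussian ICL law these vanish identically, whereas under the discrete-uniform IDG law they survive through $\mathbb{E}_{\bm{w}}[(\bm{w}^\top\Sigma\bm{w})\bm{w}]=\tfrac{1}{k}\sum_j(\T_j^\top\Sigma\T_j)\T_j$. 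Concentration of $\T_j^\top\Sigma\T_j$ around $\tr(\Sigma\Ctr)$ for $\T_j\sim\mathcal{N}(0,\Ctr)$ then replaces this sum by $d\,\tr[\Sigma\Rtr]\,\btr$ up to $O(d^{-1/2})$ fluctuations, giving the clean scalar prefactor $(\tr[\Sigma\Rtr]+\rho)\Sigma\btr$ in $A_\mathrm{IDG}$ and $B_{12}^\mathrm{IDG}$. The scalar block $\mathbb{E}[s^2]$ equals $(\tr[\Sigma C]+\rho)^2$ up to an $O(1/\ell)$ diagonal-in-context correction; this drop is precisely what the $\simeq$ in the statement absorbs, together with the task concentration above.

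The main obstacle is purely one of bookkeeping rather than analysis: one must handle four separate second-through-fourth-order moment sums, each split into $i=j$ and $i\neq j$ pieces, and one must separately treat the Gaussian (ICL) and discrete-uniform (IDG) task laws because their odd-order moments differ. No step requires a delicate estimate---everything reduces to Isserlis' theorem and standard Gaussian concentration of quadratic forms---but packaging the resulting expressions into the compact block form of $A$ and $B$ as written in the statement, and verifying that all subleading-in-$d$ and subleading-in-$\ell$ contributions are correctly absorbed into $\simeq$, is the most tedious step.
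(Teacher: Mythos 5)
Your proposal is correct and follows essentially the same route as the paper: expand the MSE into a quadratic form in $\Gamma$, exploit the rank-one structure $H_Z = \x_{\ell+1}[\,h^\top \mid s\,]$, compute the conditional Gaussian moments (Isserlis for fourth order, pairwise independence for $i\neq j$) while holding $\w$ fixed, and then average over the task distribution, dropping $O(1/\ell)$ and $O(1/d)$ corrections. The paper packages the conditional-moment step as a standalone lemma and writes the quadratic form via $\vecop$ and Kronecker products rather than an explicit $[\,h^\top \mid s\,]$ split, but the computation is the same; your observation that the odd-in-$\w$ blocks vanish under the centered Gaussian ICL law and are handled by concentration of $\T_j^\top\Sigma\T_j$ under the discrete IDG law matches the paper's treatment exactly.
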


\begin{remark}
    Note that Result \ref{res:populationrisk_SIGMA} is a more general result than the setting considered in the main paper, as Result \ref{res:populationrisk_SIGMA} allows for general token covariance $\Sigma$. The rest of this work will take $\Sigma = I_d$ for computational tractability. 
\end{remark}

\begin{lemma}\label{lemma:conditionals}
    Fix the task vector $w$ that defines $y_i = \w^\top \x_i + \epsilon_i$ for context $Z$. Denote the conditional expectation over \textit{only} $\x_i,\epsilon_i$, holding $\w$ fixed, by $\mathbb{E}_{\x,\epsilon}$. Then we have the following \begin{align}
        \mathbb{E}_{\x,\epsilon}[y_{\ell+1}] &= 0 \label{eq:y_mean_partial} \\
        \mathbb{E}_{\x,\epsilon}[H_Z] &= 0 \label{eq:H_mean_partial}  \\
        \mathbb{E}_{\x,\epsilon}[y_{\ell+1}^2] &= \tr[\Sigma \w\w^\top] + \rho \label{eq:y_squared_partial} \\
        \mathbb{E}_{\x,\epsilon}[y_{\ell+1}H_Z] &= \frac{1}{d}\begin{bmatrix}
            \Sigma \w\w^\top \Sigma & \left(\tr[\Sigma \w\w^\top] + \rho\right)\Sigma \w
        \end{bmatrix} \label{eq:linear_partial_exp} \\
        \mathbb{E}_{\x,\epsilon}[\vecop(H_Z)\vecop(H_Z)^\top] &\simeq\frac{1}{d}\Sigma\otimes \begin{bmatrix}
            \Sigma \w\w^\top \Sigma + \frac{d}{\ell}\left(\tr[\Sigma \w\w^\top] + \rho \right)\Sigma & \left( \tr[\Sigma \w\w^\top] + \rho \right)\Sigma \w \\
            \left( \tr[\Sigma \w\w^\top] + \rho \right)(\Sigma \w)^\top & \left( \tr[\Sigma \w\w^\top] + \rho \right)^2
        \end{bmatrix} \label{eq:quadratic_partial_exp}
    \end{align} where recall \begin{align}
    H_Z \equiv  \x_{\cl+1} \begin{bmatrix} \frac{d}{\cl} \sum_{i\leq \cl} y_i \x_i^{\top} & \frac{1}{\cl}\sum_{i\leq \cl}y_i^2\end{bmatrix} \in \mathbb{R}^{d\times (d+1)}\,.
\end{align} \end{lemma}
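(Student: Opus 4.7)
The plan is to compute each expectation directly, leveraging (i) independence of $\x_{\ell+1}$ from the context $\{(\x_i,\epsilon_i)\}_{i \leq \ell}$, and (ii) Gaussian moment identities for $\x_i \sim \mathcal{N}(0,\Sigma/d)$ and $\epsilon_i\sim\mathcal{N}(0,\rho)$ with $\w$ held fixed. Equations (\ref{eq:y_mean_partial})--(\ref{eq:linear_partial_exp}) are then quick consequences. For (\ref{eq:y_mean_partial}), $y_{\ell+1}=\w^\top\x_{\ell+1}+\epsilon_{\ell+1}$ has zero conditional mean. For (\ref{eq:H_mean_partial}), I would factor $H_Z = \x_{\ell+1}\,v^\top$ with $v=[(d/\ell)\sum_i y_i\x_i;\,(1/\ell)\sum_i y_i^2]$, so that $\mathbb{E}[H_Z]=\mathbb{E}[\x_{\ell+1}]\mathbb{E}[v^\top]=0$ by independence. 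For (\ref{eq:y_squared_partial}), expanding the square and using $\mathbb{E}[(\w^\top\x_{\ell+1})^2]=\w^\top(\Sigma/d)\w=\tr[\Sigma\w\w^\top]$ together with $\mathbb{E}[\epsilon_{\ell+1}^2]=\rho$ gives the claim. For (\ref{eq:linear_partial_exp}), I would write $y_{\ell+1}H_Z=(y_{\ell+1}\x_{\ell+1})v^\top$ and factor the expectation: $\mathbb{E}[y_{\ell+1}\x_{\ell+1}]=\Sigma\w/d$, while $\mathbb{E}[v^\top]$ follows from $\mathbb{E}[y_i\x_i^\top]=\w^\top\Sigma/d$ and part (\ref{eq:y_squared_partial}).

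The main work is (\ref{eq:quadratic_partial_exp}). Applying the row-major identity $\vecop(uv^\top) = u\otimes v$ and then factoring out the independent query gives
\begin{align*}
\mathbb{E}_{\x,\epsilon}[\vecop(H_Z)\vecop(H_Z)^\top] = \mathbb{E}[\x_{\ell+1}\x_{\ell+1}^\top]\otimes\mathbb{E}[vv^\top] = (\Sigma/d)\otimes\mathbb{E}[vv^\top].
\end{align*}
I then compute $\mathbb{E}[vv^\top]$ block-by-block, splitting every double sum $\sum_{i,j}$ into off-diagonal ($i\ne j$) and diagonal ($i=j$) contributions. The $\ell(\ell-1)$ off-diagonal terms decouple into products of previously-computed first moments and contribute $\Sigma\w\w^\top\Sigma$, $(\tr[\Sigma\w\w^\top]+\rho)\Sigma\w$, and $(\tr[\Sigma\w\w^\top]+\rho)^2$ in the three blocks. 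The $\ell$ diagonal terms require fourth-order Gaussian moments via Isserlis' theorem; the key identity is
\begin{align*}
\mathbb{E}[(\w^\top\x_i)^2\,\x_i\x_i^\top] = \tr[\Sigma\w\w^\top]\,(\Sigma/d) + 2\,\Sigma\w\w^\top\Sigma/d^2,
\end{align*}
which combined with $\mathbb{E}[\epsilon_i^2\,\x_i\x_i^\top]=\rho\,\Sigma/d$ yields the crucial $(d/\ell)(\tr[\Sigma\w\w^\top]+\rho)\Sigma$ term in the upper-left block. Analogous Isserlis computations of $\mathbb{E}[y_i^3\x_i]=3(\tr[\Sigma\w\w^\top]+\rho)\Sigma\w/d$ and $\mathbb{E}[y_i^4]=3(\tr[\Sigma\w\w^\top]+\rho)^2$ handle the off-diagonal blocks and the scalar entry.

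The main obstacle, and the origin of the ``$\simeq$'', is tracking which corrections vanish in the high-dimensional limit. Summing the off-diagonal and diagonal contributions produces combinatorial prefactors of the form $(\ell\pm 1)/\ell$ and $(\ell+2)/\ell$, together with stray terms such as $(2/\ell)\Sigma\w\w^\top\Sigma$ coming from the subleading piece of the Isserlis identity. Under the assumed scaling $\tr[\Sigma],\tr[\Ctr]=\Theta(1)$ and $\ell=\alpha d\to\infty$, these prefactors tend to unity and the stray $O(1/\ell)$ contributions are dominated by the retained $\Sigma\w\w^\top\Sigma$ and $(d/\ell)\Sigma$ blocks (the latter being $\Theta(1/\alpha)$), so they are discarded at leading order. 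Bookkeeping these cancellations carefully yields precisely the block expression in (\ref{eq:quadratic_partial_exp}).
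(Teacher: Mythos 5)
Your proof is correct and follows essentially the same route as the paper's: both split the double sums into diagonal and off-diagonal contributions, apply Isserlis/Wick for the fourth-order Gaussian moments (with the identity $\mathbb{E}[\x\x^\top\w\w^\top\x\x^\top]=\frac{1}{d}\tr[\Sigma\w\w^\top]\Sigma+\frac{2}{d^2}\Sigma\w\w^\top\Sigma$ playing the central role), and then discard the $O(1/\ell)$ corrections in the proportional limit. The only cosmetic difference is that you factor out the independent query token $\x_{\ell+1}$ (and $y_{\ell+1}\x_{\ell+1}$) before averaging, which streamlines the block computations slightly, whereas the paper carries the full product inside a single expectation and factors within each block.
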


\begin{proof}
    Equation (\ref{eq:y_mean_partial}) follows immediately from the linearity of $y_{\ell+1}$ in $\epsilon$ and $\x_i$, which are both mean-0 random variables. Similarly for (\ref{eq:H_mean_partial}), as $H_Z$ is linear in $\x_{\ell+1}$. 

    For the conditional expectation of $y_{\ell+1}^2$ in (\ref{eq:y_squared_partial}), simply expand \begin{align}
        \mathbb{E}_{\x,\epsilon}\left[y_{\ell+1}^2\right] &= \mathbb{E}_{\x,\epsilon}\left[(\w^\top \x_{\ell+1} + \epsilon_{\ell+1})(\x_{\ell+1}^\top \w + \epsilon_{\ell+1})\right] \\
        &= \mathbb{E}_{\x,\epsilon}\left[\w^\top \x_{\ell+1}\x_{\ell+1}^\top \w + \epsilon_{\ell+1}^2\right] \\
        &= \frac{1}{d}\w^\top \Sigma \w + \rho \\
        &= \tr[\Sigma \w\w^\top] + \rho\,.
    \end{align}

    For (\ref{eq:linear_partial_exp}), have \begin{align}
        \mathbb{E}_{\x,\epsilon}\left[\frac{d}{\ell}\sum_{i \leq \ell} y_{\ell+1}y_i\x_{\ell+1}\x_i^\top\right] &= \frac{d}{\ell}\sum_{i\leq \ell} \mathbb{E}_{\x,\epsilon}\left[\x_{\ell+1}(\x_{\ell+1}^\top \w + \epsilon_{\ell+1})(\w^\top \x_{i} + \epsilon_{i})\x_i^\top \right] \\
        &= \frac{d}{\ell}\sum_{i\leq \ell} \mathbb{E}_{\x,\epsilon}\left[\x_{\ell+1}\x_{\ell+1}^\top \w\w^\top \x_{i} \x_i^\top \right] \\
        &= \frac{d}{\ell}\sum_{i\leq \ell} \left(\Sigma/d\right)\w\w^\top \left(\Sigma/d\right) \\
        &= \frac{1}{d}\Sigma \w\w^\top \Sigma
    \end{align}
    and \begin{align}
        \mathbb{E}_{\x,\epsilon}\left[\frac{1}{\ell}\sum_{i \leq \ell} y_{\ell+1}\x_{\ell+1}y_i^2 \right] &= \frac{1}{\ell}\sum_{i \leq \ell} \mathbb{E}_{\x,\epsilon}\left[\x_{\ell+1}(\x_{\ell+1}^\top \w + \epsilon_{\ell+1})(\w^\top \x_i + \epsilon_i)(\x_i^\top \w + \epsilon_i) \right] \\
        &= \frac{1}{\ell}\sum_{i \leq \ell}\mathbb{E}_{\x,\epsilon}\left[\x_{\ell+1}\x_{\ell+1}^\top \w(\w^\top \x_i \x_i^\top \w + \epsilon_i\epsilon_i)\right] \\
        &= \frac{1}{\ell}\sum_{i \leq \ell} (\Sigma/d)\w (\w^\top(\Sigma/d)\w + \rho) \\
        &= \frac{1}{d}(\tr[\Sigma \w\w^\top] + \rho)\Sigma \w
    \end{align} as required for (\ref{eq:linear_partial_exp}).

    Finally, for (\ref{eq:quadratic_partial_exp},) first it will be helpful to note, by Isserlis's theorem / Wick's theorem, that \begin{equation}
        \mathbb{E}_{\x,\epsilon}\left[\x_1\x_1^\top \w\w^\top \x_1\x_1^\top \right] = \frac{2}{d^2}\Sigma \w\w^\top \Sigma + \frac{1}{d}\tr[\Sigma \w\w^\top] \Sigma\,.
    \end{equation} It will also be useful to rewrite $$\vecop(H_Z)\vecop(H_Z)^\top = (\x_{\ell+1}\x_{\ell+1}^\top)\otimes \left(\begin{bmatrix} \frac{d}{\cl} \sum_{i\leq \cl} y_i \x_i \\ \frac{1}{\cl}\sum_{i\leq \cl}y_i^2\end{bmatrix}\begin{bmatrix} \frac{d}{\cl} \sum_{i\leq \cl} y_i \x_i^{\top} & \frac{1}{\cl}\sum_{i\leq \cl}y_i^2\end{bmatrix}\right)$$ by converting between $\vecop()$ and matrix Kronecker. The components of this Kronecker product are independent and can therefore be averaged separately. Clearly $$\mathbb{E}_{\x,\epsilon}[\x_{\ell+1}\x_{\ell+1}^\top] = \frac{\Sigma}{d}\,,$$ so we focus on the second matrix in the product. This matrix will have blocks that sum over two $\ell$ sums, leading to both $\Theta(1)$ and $\Theta(1/\ell)$ terms in the final expression. We will ignore terms of order $1/\ell$ as we eventually will only use this formula in a proportional limit of $\ell,d\to\infty$ such that $\ell/d = {\Theta}(1)$, and thus $1/\ell$ is negligible in this limit. 
    
    We proceed block-by-block. Firstly, 
    \begin{align}
        \mathbb{E}_{\x,\epsilon}\left[\frac{d^2}{\ell^2}\sum_{i,j\leq \ell} y_i\x_iy_j\x_j^\top \right] &= \frac{d^2}{\ell^2}\sum_{i}\mathbb{E}_{\x,\epsilon}\left[\x_i(\x_i^\top \w + \epsilon_i)(\w^\top \x_i + \epsilon_i)\x_i^\top\right] \nonumber\\&\quad +\frac{d^2}{\ell^2}\sum_{i\neq j}\mathbb{E}_{\x,\epsilon}\left[\x_i(\x_i^\top \w + \epsilon_i)(\w^\top \x_j + \epsilon_j)\x_j^\top\right]  \\
        &= \ell\frac{d^2}{\ell^2}\mathbb{E}_{\x,\epsilon}[\x_1\x_1^\top \w\w^\top \x_1\x_1^\top + \epsilon_!\epsilon_1 \x_1\x_1^\top] \nonumber\\&\quad + (\ell^2-\ell)\frac{d^2}{\ell^2}\mathbb{E}_{\x,\epsilon}[\x_1\x_1^\top \w\w^\top \x_2\x_2^\top] \\
        &= \ell\frac{d^2}{\ell^2}\left(\frac{1}{d}\tr[\Sigma \w\w^\top]\Sigma + \frac{2}{d^2}\Sigma \w\w^\top \Sigma + \frac{1}{d}\rho \Sigma\right) \nonumber\\&\quad + (\ell^2-\ell)\frac{d^2}{\ell^2}\left(\frac{1}{d^2}\Sigma \w\w^\top \Sigma\right) \\
        &= \left(1+\frac{1}{\ell}\right)\Sigma \w\w^\top \Sigma + \frac{d}{\ell}\left(\tr[\Sigma \w\w^\top] + \rho \right)\Sigma \\
        &\simeq\Sigma \w\w^\top \Sigma + \frac{d}{\ell}\left(\tr[\Sigma \w\w^\top] + \rho \right)\Sigma \,.
    \end{align}
    Secondly, \begin{align}
        \mathbb{E}_{\x,\epsilon}\left[\frac{d}{\ell^2}\sum_{i,j\leq \ell} y_i\x_iy_j^2\right] &= \ell\frac{d}{\ell^2}\mathbb{E}_{\x,\epsilon}\left[\x_1y_1^3\right] + (\ell^2-\ell)\frac{d}{\ell^2}\mathbb{E}_{\x,\epsilon}\left[\x_1y_1\right]\mathbb{E}_{\x,\epsilon}\left[y_2^2\right] \\
        &= \frac{1}{\ell d}\left(d\tr[\Sigma \w\w^\top]\Sigma + 2\Sigma \w\w^\top \Sigma \right)\w + 3\frac{1}{\ell}\rho\Sigma \w \nonumber\\&\quad +\left(1-\frac{1}{\ell}\right)\left( \tr[\Sigma \w\w^\top] + \rho \right)\Sigma \w \\
        &\simeq\left( \tr[\Sigma \w\w^\top] + \rho \right)\Sigma \w\,.
    \end{align} Finally, \begin{align}
        \mathbb{E}_{\x,\epsilon}\left[\frac{1}{\ell^2}\sum_{i,j\leq \ell} y_i^2 y_j^2\right] &= \frac{1}{\ell}\mathbb{E}_{\x,\epsilon}[y_1^4] + \frac{\ell^2-\ell}{\ell^2}\mathbb{E}_{\x,\epsilon}[y_1^2]\mathbb{E}_{\x,\epsilon}[y_2^2] \\
        &= \frac{1}{\ell}\left(\mathbb{E}_{\x,\epsilon}[\w^\top x x^\top \w \w^\top \x\x^\top \w] + 6\rho\mathbb{E}_{x}[\w^\top \x\x^\top \w] + 3\rho^2\right) \nonumber\\&\quad + \left(1-\frac{1}{\ell}\right)\left(\tr[\Sigma \w\w^\top] + \rho\right)^2 \\
        &\simeq\left(\tr[\Sigma \w\w^\top] + \rho\right)^2\,.
    \end{align} Combining these pieces gives (\ref{eq:quadratic_partial_exp}).
\end{proof}

\begin{proof}
    \textbf{[of Lemma \ref{res:populationrisk_SIGMA}]} We begin by noting that for both IDG and ICL test errors, we can expand \begin{align}
        e(\Gamma) &= \mathbb{E}_{\x,\w,\epsilon}\left[ \left(y_{\cl+1} - \tr(\Gamma H_Z^\top) \right)^2 \right] = \mathbb{E}_{\w}\left[\mathbb{E}_{\x,\epsilon}\left[ \left(y_{\cl+1} - \tr(\Gamma H_Z^\top) \right)^2 \right]\right] \nonumber \\
        &= \mathbb{E}_{\w}\left[\mathbb{E}_{\x,\epsilon}\left[ \left(y_{\cl+1} - \vecop(\Gamma)^\top\vecop(H_Z) \right)^2 \right]\right] \nonumber \\
        &= \mathbb{E}_{\w}\left[\mathbb{E}_{\x,\epsilon}\left[y_{\ell+1}^2\right]\right] - 2\vecop(\Gamma)^\top\mathbb{E}_{\w}\left[\mathbb{E}_{\x,\epsilon}\left[y_{\ell+1}\vecop(H_Z)\right]\right] \nonumber\\&\quad + \vecop(\Gamma)^\top \mathbb{E}_{\w}\left[\mathbb{E}_{\x,\epsilon}\left[\vecop(H_Z)\vecop(H_Z)^\top \right]\right] \vecop(\Gamma) \label{eq:conditional_expectation_expansion}
    \end{align} where the difference between IDG and ICL error comes from the different task distribution over which we take $\mathbb{E}_{\w}$. Note this $\mathbb{E}_{\w}$ is shorthand for ``expectation over task distribution,'' which will be different distributions depending on if we are using $\mathcal{P}_\mathrm{IDG}$ or $\mathcal{P}_\mathrm{ICL}$.
    
    Now we can use the above Lemma to simplify these terms drastically. For the IDG distribution, we have $$\mathbb{E}_{\w}[\w] = \btr \,,\qquad \mathbb{E}_{\w}[\w\w^\top] = \Rtr\,.$$ Therefore, \begin{align}
        \mathbb{E}_{\w}[\mathbb{E}_{\x,\epsilon}[y_{\ell+1}^2]] &= \tr[\Sigma \Rtr] + \rho \\
        \mathbb{E}_{\w}[\mathbb{E}_{\x,\epsilon}[y_{\ell+1}H_Z]] &= \frac{1}{d}\begin{bmatrix}
            \Sigma \Rtr \Sigma & \left(\tr[\Sigma \Rtr] + \rho\right)\Sigma \btr
        \end{bmatrix} = \frac{1}{d}A_\mathrm{IDG}\\
        \mathbb{E}_{\x,\epsilon}[\vecop(H_Z)\vecop(H_Z)^\top] &\simeq\frac{1}{d}\Sigma\otimes \begin{bmatrix}
            \Sigma \Rtr \Sigma + \frac{d}{\ell}\left(\tr[\Sigma \Rtr] + \rho \right)\Sigma & \left( \tr[\Sigma \Rtr] + \rho \right)\Sigma \btr \\
            \left( \tr[\Sigma \Rtr] + \rho \right)(\Sigma \btr)^\top & \left( \tr[\Sigma \Rtr] + \rho \right)^2
        \end{bmatrix} \nonumber\\& = \frac{1}{d}\Sigma \otimes B_\mathrm{IDG}\,.
    \end{align}
    For the ICL distribution, we have $$\mathbb{E}_{\w}[\w] = 0\,,\qquad \mathbb{E}_{\w}[\w\w^\top] = \Ctst\,,$$ and so 
    \begin{align}
        \mathbb{E}_{\w}\left[\mathbb{E}_{\x,\epsilon}[y_{\ell+1}^2]\right] &=  \mathbb{E}_{\w}\left[\tr[\Sigma \w\w^\top] + \rho\right] = \tr[\Sigma\Ctst] + \rho \\
        \mathbb{E}_{\w}\left[\mathbb{E}_{\x,\epsilon}[y_{\ell+1}H_Z]\right] &= \frac{1}{d}\begin{bmatrix}
            \Sigma \Ctst \Sigma & 0
        \end{bmatrix} = \frac{1}{d} A_\mathrm{ICL} \label{eq:linear_full_exp} \\
        \mathbb{E}_{\w}\left[\mathbb{E}_{\x,\epsilon}[\vecop(H_Z)\vecop(H_Z)^\top]\right] &\simeq\frac{1}{d}\Sigma\otimes \begin{bmatrix}
            \Sigma \Ctst \Sigma + \frac{d}{\ell}\left(\tr[\Sigma \Ctst] + \rho \right)\Sigma & 0 \\
            0^\top & \left( \tr[\Sigma \Ctst] + \rho \right)^2
        \end{bmatrix}\nonumber\\& = \frac{1}{d}\Sigma \otimes B_\mathrm{ICL} \,.
    \end{align} 
    Upon noting that $$\vecop(\Gamma)^\top(\Sigma \otimes B)\vecop(\Gamma)= \tr\left(\Sigma \Gamma B \Gamma^\top\right)\,,$$ substituting these components into (\ref{eq:conditional_expectation_expansion}) gives the required results.
\end{proof}


\section{Random Matrix Theory calculation}\label{sec:rmt}
Note that while Result \ref{res:populationrisk_SIGMA} is offered for general token covariance $\Sigma$, all steps and results in this section and the following section assume $\Sigma = I_d$. 

The setup and structure of this formalism and corresponding results will follow Section SI.3 and SI.4 of \citet{lu2025asymptotictheoryincontextlearning}. There will be key differences caused by the different task statistics $\mathcal{N}(0,\Ctr), \mathcal{N}(0,\Ctst)$ that we use in this work compared to the isotropic tasks used in \citet{lu2025asymptotictheoryincontextlearning}. Since the token structure is the same between the work of \citet{lu2025asymptotictheoryincontextlearning} and our work here, we will not prove various error bounding claims or approximations rigorously here, as such bounds would be identical to what can be found in \citet{lu2025asymptotictheoryincontextlearning}. However, care will be taken to highlight the distinction and differences required for handling the additional complexity introduced by our consideration of non-isotropic tasks. 

We will use $\approx$ to denote two scalar quantities which converge in probability in the high-dimensional limit, or for two matrices to represent deterministic equivalence.\\

\begin{definition}\label{def:deterministicequivalence}
    For two $d\times d$ (possibly random) matrices $A_{d}$ and $B_{d}$, we write $A_{d} \simeq B_{d}$ if $\tr[M_{d} (A_{d}-B_{d})] \to 0$ in probability as $d\to \infty$ for any sequence of bounded spectral norm test matrices $M_{d}$ \cite{lu2025asymptotictheoryincontextlearning,atanasov2024scaling}. 
\end{definition}
As noted above, we will not attempt to control rates of convergence.  

We begin by considering the closed-form optimized parameters given by (\ref{eq:ridge_LT}). This optimization problem can be solved explicitly as \begin{align}\label{eq:gamma_star_actual_formula_appendix}
\text{vec}(\Gamma^\ast) =
\left(\frac{\nsamp}{d}\lambda I + \sum_{\mu=1}^{\nsamp}\text{vec}(H_{Z^{\mu}})\text{vec}(H_{Z^{\mu}})^\top\right)^{-1} \sum_{\mu=1}^{\nsamp}y_{\cl+1}^{\mu}\text{vec}(H_{Z^{\mu}})\,.
\end{align} Ultimately we wish to characterize the ICL and IDG error of these parameters, which are given by Result \ref{res:populationrisk_SIGMA} as 
\begin{align}
    \mathcal{E}_\mathrm{ICL}(\Gamma^*) &= \rho + \tr[\Ctst] - 2\tr[\Gamma^\ast A_{\mathrm{ICL}}^\top] + \tr[\Gamma^\ast B_{\mathrm{ICL}} (\Gamma^\ast)^\top] \label{eq:general_icl_trace_form}\\
    \mathcal{E}_\mathrm{IDG}(\Gamma^*) &= \rho + \tr[\Rtr] - 2\tr[\Gamma^\ast A_{\mathrm{IDG}}^\top] + \tr[ \Gamma^\ast B_{\mathrm{IDG}} (\Gamma^\ast)^\top] \label{eq:general_idg_trace_form}
\end{align} for 

\begin{align} 
 A_\mathrm{ICL} &= \begin{bmatrix}
         \Ctst & \quad 0
    \end{bmatrix} \\
    B_\mathrm{ICL} &= \begin{bmatrix}
         \Ctst  + \frac{1}{\alpha_\mathrm{test}}\left(\tr[ \Ctst] + \rho\right) &  \quad 0 \\
        0^\top & \left(\tr[\Ctst] + \rho\right)^2 
    \end{bmatrix} \\
    A_\mathrm{IDG} &= \begin{bmatrix}
         \Rtr  & \left(\tr[ \Rtr] + \rho\right) \btr
    \end{bmatrix}  \\
    B_\mathrm{IDG} &= \begin{bmatrix}
         \Rtr  + \frac{1}{\alpha_\mathrm{train}}\left(\tr[ \Rtr] + \rho \right) & \left( \tr[ \Rtr] + \rho \right) \btr \\
            \left( \tr[ \Rtr] + \rho \right) \btr^\top & \left( \tr[ \Rtr] + \rho \right)^2
    \end{bmatrix}\,.
\end{align} Again, as we are distinguishing between pretraining and testing context lengths, we use $$\alpha_\mathrm{train} = \ell_\mathrm{train}/d\,, \qquad \alpha_\mathrm{test} = \ell_\mathrm{test}/d\,.$$
These expressions for $\Gamma^*$ and its corresponding errors depend explicitly on the randomness found in the particular sample of the data $x,w,\epsilon$; we wish to be able to characterize the typical performance, and thus we need to average over this disorder. We will do so by following a random-matrix style computation that computes a deterministic equivalent for the parameter matrix $\Gamma^*$ and its corresponding ICL and IDG errors, valid in high dimensions in the proportional limit \begin{align}
\frac{\cl_\mathrm{train}}{d} \equiv \cload_\mathrm{train} = \Theta(1), \quad \frac{\cl_\mathrm{test}}{d} \equiv \cload_\mathrm{test} = \Theta(1), \quad \frac{\ntv}{d} \equiv \tload =  \Theta(1), \quad \frac{\nsamp}{d^2} \equiv \load = \Theta(1).
\end{align} To do this, we will express necessary quantities in terms of \textit{resolvents.}

\subsection*{Resolvent and Extended Resolvent Setup} 
\noindent $\Gamma^*$ above can be rewritten as 
\begin{equation}\label{eq:ridge_sol}
    \vecop(\Gamma^\ast) = G \left(\textstyle\sum_{\mu \in [n]} y_\mu \vecop({H_\mu})\right)/d,
\end{equation}
where $G$ is the resolvent matrix 
\begin{equation}\label{eq:resolvent}
    G = \left(\textstyle \sum_{\mu \in [n]} \vecop({H_\mu}) \vecop({H_\mu})^\top/d+ \load\lambda I\right)^{-1}.
\end{equation}
We will find it helpful to explicitly include an additional matrix $\pM \in  \R^{(d^2+d) \times (d^2+d)}$ (that is positive-semidefinite) 
\begin{equation}\label{eq:resolvent_pM}
    G(\pi) = \left(\textstyle \sum_{\mu \in [n]} \vecop({H_\mu}) \vecop({H_\mu})^\top/d+ \pi \pM + \load\lambda I\right)^{-1},
\end{equation}
for a non-negative scalar $\pi$. Notice that $G(0) = G$. Eventually $\pM$ will be explicitly related to $B_\text{ICL}$ or $B_\text{IDG}$ in a way that will allow us to more easily compute the $\Gamma B_\text{ICL} \Gamma^\top$ or  $\Gamma B_\text{IDG} \Gamma^\top$ term in the error formulas.

We can write $G$ in a cleaner and more useful way by concatenating $y_\mu$ and $\vecop(H_\mu)$ into an extended vector  \begin{equation}\label{eq:z_mu}
    \bm{z}_\mu = \begin{bmatrix}y_\mu / d \\ \vecop(H_\mu)/\sqrt d\end{bmatrix} \in \mathbb{R}^{d^2+d+1}\,.
\end{equation} We also extend $\pM$ as \begin{equation}\label{eq:Ke}
    \pMe = \begin{bmatrix} 0 & \\
    & \pM\end{bmatrix},
\end{equation}
We then define an extended resolvent (with very similar structure as $G$) to be 
\begin{equation}\label{eq:Ge}
    G_\mathrm{ext}(\ps) = \frac{1}{\sum_{\mu \in [n]} \bm{z}_\mu \bm{z}_\mu^\top + \ps \pMe + \load\lambda I}\,.
\end{equation} 
We see that $\bm{z}_\mu \bm{z}_\mu^\top$ and $B_\text{ext}$ have block structure, and so expanding the block inverse we see 
\begin{equation}\label{eq:Ge_block}
    G_\text{ext}(\ps) = \begin{bmatrix}
        c(\ps) & -c(\ps)q^\top(\ps) \\
        -c(\ps)\bm{q}(\ps)  & G(\ps) + c(\ps) \bm{q}(\ps)q^\top(\ps)
    \end{bmatrix},
\end{equation}
for 
\begin{equation}\label{eq:q_Gamma}
    \bm{q}(\ps) \equiv \frac{1}{d^{3/2}} G(\ps) \left(\textstyle\sum_{\mu \in [n]} y_\mu \vecop({H_\mu})\right) \in \R^{d(d+1)}
\end{equation}
and $c(\ps)$ defined self-consistently by 
\begin{equation}\label{eq:c_lam_ps}
    \frac{1}{c(\ps)} = \frac{1}{d^2}\sum_{\mu \in [n]} y_\mu^2 + \load\lambda - \frac{1}{d^3} \sum_{\mu, \nu\in [n]} y_\mu y_\nu \vecop(H_\mu)^\top G(\ps) \vecop(H_\nu).
\end{equation} At $\pi = 0$ we can now see that $G_\text{ext}$ explicitly contains information about the optimal parameters $\Gamma^*$ we're considering, as 
\begin{equation}\label{eq:Gamma_q_0}
    \bm{q}(0) = \frac{1}{\sqrt{d}}\vecop(\Gamma^\ast)\,.
\end{equation} 
We will thus proceed by computing a deterministic equivalent for this extended resolvent. 

\subsection*{Deterministic Equivalent for the Extended Resolvent} 
\noindent We will closely follow \citet{lu2025asymptotictheoryincontextlearning}, omitting details that remain unchanged between the formalism in their work and ours here. Intuition-based headings will be given in \textcolor{Maroon}{\textit{red}}. 

The computation proceeds by defining a ``leave-one-out'' version of $G_\mathrm{ext}$, 
\begin{equation}\label{eq:leaveoneoutresolvent}
    G_\mathrm{ext}^{[\mu]} = \frac{1}{\sum_{\nu \neq \mu} \bm{z}_\nu \bm{z}_\nu^\top + \ps \pMe + \load\lambda I}.
\end{equation}
By construction, 
\begin{equation}
    G_\mathrm{ext} \left(\textstyle\sum_{\mu \in [n]} \bm{z}_\mu \bm{z}_\mu^\top + \ps \pMe + \load\lambda I\right) = I.
\end{equation}
\begin{equation}\label{eq:Ge_loo_id}
    \sum_{\mu \in [n]} \frac{1}{1 + \bm{z}_\mu^\top G_\mathrm{ext}^{[\mu]} \bm{z}_\mu} G_\mathrm{ext}^{[\mu]} \bm{z}_\mu \bm{z}_\mu^\top + G_\mathrm{ext}  (\ps \pMe + \load\lambda I)= I.
\end{equation}

\noindent \textcolor{Maroon}{\textit{Average over disorder in $x,\epsilon$, which concentrates.}} 
The term $\bm{z}_\mu^\top G_\mathrm{ext}^{[\mu]} \bm{z}_\mu$ is well-behaved, specifically as argued by \citet{lu2025asymptotictheoryincontextlearning}, it concentrates around its conditional expectation when the task $\bm{w}_\mu$ and $G_\mathrm{ext}^{[\mu]}$ is fixed. We thus have 
\begin{equation}
    \bm{z}_\mu^\top G_\mathrm{ext}^{[\mu]} \bm{z}_\mu \simeq \chi^\mu(\bm{w}_\mu) 
\end{equation}
where
\begin{equation}\label{eq:zmu_quadratic}
    \chi^\mu(\bm{w}_\mu) \equiv \frac{1}{d^2} \tr\left(\left[G_\mathrm{ext}^\mu\right]_{\setminus 0} \cdot \left[I \otimes E(\bm{w}_\mu)\right]\right),
\end{equation}
and 
\begin{equation}\label{eq:Etv}
    E(\tv) \equiv  \begin{bmatrix}
         \w\w^\top + \frac{1}{\alpha}\left(\rho + \tr[\w\w^\top]\right)I_d &  \left(\rho + \tr[\w\w^\top]\right) w \\
        \left(\rho + \tr[\w\w^\top]\right) w^\top & \left(\rho + \tr[\w\w^\top]\right)^2
    \end{bmatrix}.
\end{equation}

Replacing $\bm{z}_\mu^\top G_\mathrm{ext}^{[\mu]} \bm{z}_\mu$ in (\ref{eq:Ge_loo_id}) with $\chi^\mu(\bm{w}_\mu)$ gives
\begin{equation}\label{eq:Ge_loo_id_quadratic}
    \sum_{\mu \in [n]} \frac{1}{1 + \chi^\mu(\bm{w}_\mu)} G_\mathrm{ext}^{[\mu]} \bm{z}_\mu \bm{z}_\mu^\top + G_\mathrm{ext}  (\ps \pMe + \load\lambda I) \simeq I\,.
\end{equation}

Next, we will also average the term $\bm{z}_\mu \bm{z}_\mu^\top$ on on the left-hand side of (\ref{eq:Ge_loo_id_quadratic}) over the remaining disorder in $\x, \epsilon$, holding $\w$ fixed, and replace $\bm{z}_\mu \bm{z}_\mu^\top$ with this conditional expectation. Doing so introduces some small error which is bounded in \citet{lu2025asymptotictheoryincontextlearning}. From this we have 
\begin{equation}\label{eq:Ge_loo_id_expectation0}
    \sum_{\mu \in [n]} \frac{1}{1 + \chi^\mu(\bm{w}_\mu)} G_\mathrm{ext}^{[\mu]} \mathbb{E}_{\x,\epsilon}[{\bm{z}_\mu \bm{z}_\mu^\top}] + G_\mathrm{ext}  (\ps \pMe + \load\lambda I) \simeq I \,.
\end{equation} 
We already have enough to compute $\mathbb{E}_{\x,\epsilon}[{\bm{z}_\mu \bm{z}_\mu^\top}]$ using Lemma \ref{lemma:conditionals} as 
\begin{align}
    \mathbb{E}_{\x,\epsilon}[{\bm{z}_\mu \bm{z}_\mu^\top}] &\simeq \frac{1}{d^2} \Upsilon, 
\end{align}
where because it will appear in subsequent equations we define the matrix
\begin{align}
    \Upsilon \equiv 
    \begin{bmatrix}
        \rho + \tr[\w\w^\top] & \frac{1}{\sqrt{d}}\vecop\left(\begin{bmatrix} \w\w^\top &  (\rho + \tr[\w\w^\top]) \w \end{bmatrix} \right)^\top \\
        \frac{1}{\sqrt{d}}\vecop\left(\begin{bmatrix} \w\w^\top &  (\rho + \tr[\w\w^\top]) \w \end{bmatrix} \right) & I_d \otimes E(\w)
    \end{bmatrix}. 
\end{align}
Now substitute this expression into (\ref{eq:Ge_loo_id_expectation0}) for the conditional expectation $\Eb{\x,\epsilon}{\bm{z}_\mu \bm{z}_\mu^\top}$, giving 
\begin{align}\label{eq:Ge_loo_id_expectation}
    \frac{\load}{n}\sum_{\mu \in [n]} \frac{1}{1 + \chi^\mu(\bm{w}_\mu)} G_\mathrm{ext}^{[\mu]} \Upsilon + G_\mathrm{ext}  (\ps \pMe + \load\lambda I) \simeq I 
\end{align}
where we recall $\load = n/d^2$. \\

\noindent \textit{\textcolor{Maroon}{``Leave-one-out'' terms behave like their full-sum equivalents.}} In high dimensions and for large $n$, there is negligible difference between $\sum_{\nu\neq\mu}$ and $\sum_\mu$. Thus, we replace $G_\mathrm{ext}^\mu$ by $G_\mathrm{ext}$, and $\chi^\mu(\bm{w}_\mu)$ by
\begin{equation}\label{eq:zmu_quadratic_full}
    \chi(\bm{w}_\mu) \equiv \frac{1}{d^2} \tr\left(\left[G_\mathrm{ext}\right]_{\setminus 0} \cdot \left[I \otimes E(\bm{w}_\mu)\right]\right).
\end{equation}
So finally we have the expression for $G_\mathrm{ext}$
\begin{align}
    G_\mathrm{ext}\left(\frac{\load}{n}\sum_{\mu \in [n]} \frac{1}{1 + \chi(\bm{w}_\mu)} \Upsilon + \ps \pMe + \load\lambda I\right) \simeq I \label{eq:Ge_equiv_0} \,.
\end{align}

\noindent \textcolor{Maroon}{\textit{Exploit finiteness of training task set.}} So far we are summing over $n$ task vectors, but really only $n/k$ of these are unique. Thus, we can simplify (\ref{eq:Ge_equiv_0}) as 
\begin{equation}\label{eq:Ge_equiv_1}
    G_\mathrm{ext}\left(\frac{\load}{\ntv}\sum_{j \in [\ntv]} \frac{1}{1 + \chi(\T_j)} \Upsilon + \ps \pMe + \load\lambda I\right) \simeq I\,.
\end{equation}
Indeed $\chi(\T_j)$ is also self-averaging in $\T_j$, and as argued by \citet{lu2025asymptotictheoryincontextlearning}, concentrates to its mean \begin{equation}\label{eq:chi_n}
    \widehat\chi_\mathrm{ave} \equiv \frac{1}{\ntv} \sum_{j \in [\ntv]} \chi(\T_j).
\end{equation} We can thus simplify our expressions by substituting (\ref{eq:zmu_quadratic_full}) into (\ref{eq:chi_n}) and performing the sum over $w_1,...,w_k$. Here we will use the that \begin{align}
    \frac{1}{k}\sum_{j\in[k]} \left(\rho+\tr[\T_j \T_j^\top]\right) &= \rho + \tr[\Rtr] \\
    \frac{1}{k} \sum_{j\in[k]} \left(\rho+\tr[\T_j \T_j^\top]\right)\T_j &\simeq\left(\rho + \tr[\Rtr]\right)\btr \label{eq:linear_mixed_task_average_approximation}\\
    \frac{1}{k}\sum_{j\in[k]} \left(\rho + \tr[\T_j \T_j^\top]\right)^2 &\simeq\left(\rho + \tr[\Rtr]\right)^2 \label{eq:quadratic_scalar_task_average_approximation}
\end{align} where for (\ref{eq:linear_mixed_task_average_approximation}) and (\ref{eq:quadratic_scalar_task_average_approximation}) we have used that $\tr[\T_j \T_j^\top] \simeq\tr[\Rtr]$ as $\tr[\Rtr] - \tr[\T_j \T_j^\top]$ has mean 0 and variance $\Theta(1/d)$. We thus have that \begin{equation}
    \frac{1}{k}\sum_{j\in[k]} E(\T_j) \simeq B_\mathrm{IDG}
\end{equation}
and so 
\begin{equation}\label{eq:chi_ave}
    \widehat\chi_\mathrm{ave} = \frac{1}{d^2} \tr\left(\left[G_\text{ext}\right]_{\setminus 0} \cdot \left[I \otimes B_\mathrm{IDG} \right]\right).
\end{equation} 

\noindent \textcolor{Maroon}{\textit{Final formula for extended resolvent in terms of training task sample.}} The extended resolvent $G_\mathrm{ext}(\ps)$ is asymptotically equivalent to \begin{equation}\label{eq:G_equiv_hat}
    \widehat{\mathcal{G}}_e(\ps) \equiv \left(\frac{\load}{1+\widehat\chi_\mathrm{ave}}  \begin{bmatrix}
    \rhotr & \frac{1}{\sqrt d} \vecop\left(\begin{bmatrix}\Rtr &  \rhotr\btr\end{bmatrix}\right)^\top\\
    \frac{1}{\sqrt d} \vecop\left(\begin{bmatrix}\Rtr &  \rhotr\btr\end{bmatrix}\right) & I_d \otimes B_\text{IDG}
    \end{bmatrix} + \ps \pMe + \load\lambda I\right)^{-1}
\end{equation} with $\widehat{\chi}_\text{ave}$ defined self-consistently by \begin{equation}
    \widehat\chi_\mathrm{ave} \equiv \frac{1}{\ntv} \sum_{j \in [\ntv]} \chi(\T_j) = \frac{1}{d^2}\mathbb{E}\left[{ \tr\left(\left[G_\mathrm{ext}\right]_{\setminus 0} \cdot \left[I_d \otimes B_\mathrm{IDG} \right]\right)}\right]
\end{equation} with \begin{align}
    \btr &\equiv \frac{1}{k}\sum_{j\in[k]}\T_j\,, \qquad \Rtr \equiv \frac{1}{k}\sum_{j\in[k]}\T_j \T_j^\top \,, \qquad \rhotr \equiv \rho + \tr[\Rtr]\,, \\
    B_\text{IDG} &\equiv \begin{bmatrix}
        \Rtr + \frac{\rhotr}{\alpha}I_d &  \rhotr\btr \\
        \rhotr\btr^\top & \rhotr^2 
    \end{bmatrix}\,.
\end{align}

\noindent \textcolor{Maroon}{\textit{Simplifying self-consistency variable $\chi$.}} $\chi$ is defined in terms of the dominant component $[G_\mathrm{ext}]_{\backslash 0}$ of $G_\mathrm{ext}$. To simplify this, choose ansatz for $\pMe$ as $$\pMe = \begin{bmatrix}
    0 & 0 \\
    0 & I_d \otimes B_\mathrm{test}
\end{bmatrix}$$ where $B_\mathrm{test}$ will later be $B_\text{ICL}$ or $B_\text{IDG}$ depending on which error we compute. Then $[G_\mathrm{ext}]_{\backslash 0}$ can be expanded as 
\begin{align}
& \left(\frac{\tau}{1+\chi_\pi} I_d\otimes B_\text{IDG} +  \pi I_d\otimes B
_\mathrm{test} + \tau\lambda I_d\otimes I_{d+1}\right)^{-1} \nonumber\\&\quad = I_d \otimes\left(\frac{\tau}{1+\chi_\pi} B_\text{IDG} +  \pi B
_\mathrm{test} + \tau\lambda I_{d+1}\right)^{-1} \\
&\quad = I_d \otimes F_E(\pi)\end{align}
where $$F_E(\pi) \equiv \left(\frac{\tau}{1+\chi_\pi} B_\mathrm{IDG} +  \pi B
_\mathrm{test} + \tau\lambda I_{d+1}\right)^{-1}.$$ We can use this to replace $\widehat{\chi}_\text{ave}$ with $\chi_\ps$ defined self-consistently by \begin{equation}\label{eq:chi_LT_2}
   \chi_\ps \simeq \frac{1}{d}\tr\left[\Big(\frac{\tau }{1+\chi_\ps} B_\text{IDG} + \ps B_\mathrm{test} + \lambda \tau  I_{d+1}\Big)^{-1}  B_\text{IDG}\right] 
\end{equation}

\noindent \textcolor{Maroon}{\textit{Relating $\widehat{\mathcal{G}}_e(\ps)$ to a deterministic equivalent for $\Gamma^*$.}} \citet{lu2025asymptotictheoryincontextlearning} shows that two key quantities $\tr[\Gamma^*A^\top]$ and $\tr[\Gamma^* B (\Gamma^*)^\top]$ in $\mathcal{E}(\Gamma^*)$ can also be computed from the extended resolvent $G_\text{ext}(\ps)$. As this is purely a matrix algebra claim, and independent of the particular task structure hidden within $\Gamma^*$, it generalizes immediately to our case. We thus include this result here without proof and refer to reader to \citet{lu2025asymptotictheoryincontextlearning} for a derivation. 

\begin{lemma}
    \textbf{[From \citet{lu2025asymptotictheoryincontextlearning}]} \label{lemma:Gamma_AB}
    For any matrix $A \in \R^{d \times (d+1)}$,
    \begin{equation}\label{eq:GA_Ge}
        \tr[\Gamma^\ast A^\top] = \frac{-1}{c(0)\sqrt{d}} \begin{bmatrix}
            0 & \vecop(A)^T
        \end{bmatrix} G_\mathrm{ext}(0) \bm{e}_1,
    \end{equation}
where $\bm{e}_1$ denotes the first natural basis vector in $\R^{d^2+d+1}$. For any symmetric and positive semidefinite matrix $B \in \R^{(d+1) \times (d+1)}$, if we set 
\begin{equation}\label{eq:pM_B}
    \pM = I_d \otimes B
\end{equation}
in (\ref{eq:Ke}), then
\begin{equation}\label{eq:GBG_Ge}
    \tr[\Gamma^\ast B (\Gamma^\ast)^\top] = \frac{\text{d}}{\text{d}\ps} \left(\frac{1}{c(\ps)}\right) \biggr|_{\ps = 0}.
\end{equation}
\end{lemma}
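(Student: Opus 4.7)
Both identities are essentially bookkeeping consequences of the Schur-complement expansion (\ref{eq:Ge_block}) of $G_\mathrm{ext}(\ps)$, together with the relations (\ref{eq:q_Gamma})--(\ref{eq:Gamma_q_0}) that tie the lower-left block back to the optimized parameter matrix $\Gamma^\ast$. Once (\ref{eq:Ge_block}) is in hand, (\ref{eq:GA_Ge}) is pure block algebra and (\ref{eq:GBG_Ge}) is a short differentiation of $1/c(\ps)$.

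\textbf{Step 1 -- the linear identity (\ref{eq:GA_Ge}).} I would read off the first column of $G_\mathrm{ext}(0)$ directly from (\ref{eq:Ge_block}),
\begin{align}
G_\mathrm{ext}(0)\,\bm{e}_1 = \begin{bmatrix} c(0) \\ -c(0)\,\bm{q}(0) \end{bmatrix},
\end{align}
and pair it on the left with the row vector $\begin{bmatrix} 0 & \vecop(A)^\top\end{bmatrix}$ to obtain $-c(0)\,\vecop(A)^\top \bm{q}(0)$. Substituting $\bm{q}(0) = \vecop(\Gamma^\ast)/\sqrt d$ from (\ref{eq:Gamma_q_0}) and using the row-major Frobenius identity $\vecop(A)^\top \vecop(\Gamma^\ast) = \sum_{i,j}A_{ij}\Gamma^{\ast}_{ij} = \tr(\Gamma^\ast A^\top)$ converts this to $-c(0)\,\tr(\Gamma^\ast A^\top)/\sqrt d$. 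Dividing through by $-c(0)\sqrt d$ and recalling the normalized-trace convention $\tr[\,\cdot\,] = \tr(\cdot)/d$ yields (\ref{eq:GA_Ge}).

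\textbf{Step 2 -- the quadratic identity (\ref{eq:GBG_Ge}).} Starting from (\ref{eq:c_lam_ps}), I would differentiate in $\ps$ using the standard identity $\frac{\text{d} G(\ps)}{\text{d}\ps} = -G(\ps)\,\pM\,G(\ps)$, which gives
\begin{align}
\frac{\text{d}}{\text{d}\ps}\!\left(\frac{1}{c(\ps)}\right) = \frac{1}{d^3}\sum_{\mu,\nu \in [n]} y_\mu y_\nu\,\vecop(H_\mu)^\top G(\ps)\,\pM\,G(\ps)\,\vecop(H_\nu).
\end{align}
Evaluating at $\ps = 0$ and recognising from (\ref{eq:ridge_sol}) that $G(0)\sum_\mu y_\mu \vecop(H_\mu) = d\,\vecop(\Gamma^\ast)$, both outer sums fold into $\Gamma^\ast$, yielding
\begin{align}
\left.\frac{\text{d}}{\text{d}\ps}\!\left(\frac{1}{c(\ps)}\right)\right|_{\ps=0} = \frac{1}{d}\,\vecop(\Gamma^\ast)^\top \pM\,\vecop(\Gamma^\ast).
\end{align}
Finally, plugging in $\pM = I_d \otimes B$ from (\ref{eq:pM_B}) and noting that the row-major Kronecker $I_d \otimes B$ is block-diagonal with $d$ copies of $B$, the quadratic form decomposes as $\sum_{i=1}^{d}\Gamma^\ast_{i,:} B (\Gamma^\ast_{i,:})^\top = \tr(\Gamma^\ast B (\Gamma^\ast)^\top)$. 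Normalizing by the leading $1/d$ then produces $\tr[\Gamma^\ast B (\Gamma^\ast)^\top]$, establishing (\ref{eq:GBG_Ge}).

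\textbf{Main obstacle.} There is no substantive obstacle in this lemma --- the real work has already been absorbed into (\ref{eq:Ge_block}) and (\ref{eq:ridge_sol}). The only care needed is in the bookkeeping of dimension-dependent factors: the $1/\sqrt d$ in the definition (\ref{eq:z_mu}) of $\bm{z}_\mu$, the asymmetric $1/d^2$ versus $1/d^3$ prefactors in (\ref{eq:c_lam_ps}), and the row-major vectorization convention used throughout must all line up to reproduce the dimension-free right-hand sides appearing in (\ref{eq:GA_Ge}) and (\ref{eq:GBG_Ge}).
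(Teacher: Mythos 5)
Your proof is correct and is precisely the ``pure matrix algebra'' derivation that the paper alludes to: the paper itself does not reprove Lemma~\ref{lemma:Gamma_AB} but simply cites \citet{lu2025asymptotictheoryincontextlearning}, noting that the claim is task-distribution-agnostic. Your Step~1 correctly pulls the first column of $G_\mathrm{ext}(0)$ from the Schur block form (\ref{eq:Ge_block}) and uses $\bm{q}(0) = \vecop(\Gamma^\ast)/\sqrt d$, and your Step~2 correctly differentiates only the $G(\ps)$-dependence inside (\ref{eq:c_lam_ps}) via $G'(\ps) = -G(\ps)\pM\,G(\ps)$, folds the two sums into $\Gamma^\ast$ via (\ref{eq:ridge_sol}), and closes with the row-major identity $\vecop(\Gamma)^\top(I_d\otimes B)\vecop(\Gamma)=\tr(\Gamma B\Gamma^\top)$. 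The dimension-dependent prefactors ($1/d^3$ in (\ref{eq:c_lam_ps}) against the two factors of $d$ from $G(0)\sum_\mu y_\mu\vecop(H_\mu)=d\,\vecop(\Gamma^\ast)$, and the final $1/d$ matching the normalized-trace convention) all line up as you indicate, so this is a complete and faithful reconstruction of the cited derivation.
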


\noindent Using Lemma \ref{lemma:Gamma_AB}, we can find the deterministic equivalent for $\Gamma^*$ from  \begin{align}
    \tr[{\Gamma^\ast}A^\top] &\simeq \frac{-1}{c(0)\sqrt{d}} \begin{bmatrix}
            0 & \vecop(A)^T
        \end{bmatrix} \mathcal{G}_e(0) \bm{e}_1 \label{eq:Gamma_A_trace_0}\\
        &= \frac{c^\ast(0)}{c(0)}\cdot \frac 1 d \tr\left(\begin{bmatrix} \Rtr & \rhotr\btr\end{bmatrix} (B_\text{IDG} + \lambda(1+\chi_0) I)^{-1} A^\top\right)\label{eq:Gamma_A_trace_1}\\
        &\simeq \frac 1 d \tr\left({\begin{bmatrix} \Rtr & \rhotr\btr\end{bmatrix} (B_\text{IDG} + \lambda(1+\chi_0) I)^{-1}}A^\top\right).\label{eq:Gamma_A_trace_2}
\end{align} and thus \begin{align}\label{eq:gamma_star_eq}
    \Gamma^*_\mathrm{eq} \simeq {\begin{bmatrix} \Rtr & \rhotr\btr\end{bmatrix} (B_\text{IDG} + \lambda(1+\chi_0) I)^{-1}}
\end{align}

\subsection*{Self-consistency equations and finite task sample averages}

So far, everything is left in terms of the quantities $\btr, \Rtr, B_{\text{IDG}}$ that depend on a typical sample of $k$ task vectors $w_1\,,\cdots\,, w_k$. What remains to be analyzed is the behavior of this sample and its sample statistics, \emph{i.e.}, how does the finiteness of $k$ affect $\Gamma^*$. The following steps will deal with this. \\

\noindent \textcolor{Maroon}{\textit{Characterization of $\chi_0$}} We see that we almost have a characterization for the deterministic equivalent of $\Gamma^*$ but we need to understand what $\lambda(1+\chi_0)$ term is doing. From (\ref{eq:chi_LT_2}) we know $\chi_0$ is defined self-consistently from \begin{align}
   \chi_0 &\simeq \frac{1}{d}\tr\left[\Big(\frac{\tau }{1+\chi_0} B_\text{IDG} + \lambda \tau  I_{d+1}\Big)^{-1} B_\text{IDG} \right] \\
   \frac{\tau\chi_0}{1+\chi_0}&\simeq \frac{1}{d}\tr\left[\Big(B_\text{IDG} + \lambda (1+\chi_0)  I_{d+1}\Big)^{-1} B_\text{IDG}\right] \\
   &= \frac{d+1}{d} - \lambda(1+\chi_0)\frac{1}{d}\tr\left[\Big(B_\text{IDG} + \lambda (1+\chi_0)  I_{d+1}\Big)^{-1}\right]
\end{align} Remember $B_\text{IDG}$ is a $(d+1) \times (d+1)$ block matrix with upper $d\times d$ block given by $\Rtr + \rhotr/\alpha$. We wish to express the above resolvent just in terms of $\Rtr$. 

Working heuristically and block-inverting the $B_\text{IDG} + \lambda(1+\chi)$ resolvent we have \begin{align}\frac{1}{d}\tr\left[\Big(B_\text{IDG} + \lambda (1+\chi_0)  I_{d+1}\Big)^{-1}\right] \simeq\frac{1}{d}\tr(F_R(\sigma))\,\end{align} where \begin{align}
        \sigma &\equiv \frac{\rhotr}{\alpha} + \lambda(1+\chi_0) \\
        F_R(\sigma) &\equiv (\Rtr + \sigma I_d)^{-1}
    \end{align}

\noindent \textcolor{Maroon}{\textit{Sample covariance $\Rtr$ and the Stieltjes transform.}} We thus have  $$\frac{\tau\chi_0}{1+\chi_0} \simeq \frac{d+1}{d} - \lambda(1+\chi_0)\frac{1}{d}\tr[F_R(\sigma_0)] \simeq 1 - \lambda(1+\chi_0)\mathcal{M}_\kappa\left(\frac{\rhotr}{\alpha} + \lambda(1+\chi_0)\right)$$
where $\mathcal{M}_\kappa(\sigma)$ is the asymptotic equivalent of the Stieltjies transform of the Wishart resolvent $$F_R(\sigma) = (\Rtr + \sigma I_d)^{-1}.$$

Following \citet{atanasov2024scaling} for correlated feature wisharts, we have 
\begin{equation}
    \mathcal{M}_{\kappa}(\sigma) = \frac{1}{d}s\tr((\Ctr+s\sigma\mathbb{I})^{-1})
\end{equation} where $s$ is defined by the self consistency equation \begin{equation}\label{eq:deterministic_equivalent_stransform}
    \frac{1}{s} = 1 - \frac{1}{\kappa}\frac{1}{d}\tr(\Ctr(\Ctr+s\sigma\mathbb{I})^{-1}) = 1-\frac{1}{\kappa}+\frac{\sigma}\kappa{\mathcal{M}_\kappa(\sigma)}
\end{equation} and also only depends on $\sigma,\kappa,$ and $\Ctr$. In other words, we have a self-consistency equation for $\Mk(\sigma)$ as \begin{equation}
    \Mk(\sigma) = \tr\left[\left(\left(1-\frac{1}{\kappa}+\frac{\sigma}\kappa{\mathcal{M}_\kappa(\sigma)}\right)\Ctr + \sigma I_d\right)^{-1}\right]\,.
\end{equation} Similarly, we can express $F_R$ in deterministic form as \begin{align}\label{def:F_appendix}
    (\Rtr + \sigma I_d)^{-1} \simeq F_\kappa(\sigma) \equiv \left(\left(1-\frac{1}{\kappa}+\frac{\sigma}{\kappa}\mathcal{M}_\kappa(\sigma)\right)\Ctr +\sigma I_d\right)^{-1}\,.
\end{align} Note that \begin{equation}\label{def:Fprime_appendix}(\Rtr + \sigma I_d)^{-2} \simeq -\frac{\text{d}}{\text{d}\sigma}F_\kappa(\sigma) = -F_\kappa(\sigma)\left(\left(\frac{1}{\kappa}\Mk(\sigma)+\frac{\sigma}{\kappa}\frac{\text{d}}{\text{d}\sigma}\Mk(\sigma)\right)\Ctr + I_d\right)F_\kappa(\sigma)\,.\end{equation} Given a particular problem with a particular $\Ctr$, each of these can be computed. \\

\noindent \textcolor{Maroon}{\textit{Self-Consistency equation for $\chi_0$}}
Recall to characterize $\Gamma^*_\mathrm{eq}$ we need to understand $\lambda(1+\chi_0) \equiv \tilde{\lambda}$. We now know this variable is defined by \begin{equation}\label{eq:mary_xi_def}
    \tilde{\lambda}\mathcal{M}_\kappa\left(\frac{\rhotr}{\alpha} + \tilde{\lambda}\right) - \frac{\lambda\tau}{\tilde{\lambda}} = 1 - \tau
\end{equation}
In the ridgeless $\lambda\to 0$ limit this equation becomes \begin{equation}\label{eq:mary_xi_ridgeless}
    \tilde{\lambda}\mathcal{M}_\kappa\left(\frac{\rhotr}{\alpha} + \tilde{\lambda}\right) = 1 - \tau\,.
\end{equation} While this equation has nontrivial solutions for $\tau < 1$, its only solution is $\tilde{\lambda} = 0$ when $\tau > 1$. \\

\noindent \textcolor{Maroon}{\textit{Final deterministic equivalence for $\Gamma^*$ in terms of Wishart resolvent.}} Recall from above that  
$$\Gamma^\ast_\mathrm{eq} = (B_{\text{IDG}} + \hat{\lambda}\,I_{d+1})^{-1}{\begin{bmatrix} \Rtr & \rhotr \btr\end{bmatrix}}$$
We can use the trick that 
\begin{equation}\label{eq:Atr_S}
\begin{bmatrix} \Rtr & \rhotr\btr\end{bmatrix} = S \left(B_{\text{IDG}} + \tilde{\lambda}_C I_{d+1} - \Big(\frac{\rhotr}{\alpha} + \tilde{\lambda}_C\Big) I_{d+1}\right)
\end{equation} where $S$ is an almost-identity matrix that selects top $d \times (d+1)$ of a $(d+1)\times (d+1)$ matrix. We can then simplify
\begin{align}\label{eq:Gamma_e_S}
    \Gamma^\ast_\mathrm{eq} &=  S \left(I - \Big(\frac{\rhotr}{\alpha}+\tilde{\lambda}\Big) (B_{\text{IDG}} + \tilde{\lambda} I)^{-1}\right)\\
    &\simeq\begin{bmatrix} I - \sigma F_R(\sigma)  & 0\end{bmatrix}\label{eq:Gamma_e_precise}\,.
\end{align} We stress that this is an approximation that comes from the full block inversion of the resolvent of $B_{\text{IDG}}$, but is robustly handled in \citet{lu2025asymptotictheoryincontextlearning}.
 
\subsection*{Characterization of Quadratic term in error.} 
The only remaining error term left to approximate is those with the form $\Gamma B \Gamma^\top$. This is what we introduced $\pi$ and $\pMe$ for. \begin{align}
        \frac{1}{d}\tr\left(I_d\Gamma \Bte \Gamma^\top\right) &= \frac{1}{d}\vecop(\Gamma)^\top \Pi \vecop(\Gamma) = \frac{\text{d}}{\text{d} \pi}\frac{1}{c(\pi)}(\pi=0) 
\end{align} for $\pM = I_d \otimes B_\mathrm{test}.$
The definition of $c(\pi)$ comes from the scalar term in the block inversion of $G_\mathrm{ext}(\pi)$, 
\begin{equation}\label{eq:Ge_equiv_block}
    {\mathcal{G}}_e(\ps) = \begin{bmatrix}
        c^\ast(\ps) & -c^\ast(\ps)\, (\bm{q}^\ast(\ps))^\top\\
        -c^\ast(\ps) \, \bm{q}^\ast(\ps) & I \otimes F_E(\chi_\ps)+ c^\ast(\ps) \bm{q}^\ast(\ps) (\bm{q}^\ast(\ps))^\top
    \end{bmatrix},
\end{equation} with 
\begin{equation}
    \frac{1}{c^\ast(\ps)} = \frac{\tau\rhotr}{1+\chi_\ps} + \lambda \tau- \frac{\tau^2}{(1+\chi_\ps)^2}\frac{1}{d}\tr\left(\begin{bmatrix}\Rtr &  \rhotr\btr\end{bmatrix}F_E(\chi_\ps)\begin{bmatrix}\Rtr &  \rhotr\btr\end{bmatrix}^\top\right).
\end{equation} with    
\begin{align}
   \chi_\ps &\simeq \frac{1}{d}\tr\left[\Big(\frac{\tau }{1+\chi_\ps} B_\text{IDG} + \ps B + \lambda \tau  I_{d+1}\Big)^{-1} B_\mathrm{IDG}\right] \\
   F_E(\chi_\pi) &= \left(\frac{\tau}{1+\chi_\pi} B_\text{IDG} +  \pi B
_\mathrm{test} + \tau\lambda I_{d+1}\right)^{-1}
\end{align}

Now following through the calculus, we get 
\begin{align}
\frac{\text{d}}{\text{d} \pi}F_E(\chi_\pi) &= -F_E(\chi_\pi)\left(B_\mathrm{test} - \frac{\tau}{(1+\chi_\pi)^2}\left(\frac{\text{d}}{\text{d} \pi}\chi_\pi\right)B_\mathrm{IDG}\right)F_E(\chi_\pi) \label{eq:diff_F_E_pi} \\
\frac{\text{d}}{\text{d} \pi}\chi_\pi &= \frac{1}{d}\tr\left[\left(\frac{\text{d}}{\text{d} \pi}F_E(\chi_\pi)\right)B_\text{IDG}\right] \label{eq:diff_chi_pi}
\end{align}

Plugging (\ref{eq:diff_F_E_pi}) into (\ref{eq:diff_chi_pi}) and simplifying for $\chi_0'$ gives \begin{equation}\label{eq:diff_chi0_full}
    \frac{\tau\chi'_0}{(1+\chi_0)^2} = \frac{\frac{1}{d}\tr\left(B_\mathrm{test}F_0B_\text{IDG}F_0\right)}{\frac{1}{d}\tr(F_0B_\text{IDG}F_0B_\text{IDG}) - \tau} = \frac{\frac{1}{d}\tr(B_\mathrm{test}F_0)-\tilde{\lambda}\frac{1}{d}\tr(B_\mathrm{test}F_0^2)}{1-2\tilde{\lambda}\frac{1}{d}\tr(F_0)+\tilde{\lambda}^2\frac{1}{d}\tr(F_0^2)-\tau}
\end{equation} for $$F_0\equiv \left(B_\text{IDG} + \tilde{\lambda} I_{d+1}\right)^{-1}\,,\qquad \tilde{\lambda}\equiv\lambda(1+\chi_0)\,.$$ 

We can finally simplify the quadratic term as \begin{align}\label{eq:final_quadratic_Btest_formula}
    \left(\frac{\text{d}}{\text{d} \pi}\frac{1}{c(\pi)}\right)(\pi=0) &= \frac{1}{d}\tr(\Gamma^\ast_\mathrm{eq} B_\mathrm{test} \Gamma^\ast_\mathrm{eq}) \nonumber\\&\quad - \tau \frac{\chi_0'}{(1+\chi_0)^2}\left(\rhotr - \frac{1}{d}\tr(\Gamma^\ast_\mathrm{eq} A_\text{IDG}^\top) - \tilde{\lambda}\frac{1}{d}\tr(\Gamma^\ast_\mathrm{eq} (\Gamma^\ast_\mathrm{eq}) ^\top)\right)
\end{align}

\subsection*{Dictionary of various deterministic equivalents}
\begin{lemma}\label{lemma:necessary_traces}
The following is a summary of formulas. Their derivations follow simply from the above discussions and characterizations. Let $$F_0\equiv \left(B_{\text{IDG}} + \tilde{\lambda} I_{d+1}\right)^{-1}\,,\qquad \tilde{\lambda}\equiv\lambda(1+\chi_0)\,$$ and $\Gamma^\ast_\mathrm{eq}$ as given by (\ref{eq:Gamma_e_precise}). Then 
\begin{align}
\tr[F_0(\tilde{\lambda})] &\simeq \mathcal{M}_\kappa(\sigma) \\
\tr[F^2_0(\tilde{\lambda})] &\simeq -\mathcal{M}'_\kappa(\sigma) \\
\tr[\Gamma^\ast_\mathrm{eq} A_\mathrm{IDG}^T] & \simeq \tr[\Ctr] - \sigma + 
    \sigma^2 \mathcal{M}_\tload(\sigma) \\
\tilde{\lambda} \tr[\Gamma^\ast_\mathrm{eq} (\Gamma^\ast_\mathrm{eq})^\top] &\simeq \tilde{\lambda}\left(1 - 2\sigma \mathcal{M}_\tload(\sigma) - \sigma^2 \mathcal{M}'_\tload(\sigma)\right) \\
\tr[\Gamma^\ast_\mathrm{eq} A_\mathrm{ICL}^\top] &\simeq \tr[\Ctst] - \sigma\tr[F_\kappa(\sigma)\Ctst] \\
\tr[B_\mathrm{IDG}F_0] &= 1- \tilde{\lambda}\Mk(\sigma) \\
\tr[B_{\mathrm{IDG}}F_0^2] &= \Mk(\sigma) + \tilde{\lambda}\Mk'(\sigma) \\
\tr[B_\mathrm{ICL}F_0] &\simeq \tr\left[\left(\Ctst+\frac{\rhotest}{\alpha_\mathrm{test}}I_d\right)F_\kappa(\sigma)\right] \label{eq:BF_simple} \\ 
\tr[B_\mathrm{ICL}F_0^2] &\simeq -\tr\left[\left(\Ctst+\frac{\rhotest}{\alpha_\mathrm{test}}I_d\right)F'_\kappa(\sigma)\right]  \label{eq:BFF_simple}  \\ 
\tr[\Gamma^\ast_\mathrm{eq} B_\mathrm{IDG}(\Gamma^\ast_\mathrm{eq})^\top] &\simeq  \tr[\Gamma^\ast_\mathrm{eq} A_\mathrm{IDG}^\top] - \tilde{\lambda}\tr[\Gamma^\ast_\mathrm{eq}(\Gamma^\ast_\mathrm{eq})^\top] \\
\tr[\Gamma^\ast_\mathrm{eq} B_\mathrm{ICL}(\Gamma^\ast_\mathrm{eq})^\top] &\simeq \tr\left[\left(\Ctst+\frac{\rhotest}{\alpha_\mathrm{test}}I_d\right)\left(I_d - 2\sigma F_\kappa(\sigma) -
\sigma^2 F'_\kappa(\sigma)\right)\right]
\end{align}
\end{lemma}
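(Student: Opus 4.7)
The strategy is entirely computational: take each trace, reduce it to a trace over the $d\times d$ block using the block structure of $B_{\mathrm{IDG}}+\tilde{\lambda}I_{d+1}$, then invoke the deterministic equivalent $F_R(\sigma)\simeq F_\kappa(\sigma)$ established in (\ref{def:F_appendix}). First, I would block-invert
\begin{equation*}
B_{\mathrm{IDG}}+\tilde{\lambda}I_{d+1}=\begin{bmatrix}\Rtr+\sigma I_d & \rhotr\btr\\ \rhotr\btr^\top & \rhotr^2+\tilde{\lambda}\end{bmatrix}
\end{equation*}
via Schur complement, noting that $\btr$ is a sample mean of $k=\Theta(d)$ tasks, so $\btr\btr^\top$ is a rank-one matrix of bounded operator norm. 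This gives $F_0=\begin{bmatrix}F_R(\sigma) & 0\\ 0 & 0\end{bmatrix}+(\text{rank-one})$, and any rank-one correction contributes $O(1/d)$ to the normalized traces $\tr[MF_0]$, $\tr[MF_0^2]$ for bounded spectral-norm test matrices $M$. This justifies replacing $F_0$ by $\mathrm{diag}(F_R(\sigma),0)$ throughout, and then applying $F_R\simeq F_\kappa$ (and its derivative identity $F_R^2\simeq -F'_\kappa$, which follows from differentiating the deterministic equivalent in $\sigma$).

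Next, I would handle the $B_{\mathrm{IDG}}$-based traces by the resolvent identity
\begin{equation*}
B_{\mathrm{IDG}}F_0=I_{d+1}-\tilde{\lambda}F_0,
\end{equation*}
which immediately gives $\tr[B_{\mathrm{IDG}}F_0]\simeq 1-\tilde{\lambda}\Mk(\sigma)$ using $\tr[F_0]\simeq \Mk(\sigma)$. Differentiating this in $\tilde{\lambda}$ (and using $\partial_{\tilde{\lambda}}F_0=-F_0^2$, $\partial_{\tilde{\lambda}}\sigma=1$) yields the $F_0^2$ version. The $B_{\mathrm{ICL}}$-based traces follow similarly: since $B_{\mathrm{ICL}}$ is block-diagonal with upper block $\Ctst+(\rhotest/\alpha_{\mathrm{test}})I_d$ and a single scalar lower-right entry, only the upper-left block of $F_0$ matters up to $O(1/d)$, yielding (\ref{eq:BF_simple}) and (\ref{eq:BFF_simple}) directly.

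For the $\Gamma^\ast_{\mathrm{eq}}$-based traces, I would use the explicit form $\Gamma^\ast_{\mathrm{eq}}\simeq [I-\sigma F_\kappa(\sigma),\,0]$ from (\ref{eq:Gamma_e_precise}). The linear traces $\tr[\Gamma^\ast_{\mathrm{eq}}A_{\mathrm{IDG}}^\top]$ and $\tr[\Gamma^\ast_{\mathrm{eq}}A_{\mathrm{ICL}}^\top]$ reduce to $\tr[(I-\sigma F_R)\Rtr]$ and $\tr[(I-\sigma F_R)\Ctst]$ respectively; the first simplifies via $F_R\Rtr=I-\sigma F_R$, the second via the deterministic equivalent directly. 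The quadratic identity $\tr[\Gamma^\ast_{\mathrm{eq}}B_{\mathrm{IDG}}(\Gamma^\ast_{\mathrm{eq}})^\top]=\tr[\Gamma^\ast_{\mathrm{eq}}A_{\mathrm{IDG}}^\top]-\tilde{\lambda}\tr[\Gamma^\ast_{\mathrm{eq}}(\Gamma^\ast_{\mathrm{eq}})^\top]$ follows by writing $\Gamma^\ast_{\mathrm{eq}}=A_{\mathrm{IDG}}F_0$ (from (\ref{eq:Gamma_e_S})) and invoking the resolvent identity again. The $B_{\mathrm{ICL}}$ quadratic form expands directly since $B_{\mathrm{ICL}}$ and $\Gamma^\ast_{\mathrm{eq}}$ have compatible block-diagonal/zero structure.

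The main technical obstacle is controlling the rank-one perturbations induced by $\btr$ throughout the block inversion and in the approximation $\Gamma^\ast_{\mathrm{eq}}\simeq [I-\sigma F_R,\,0]$: one must verify that each such correction contributes $O(1/d)$ to the normalized traces, uniformly over the self-consistent solutions, so that the deterministic equivalences compose as claimed. This is a standard anisotropic local-law-type argument and proceeds identically to the treatment in \citet{lu2025asymptotictheoryincontextlearning}; the only new ingredient is tracking the anisotropy of $\Ctr$ through the Wishart resolvent equivalence (\ref{def:F_appendix}), which is already stated and can be invoked directly.
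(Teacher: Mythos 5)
Your plan is essentially the same approach the paper implicitly takes: the lemma is stated as a direct consequence of the earlier block-inversion giving $F_0 \simeq \mathrm{diag}(F_R(\sigma),0)$ up to negligible rank-one $\btr$ corrections, the deterministic equivalences $F_R(\sigma) \simeq F_\kappa(\sigma)$ and $F_R(\sigma)^2 \simeq -F'_\kappa(\sigma)$, the form $\Gamma^\ast_{\mathrm{eq}} \simeq [I - \sigma F_R(\sigma),\,0]$ from (\ref{eq:Gamma_e_precise}), and the resolvent identity $B_{\mathrm{IDG}} F_0 = I_{d+1} - \tilde{\lambda} F_0$, and you have reproduced each of these ingredients and combined them correctly. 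Your argument that $\|\btr\| = O(1/\sqrt{k})$ makes the Schur-complement correction negligible in normalized traces matches the paper's heuristic block inversion, and the rest is the advertised algebra.
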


\newpage
\section{Proofs of Asymptotic IDG and ICL Error Characterization}\label{sec:asymp_result_proofs} 

We finally have all the necessary components to write down the deterministic equivalents of the IDG and ICL errors given by Lemma \ref{res:populationrisk_SIGMA} (or (\ref{eq:general_idg_trace_form}) and (\ref{eq:general_icl_trace_form})). This will comprise our main theoretical result. 

\begin{tcolorbox}[colframe=NavyBlue, opacityback=0.95, title=\textbf{Result:} High-dimensional formula for the IDG error] Let \begin{align}
    F_\kappa(z) &= \left(\left(1-\frac{1}{\kappa}+\frac{z}{\kappa}\mathcal{M}_\kappa(z)\right)\Ctr +z I_d\right)^{-1}\, &\mathcal{M}_\kappa(z) = \frac{1}{d}\tr(F_\kappa(z)) \, \\
    F'_\kappa(z) &= -F_\kappa(z)\left(\left(\frac{1}{\kappa}\Mk(z) +\frac{z}{\kappa}\Mk'(z)\right)\Ctr + I_d\right)F_\kappa(z) = \frac{\text{d}}{\text{d}z}F_\kappa(z)\, &\Mk'(z) = \frac{1}{d}\tr(F_\kappa'(z))
\end{align} and \begin{align}
&\tilde{\lambda}\mathcal{M}_\kappa\left(\sigma\right) -\lambda(\tau/\tilde{\lambda}) = 1-\tau\,, \quad \tilde{\lambda} \geq 0 \label{eq:tilde_lambda_appendix_general_ridge}\\
    &\sigma = (\rho + \ctr)/{\alpha_\mathrm{train}} + \tilde{\lambda} \,. 
\end{align} 
Then we have the deterministic equivalent of (\ref{eq:idg_def}) as
    \begin{align}
        \mathcal{E}_\mathrm{IDG}(\Gamma^*) &\simeq \tau\frac{\rho + \sigma - \sigma^2\mathcal{M}_\kappa(\sigma)-\tilde{\lambda}(1-2\sigma\mathcal{M}_\kappa(\sigma)-\sigma^2\mathcal{M}'_\kappa(\sigma))}{\tau - (1-2\tilde{\lambda}\mathcal{M}_\kappa(\sigma)-\tilde{\lambda}^2\mathcal{M}'_\kappa(\sigma))} \label{eq:full_IDG_formula_appendix_result}\\
        &\equiv \eidg(\bm{\lambda}_\mathrm{train}) \nonumber
    \end{align}
\end{tcolorbox}

\begin{tcolorbox}[colframe=NavyBlue, opacityback=0.95, title=\textbf{Result:} High-dimensional formula for the ICL error]Take $\mathcal{M}_\kappa(\sigma),\mathcal{M}'_\kappa(\sigma), F, F', \sigma,$ and $\tilde{\lambda}$ as defined as above, and $q \equiv \eidg(\bm{\lambda}_\mathrm{train})/\tau$. The deterministic equivalent of (\ref{eq:icl_def}) is then 
\begin{align} 
\mathcal{E}_\mathrm{ICL}(\Gamma^*) &\simeq \rho + \frac{\rho+\ctst}{\alpha_\mathrm{test}}\left(1+(q-2\sigma)\Mk(\sigma) + (q\tilde{\lambda}-\sigma^2)\Mk'(\sigma)\right) \nonumber \\
&\qquad \, + \, \, q\tr\left[\Ctst F_\kappa(\sigma)\right] + (q\tilde{\lambda}-\sigma^2)\tr\left[\Ctst F'_\kappa(\sigma)\right]\, \label{eq:full_ICL_formula_appendix_result}\\
&\equiv \eicl(\Ctr,\Ctst) \nonumber\,.
\end{align}
\end{tcolorbox}

\begin{proof} Using Lemma \ref{lemma:necessary_traces} and (\ref{eq:final_quadratic_Btest_formula}) in (\ref{eq:general_idg_trace_form}) gives deterministic equivalent for $\mathcal{E}_\mathrm{IDG}(\Gamma^*)$ as \begin{align}
    \mathcal{E}_\mathrm{IDG}(\Gamma^*) & \simeq\rho + \ctr - 2(\ctr + \sigma + \sigma^2\Mk(\sigma)) + \ctr + \sigma \nonumber\\&\quad + \sigma^2\Mk(\sigma) - \tilde{\lambda}(1 - 2\sigma\Mk(\sigma) - \sigma^2\Mk'(\sigma)) \nonumber\\&\quad - \frac{1-2\tilde{\lambda}\Mk(\sigma) - \tilde{\lambda}^2\Mk'(\sigma)}{-\tau + 1 - 2\tilde{\lambda}\Mk(\sigma) - \tilde{\lambda}^2\Mk'(\sigma)} \nonumber\\&\qquad \times \left(\rhotr - (\ctr + \sigma + \sigma^2\Mk(\sigma)) - \tilde{\lambda}(1 - 2\sigma\Mk(\sigma) - \sigma^2\Mk'(\sigma))\right) \\
    &= \left(\rho - \sigma - \sigma^2\Mk(\sigma) - \tilde{\lambda}(1 - 2\sigma\Mk(\sigma) - \sigma^2\Mk'(\sigma))\right) \nonumber\\&\quad \times \left(1 -  \frac{1-2\tilde{\lambda}\Mk(\sigma) - \tilde{\lambda}^2\Mk'(\sigma)}{-\tau + 1 - 2\tilde{\lambda}\Mk(\sigma) - \tilde{\lambda}^2\Mk'(\sigma)}\right) \\
    &= \tau\frac{\rho - \sigma - \sigma^2\Mk(\sigma) - \tilde{\lambda}(1 - 2\sigma\Mk(\sigma) - \sigma^2\Mk'(\sigma))}{\tau - (1-2\tilde{\lambda}\Mk(\sigma) - \tilde{\lambda}^2\Mk'(\sigma))}
\end{align} as required. 

Using Lemma \ref{lemma:necessary_traces} and  (\ref{eq:final_quadratic_Btest_formula}) in (\ref{eq:general_icl_trace_form}) gives \begin{align}
    \mathcal{E}_\mathrm{ICL}(\Gamma^*) \simeq &\quad \rho + c_\text{test} \\
    &-2\left(c_\text{test} -\sigma\tr\left[C_\mathrm{test}F_\kappa(\sigma)\right]\right) \\
    &+ \tr\left[\left(C_\mathrm{test}+\frac{\rho+c_\text{test}}{{\alpha}_\text{test}}I_d\right)\left(I_d - 2\sigma F_\kappa(\sigma) -
\sigma^2 F'_\kappa(\sigma) \right)\right] \\
&+ \tr\left[\left(C_\mathrm{test}+\frac{\rho + c_\text{test}}{{\alpha}_\text{test}}I_d\right)\left(F_\kappa(\sigma) + \tilde{\lambda} F'_\kappa(\sigma) \right)\right] \nonumber\\&\qquad \times \frac{\rho + \sigma - \sigma^2\mathcal{M}_\kappa(\sigma)-\tilde{\lambda}(1-2\sigma\mathcal{M}_\kappa(\sigma)-\sigma^2\mathcal{M}'_\kappa(\sigma))}{\tau -(1-2\tilde{\lambda}\mathcal{M}_\kappa(\sigma)-\tilde{\lambda}^2\mathcal{M}'_\kappa(\sigma))}\,.
\end{align} Replacing \begin{equation} q \equiv \frac{\rho + \sigma - \sigma^2\mathcal{M}_\kappa(\sigma)-\tilde{\lambda}(1-2\sigma\mathcal{M}_\kappa(\sigma)-\sigma^2\mathcal{M}'_\kappa(\sigma))}{\tau -(1-2\tilde{\lambda}\mathcal{M}_\kappa(\sigma)-\tilde{\lambda}^2\mathcal{M}'_\kappa(\sigma))}\label{eq:defintion_of_q}\end{equation} and gathering $\Ctst$ terms we get 
\begin{align}
    e_\text{ICL}(\Ctr,\Ctst) &\simeq \rho + \frac{\rho+\ctst}{\alpha_\text{test}}\left(1 + (q-2\sigma)\Mk(\sigma) + (q\tilde{\lambda}-\sigma^2)\Mk'(\sigma)\right) \nonumber\\&\quad + \langle \Ctst, qF_\kappa(\sigma) + (q\tilde{\lambda}-\sigma^2)F'_\kappa(\sigma) \rangle
\end{align} 
as required. The remaining terms only depending on the test distribution through $\ctst$, and thus not containing any structural information about the testing tasks, we call \begin{equation}\label{eq:definition_of_escalar}
    e_\text{scalar}(\bm{\lambda}_\mathrm{train},\ctst) \equiv  \rho + \frac{\rho+\ctst}{\alpha_\text{test}}\left(1 + (q-2\sigma)\Mk(\sigma) + (q\tilde{\lambda}-\sigma^2)\Mk'(\sigma)\right)
\end{equation} and the remaining $\Ctst$-dependent terms form the $\ealign(\Ctr,\Ctst)$ contribution.
\end{proof}

Note that the above results define $\tilde{\lambda}$ self-consistently in (\ref{eq:tilde_lambda_appendix_general_ridge}) for general ridge $\lambda$. For the main document we have explicitly taken the ridgeless limit, although in general this is not necessary and one can work from (\ref{eq:tilde_lambda_appendix_general_ridge}) and use this $\tilde{\lambda}$ in all formulas. \\

\begin{remark}
    The ridgeless $\lambda\to 0$ limit of (\ref{eq:tilde_lambda_appendix_general_ridge}) \begin{equation}\label{eq:xi_defin_ridgeless}
    \tilde{\lambda}\mathcal{M}_\kappa\left(\frac{\ctr + \rho}{\alpha_\mathrm{train}} + \tilde{\lambda}\right) = 1-\tau
\end{equation} exhibits different behavior for $\tau > 1$ and $\tau < 1$. For $\tau > 1$ in the ridgeless limit, we must have $\tilde{\lambda} = 0$ explicitly as $\tilde{\lambda}\geq0$, $\Mk(\cdot)\geq0$, $1-\tau < 0$. 
\end{remark} 

We can solve some of the self-consistency equations here explicitly in the isotropic case and recover previous results. \\

\begin{remark} In the ridgeless limit $\lambda \to 0$ with isotropic $\Ctr = \ctr I_d$, have explicitly
\begin{align}
\Mk(\sigma) &= 2\left(\sigma + \ctr - \frac{\ctr}{\kappa} + \sqrt{\left(\sigma + \ctr - \frac{\ctr}{\kappa}\right)^2 + 4\ctr\frac{\sigma}{\kappa}}\right)^{-1}\, \\
\tilde{\lambda} &= \begin{cases}
    \left(\frac{1-\tau}{\tau}\mathcal{M}_{\kappa/\tau}\left(\frac{\rho + \ctr}{\alpha_\mathrm{train}}\right)\right)^{-1}\, & \tau < 1 \\
    0\, & \tau  > 1
\end{cases}\end{align}
For $\ctr=1, \Ctst=I_d,$ and $\alpha_\mathrm{train} = \alpha_\mathrm{test}$ this recovers the theoretical results of \cite{lu2025asymptotictheoryincontextlearning}.
\end{remark}

\newpage
\section{Ordering of Eigenvalues}\label{sec:eigenvalueordering}

Here we analyze the spectrum of the $\Ctr$-dependent component of the misalignment error term (\ref{eq:ealign_definition}), $$\mathcal{K}  \equiv qF_\kappa(\sigma) + (q\tilde{\lambda}-\sigma^2)F'_\kappa(\sigma)\,.$$ Given the self-consistent definitions of $F_\kappa(\sigma)$ and $F'_\kappa(\sigma)$ in (\ref{def:F}) and (\ref{def:Fprime_appendix}), we can write the eigenvalues of $\mathcal{K}$ as 
$$\lambda_i(\mathcal{K}) = \frac{1}{A_\kappa\lambda_i(\Ctr) + \sigma}\left(q + (\sigma^2-q\tilde{\lambda})\frac{B_\kappa\lambda_i(\Ctr) + 1}{A_\kappa\lambda_i(\Ctr) + \sigma}\right)\,,$$ 
for \begin{align}
        A_\kappa &\equiv 1 - \frac{1}{\kappa}+\frac{\sigma}{\kappa}\Mk(\sigma) \\
        B_\kappa &\equiv \frac{1}{\kappa}\Mk(\sigma) + \frac{\sigma}{\kappa}\Mk'(\sigma)\,.
    \end{align} 

\begin{tcolorbox}[colframe=NavyBlue, opacityback=0.95, title={\bfseries Conjecture:} Eigenvalue Ordering]
We wish to show that $$\lambda_1(\mathcal{K}) \leq \lambda_2(\mathcal{K}) \leq  \cdots \leq  \lambda_d(\mathcal{K})$$ for $$\lambda_1(\Ctr) > \lambda_2(\Ctr) > \cdots > \lambda_d(\Ctr)\,, $$ \textit{i.e.} the ordering of eigenvalues of $\mathcal{K}$ is opposite that of $\Ctr$. 
\end{tcolorbox}

We will separate this into two cases. For $\tau>1$ we provide a rigorous proof. For $\tau < 1$, due to the unclear sign of $\sigma^2 - q\tilde{\lambda}$ we cannot currently provide a rigorous proof with the same methodology. Instead, we provide numerical plots supporting our claim, with the goal of proving this formally for the $\tau < 1$ case in subsequent iterations of this work. \\

\begin{lemma*}
    We have that $1-\kappa \leq \sigma\Mk(\sigma) \leq 1$ and $\sigma^2\Mk'(\sigma) \leq \kappa-1.$ 
\end{lemma*}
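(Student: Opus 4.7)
The plan is to pass to the random-matrix interpretation encoded by the deterministic equivalence $F_\kappa(\sigma)\simeq(\Rtr+\sigma I_d)^{-1}$ from Definition \ref{def:deterministicequivalence}. Taking normalized traces, and differentiating in $\sigma$ before tracing, yields
\[
\sigma\,\Mk(\sigma)\simeq\frac{1}{d}\sum_{i=1}^{d}\frac{\sigma}{\lambda_i(\Rtr)+\sigma},\qquad -\sigma^{2}\,\Mk'(\sigma)\simeq\frac{1}{d}\sum_{i=1}^{d}\frac{\sigma^{2}}{(\lambda_i(\Rtr)+\sigma)^{2}}.
\]
Each summand in either representation lies in $(0,1]$ since $\lambda_i(\Rtr)\geq 0$ and $\sigma>0$, which immediately gives the upper bound $\sigma\Mk(\sigma)\leq 1$.

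For the lower bound on $\sigma\Mk(\sigma)$, I would exploit the rank structure of $\Rtr=k^{-1}\sum_{j\in[k]}\T_j\T_j^\top$, which has rank at most $k$. When $\kappa\leq 1$ (i.e.\ $k\leq d$), at least $(1-\kappa)d$ eigenvalues of $\Rtr$ vanish, and each zero eigenvalue contributes exactly $\sigma/\sigma=1$ to the first sum while the remaining summands are nonnegative. This forces $\sigma\Mk(\sigma)\geq\max(0,1-\kappa)\geq 1-\kappa$; when $\kappa>1$ the inequality is vacuous because the left side is nonnegative.

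The same rank-counting argument yields the derivative bound: those $(1-\kappa)d$ zero eigenvalues each contribute $\sigma^{2}/\sigma^{2}=1$ to the second sum, so $-\sigma^{2}\Mk'(\sigma)\geq\max(0,1-\kappa)$ and hence $\sigma^{2}\Mk'(\sigma)\leq\min(0,\kappa-1)\leq\kappa-1$.

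The main technical subtlety I anticipate is justifying the interchange of differentiation with the deterministic-equivalent limit, i.e.\ that $\Mk'(\sigma)$ is indeed the large-$d$ limit of $-d^{-1}\tr((\Rtr+\sigma I_d)^{-2})$ rather than some more intricate object arising from the self-consistency equation for $F'_\kappa$. Since we restrict to $\sigma\geq(\rho+\ctr)/\alpha>0$, the resolvent is uniformly bounded in operator norm and depends analytically on $\sigma$ there, so a standard Vitali-type argument for Stieltjes transforms lets differentiation commute with the high-dimensional limit on the positive real axis; once this is in place, the lemma reduces to the per-eigenvalue inequalities above.
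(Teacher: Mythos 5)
Your proof is correct but takes a genuinely different route from the paper's. The paper argues via monotonicity in $\sigma$: since $\Rtr$ is PSD, each eigenvalue of $\sigma(\Rtr+\sigma I_d)^{-1}$ is non-negative and non-decreasing in $\sigma$, hence $\sigma\Mk(\sigma)\geq\lim_{\sigma\to0^+}\sigma\Mk(\sigma)$ and $\sigma^2\Mk'(\sigma)\leq\lim_{\sigma\to0^+}\sigma^2\Mk'(\sigma)$; it then evaluates those limits by rewriting the self-consistency equation for $\Mk$ in terms of a renormalized ridge $\tilde\sigma$ and tracking its $\sigma\to0$ behavior, obtaining $\max(0,1-\kappa)$ and $\min(0,\kappa-1)$ respectively. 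You bypass the monotonicity and self-consistency machinery entirely: you pass to the finite-$d$ spectral sums $d^{-1}\sum_i\sigma/(\lambda_i(\Rtr)+\sigma)$ and $d^{-1}\sum_i\sigma^2/(\lambda_i(\Rtr)+\sigma)^2$, observe that $\Rtr=k^{-1}\sum_j\T_j\T_j^\top$ has rank at most $k$ so at least $(1-\kappa)d$ eigenvalues vanish when $\kappa<1$, and note that each zero eigenvalue contributes a full unit to either normalized sum while the other summands lie in $[0,1]$. Both arguments ultimately read off the spectral mass of $\Rtr$ at zero, but the paper does it through the Stieltjes transform's $\sigma\to0$ asymptotics while you count it directly from the rank deficit; yours is the more elementary route, at the cost of leaning explicitly on the deterministic equivalence $\Mk(\sigma)\simeq d^{-1}\tr\left[(\Rtr+\sigma I_d)^{-1}\right]$ and its derivative, a point you correctly flag as requiring a (standard, for $\sigma$ bounded away from zero) justification. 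One thing worth noting: the paper's heavier route is not wasted work in context, since the exact $\sigma\to0$ limits it computes here are reused verbatim in the phase-transition analysis of Appendix~\ref{sec:phase_transition_general}, whereas your rank argument would have to be supplemented there.
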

\begin{proof}

$R_k$ is a positive semi-definite matrix and so each eigenvalue of $\sigma(R_k+\sigma I_d)^{-1}$ is a non-negative increasing function of $\sigma \geq 0$. Therefore $\sigma\tr[(R_k+\sigma I_d)^{-1}]$ is increasing in $\sigma$ and so $$\lim_{d\to\infty}\sigma\tr[(R_k+\sigma I_d)^{-1}] = \sigma\Mk(\sigma)$$ is an increasing non-negative function of $\sigma$. Thus we have $$0 \leq \lim_{\sigma\to 0}\sigma\Mk(\sigma) \leq \sigma\Mk(\sigma).$$ Also as $R_k$ is positive definite it is obvious that $$ \sigma\Mk(\sigma) = \lim_{d\to\infty}\sigma\tr[(R_k+\sigma I_d)^{-1}] \leq 1.$$ By an equivalent argument, $$\sigma^2\Mk'(\sigma) \leq \lim_{\sigma\to 0}\sigma^2\Mk'(\sigma) \leq 0\,.$$

    It's helpful to first write the self-consistency equation \begin{equation}\label{eq:marystieltjesdef}
    \mathcal{M}_\kappa(\sigma) = \tr\left[
    \left(\left(1-\frac{1}{\kappa}+\frac{\sigma}\kappa{\mathcal{M}_\kappa(\sigma)}\right)\Ctr + \sigma I_d\right)^{-1} \right] 
\end{equation} for $\Mk(\sigma)$ equivalently as a self-consistency equation for the ``renormalized ridge'' $\tilde{\sigma}$ of this Wishart resolvent, defined by $$\tilde{\sigma}\tr[(\Ctr + \tilde{\sigma})^{-1}] = \sigma\Mk(\sigma)$$ or equivalently \begin{equation}\label{eq:tildenu}\tilde{\sigma} = \frac{\sigma}{1-\frac{1}{\kappa} + \frac{\tilde{\sigma}}{\kappa}\tr\left[\left(\Ctr + \tilde{\sigma}\right)^{-1}\right]}\,.\end{equation} In the $\sigma\to 0$ limit, $\tilde{\sigma}$ limits to 0 when $\kappa > 1$, and limits to the unique solution of \begin{equation}\label{eq:nu_tilde_self_consistency_limit}
    1-\frac{1}{\kappa} + \frac{\tilde{\sigma}}{\kappa}\tr\left[\left(\Ctr + \tilde{\sigma}\right)^{-1}\right] = 0 
\end{equation} for $\kappa < 1$. For our purposes, this means that \begin{equation}\label{eq:nu_to_0_M}
    \lim_{\sigma\to 0} \sigma\Mk(\sigma) = \begin{cases}
        1-\kappa\,, & \kappa < 1\\
        0\,, & \kappa > 1\,
    \end{cases}\,.
\end{equation} Differentiating with respect to $\sigma$ gives \begin{align}\label{eq:nu_to_0_Mprime}
    \lim_{\sigma\to 0} \sigma^2\Mk'(\sigma) &= \begin{cases}
        \kappa - 1\,, & \kappa < 1\\
        0\,, & \kappa > 1\,
    \end{cases}\, \\
    &\leq \kappa -1 \nonumber 
\end{align} 
and so $1-\kappa \leq \sigma\Mk(\sigma)$ and $\sigma^2\Mk'(\sigma) \leq \kappa -1$.
\end{proof}

With this lemma in hand, we can prove the main claim. For $\tau>1$, we have $\tilde{\lambda} = 0$ and so $\sigma = (\rho+\ctr)/\alpha$ and $$\lambda_i(\mathcal{K}) = \frac{1}{A_\kappa\lambda_i(\Ctr) + \sigma}\left(q + \sigma^2\frac{B_\kappa\lambda_i(\Ctr) + 1}{A_\kappa\lambda_i(\Ctr) + \sigma}\right)\,.$$ 
Without loss of generality, let's compare $\lambda_1(\mathcal{K})$ and $\lambda_2(\mathcal{K})$. First, we can see from the Lemma above that $A_{\kappa} \geq 0$, and thus we have that $$\frac{1}{A_\kappa\lambda_1(\Ctr) + \sigma} \leq \frac{1}{A_\kappa\lambda_2(\Ctr) + \sigma}$$.

Furthermore, have $$A_\kappa - \sigma B_\kappa = \frac{1}{\kappa}((\kappa-1) - \sigma^2\Mk'(\sigma)).$$ We reason about this quantity as follows. We have that $A_\kappa - \sigma B_\kappa \geq 0$ as a result of this lemma, and so $(A_\kappa - \sigma B_\kappa)\lambda_2 \leq (A_\kappa - \sigma B_\kappa)\lambda_1$. Rearranging gives $$\frac{B_\kappa\lambda_1(\Ctr) + 1}{A_\kappa\lambda_1(\Ctr) + \sigma} \leq \frac{B_\kappa\lambda_2(\Ctr) + 1}{A_\kappa\lambda_2(\Ctr) + \sigma}$$ and so we are done as we've shown $\lambda_1(\mathcal{K}) \leq \lambda_2(\mathcal{K})$ for $\lambda_1(\Ctr) >\lambda_2(\Ctr)$.

For $\tau < 1$, $\tilde{\lambda} \neq 0$ and so the negative-definite contribution of $q\tilde{\lambda}F'_\kappa(\sigma)$ complicates the above argument. We provide preliminary numerical evidence for this case as follows. We compute $\tilde{\lambda}$ numerically from its self-consistency equation (\ref{def:lambdatildedefinition}) to compute $\mathcal{K}$, of which we plot its eigenvalues against the eigenvalues of $\Ctr$. As demonstrated schematically in Figure \ref{fig:oppositeig}, for $\tau$ values less than 1, it is consistently the case that the eigenvalues of $\mathcal{K}$ are negatively correlated with the eigenvalues of $\Ctr$. We emphasize that this is a heuristic check, and we plan to prove this rigorously in future iterations of this work. 

\begin{figure}[ht!]
    \centering
    \includegraphics[width=\linewidth]{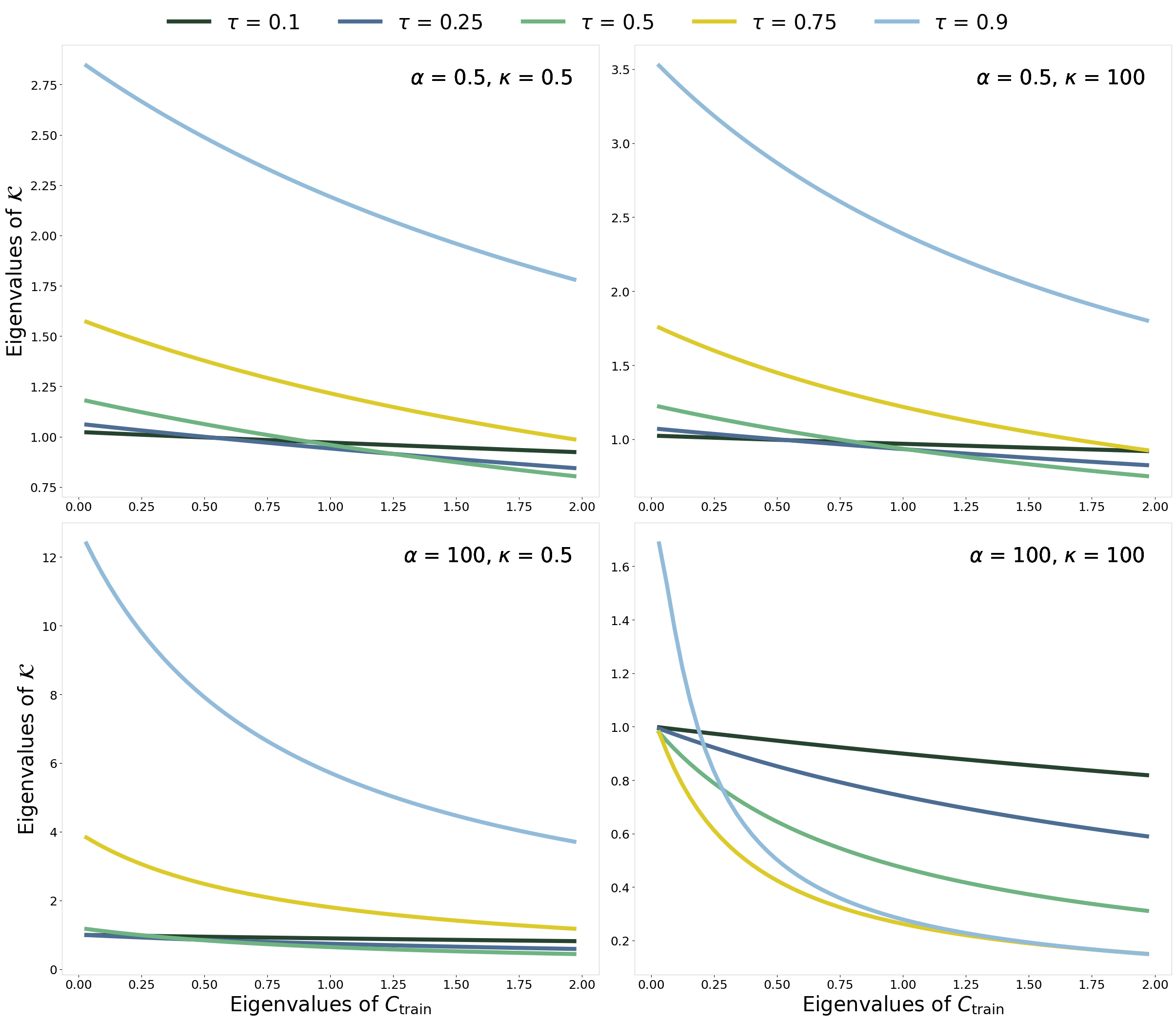}
    \caption{Demonstration of opposing eigenvalues for $\tau < 1$ values, at a range of $\kappa$ and $\alpha$ values. $\Ctr$ here is the same as in Figure \ref{fig:FIGURE2_linearalignments}, i.e. powerlaw spectrum.}
    \label{fig:oppositeig}
\end{figure}

\section{Trace Inequalities for Misalignment}\label{app:ruhe}

In this appendix, we give a self-contained expository proof of Ruhe's trace inequality in the form we require for Corollary \ref{res:eigenspacealignment}. We actually present two lines of analysis: first a direct proof of the inequality, and then an alternative analysis based on Riemannian optimization.

We begin by recalling the general setup: let $A$ and $B$ be $d\times d$ real symmetric matrices. By the spectral theorem, they are orthogonally diagonalizable, with non-increasingly ordered real eigenvalues $\lambda_{1}(A) \geq \lambda_{2}(A) \geq \cdots \geq \lambda_{d}(A)$ and $\lambda_{1}(B) \geq \lambda_{2}(B) \geq \cdots \geq \lambda_{d}(B)$, respectively. We want to show that
\begin{align}
    \sum_{j=1}^{d} \lambda_{j}(A) \lambda_{d-j+1}(B) \leq \tr(AB) \leq \sum_{j=1}^{d} \lambda_{j}(A) \lambda_{j}(B), 
\end{align}
with equality attained in either bound if and only if $A$ and $B$ are co-diagonalizable. 

Writing the eigendecompositions of $A$ and $B$ as 
\begin{align}
    A = O_{A} \Lambda_{A} O_{A}^{\top} \quad \textrm{and} \quad B = O_{B} \Lambda_{B} O_{B}^{\top}, 
\end{align}
respectively, where $O_{A} O_{A}^{\top} = I_d$, $O_{B}O_{B}^{\top} = I_{d}$, $[\Lambda_{A}]_{ij} = \lambda_{i}(A) \delta_{ij}$, and $[\Lambda_{B}]_{ij} = \lambda_{i}(B) \delta_{ij}$, we see immediately that it suffices to consider the case in which $A$ is diagonal, with non-decreasing elements, as 
\begin{align}
    \tr(AB) = \tr(\Lambda_{A} O \Lambda_{B} O^{\top} ) \quad \textrm{where} \quad O \equiv O_{A}^{\top} O_{B}, 
\end{align}
and the matrix $O \Lambda_{B} O^{\top}$ has the same eigenvalues as $B$. We can see that $A$ and $B$ are co-diagonalizable if and only if $O \Lambda_{B} O^{\top}$ is diagonal. Moreover, we can see that the claimed bounds can be attained only if $O \Lambda_{B} O^{\top}$ is in fact equal to $\Lambda_{B}$ up to permutation of its diagonal elements. What remains is to show that they are in fact bounds. 

\subsection{Direct proof of Ruhe's inequality}

We now give a direct proof of the claimed inequality. We remark that the version of Ruhe's inequality proved in \citet{marshall2010majorization} is not sufficient for our purposes, as it assumes that both $A$ and $B$ are positive semi-definite, \emph{i.e.}, that $\lambda_{d}(A) \geq 0$ and $\lambda_{d}(B) \geq 0$. We instead follow the proof for general Hermitian matrices outlined in \citet{li2020vonneumann}'s blog post.  

The strategy of this proof is to proceed by induction on the matrix dimension $d$. The claim clearly holds for $d=1$, as then $A$ and $B$ are scalars equal to their only eigenvalue, and both bounds collapse. Now consider $d>1$. Assuming---as noted before---that $A$ is diagonal with elements given by its ordered eigenvalues, we have
\begin{align}
    \tr(AB) 
    &= \sum_{j=1}^{d} \lambda_{j}(A) B_{jj}
    \\
    &= \sum_{j=1}^{d} [\lambda_{j}(A) - \lambda_{d}(A) + \lambda_{d}(A) ] B_{jj}
    \\
    &= \sum_{j=1}^{d-1} [\lambda_{j}(A) - \lambda_{d}(A)] B_{jj} + \sum_{j=1}^{d} \lambda_{d}(A) B_{jj} . 
\end{align}
We now observe that 
\begin{align}
    \sum_{j=1}^{d-1} [\lambda_{j}(A) - \lambda_{d}(A)] B_{jj} = \tr( \Delta \tilde{B} ),
\end{align}
where $\Delta$ is a $(d-1)\times(d-1)$ diagonal matrix with $\Delta_{jj} = \lambda_{j}(A) - \lambda_{d}(A)$ and $\tilde{B}$ is the $(d-1)\times(d-1)$ principal submatrix of $B$ given by discarding its last row and column. The ordering of the eigenvalues of $A$ implies the ordering 
\begin{equation}
    \lambda_{1}(A) - \lambda_{d}(A) \geq \cdots \geq \lambda_{d-1}(A) - \lambda_{d}(A) \geq 0 
\end{equation}
of the eigenvalues of $\Delta$. 

Thus, $\tr(\Delta \tilde{B})$ is the trace of a product of $(d-1)\times(d-1)$ real symmetric matrices with $\Delta$ having its eigenvalues in non-decreasing order along the diagonal, so on the induction hypothesis we have the bound
\begin{align}
    \sum_{j=1}^{d-1} [\lambda_{j}(A) - \lambda_{d}(A)] \lambda_{d-j}(\tilde{B}) \leq \tr( \Delta \tilde{B} ) \leq \sum_{j=1}^{d-1} [\lambda_{j}(A) - \lambda_{d}(A)] \lambda_{j}(\tilde{B}) . 
\end{align}

By the Poincar\'e separation theorem (also known as the Cauchy interlacing theorem), as $\tilde{B}$ is a principal submatrix of the real symmetric matrix $B$, we have the inequality
\begin{align}
    \lambda_{j+1}(B) \leq \lambda_{j}(\tilde{B}) \leq \lambda_{j}(B) 
\end{align}
for all $j \in [d-1]$. 

Combining these results and using the fact that $\lambda_{j}(A) - \lambda_{d}(A)\geq 0$ by our ordering assumption, we thus have the upper bound
\begin{align}
    \tr(AB) 
    &\leq \sum_{j=1}^{d-1} [\lambda_{j}(A) - \lambda_{d}(A)] \lambda_{j}(\tilde{B}) + \sum_{j=1}^{d} \lambda_{d}(A) B_{jj} 
    \\
    &\leq \sum_{j=1}^{d-1} [\lambda_{j}(A) - \lambda_{d}(A)] \lambda_{j}(B) + \sum_{j=1}^{d} \lambda_{d}(A) B_{jj} 
    \\
    &= \sum_{j=1}^{d-1} \lambda_{j}(A) \lambda_{j}(B) - \lambda_{d}(A) \sum_{j=1}^{d-1} \lambda_{j}(B) + \lambda_{d}(A) \sum_{j=1}^{d} B_{jj}
    \\
    &= \sum_{j=1}^{d-1} \lambda_{j}(A) \lambda_{j}(B) - \lambda_{d}(A) \sum_{j=1}^{d-1} \lambda_{j}(B) + \lambda_{d}(A) \sum_{j=1}^{d} \lambda_{j}(B)
    \\
    &= \sum_{j=1}^{d} \lambda_{j}(A) \lambda_{j}(B),
\end{align}
as $\tr(B) = \sum_{j=1}^{d} B_{jj} = \sum_{j=1}^{d} \lambda_{j}(B)$. Here, the first line uses the induction hypothesis in the form re-stated above, the second line uses the upper bound from the Poincar\'e separation theorem, and the remaining three lines are just algebraic simplification. 

Similarly, we have the lower bound
\begin{align}
    \tr(AB) 
    &\geq \sum_{j=1}^{d-1} [\lambda_{j}(A) - \lambda_{d}(A)] \lambda_{d-j}(\tilde{B}) + \sum_{j=1}^{d} \lambda_{d}(A) B_{jj} 
    \\
    &\geq \sum_{j=1}^{d-1} [\lambda_{j}(A) - \lambda_{d}(A)] \lambda_{d-j+1}(B) + \sum_{j=1}^{d} \lambda_{d}(A) B_{jj} 
    \\
    &= \sum_{j=1}^{d-1} \lambda_{j}(A) \lambda_{d-j+1}(B) - \lambda_{d}(A) \sum_{j=1}^{d-1} \lambda_{d-j+1}(B) + \lambda_{d}(A) \sum_{j=1}^{d} \lambda_{j}(B)
    \\
    &= \sum_{j=1}^{d-1} \lambda_{j}(A) \lambda_{d-j+1}(B) - \lambda_{d}(A) \sum_{j=2}^{d} \lambda_{j}(B) + \lambda_{d}(A) \sum_{j=1}^{d} \lambda_{j}(B)
    \\
    &= \sum_{j=1}^{d} \lambda_{j}(A) \lambda_{d-j+1}(B). 
\end{align}

By induction, we therefore conclude that 
\begin{align}
    \sum_{j=1}^{d} \lambda_{j}(A) \lambda_{d-j+1}(B) \leq \tr(AB) \leq \sum_{j=1}^{d} \lambda_{j}(A) \lambda_{j}(B), 
\end{align}
as desired. 

\subsection{Riemannian optimization}

We now give an alternative perspective on this result based on the properties of orthogonal matrices. Consider
\begin{align}
    f(O) = \tr(\Lambda_{A} O \tilde{\Lambda}_{B} O^{\top} )
\end{align}
as a function on the space of orthogonal matrices, where $\tilde{\Lambda}_{B}$ is a diagonal matrix with non-zero elements given by any permutation of the eigenvalues of $B$. Near the identity, we can write any orthogonal matrix as
\begin{align}
    O = I + t S + \frac{1}{2} t^{2} S^2 + \mathcal{O}(t^3),
\end{align}
where $S$ is a skew-symmetric matrix ($S^{\top} = -S$) and $t$ is a small parameter. Substituting this expansion into $f(O)$, we have
\begin{align}
    f(O) = \tr(\Lambda_{A} \tilde{\Lambda}_{B} ) - t \tr( [\Lambda_{A}, \tilde{\Lambda}_{B}] S ) + \frac{1}{2} t^{2} \tr(\Lambda_{A}\tilde{\Lambda}_{B} S^2 - 2 \Lambda_{A} S \tilde{\Lambda}_{B} S + \Lambda_{A} S^2 \tilde{\Lambda}_{B}) + \mathcal{O}(t^{3}), 
\end{align}
where $[X,Y] =  X Y - Y X$ is the matrix commutator. Irrespective of the permutation we have chosen to define $\tilde{\Lambda}_{B}$, $[\Lambda_{A}, \tilde{\Lambda}_{B}] = 0$, so $f(O)$ is stationary at the identity. 

We therefore consider the quadratic term
\begin{align}
    H = \frac{1}{2} \tr(\Lambda_{A}\tilde{\Lambda}_{B} S^2 - 2 \Lambda_{A} S \tilde{\Lambda}_{B} S + \Lambda_{A} S^2 \tilde{\Lambda}_{B}), 
\end{align}
as it will determine whether this stationary point is a local minimum, local maximum, or saddle. As $[\Lambda_{A}, \tilde{\Lambda}_{B}] = 0$, we can immediately simplify this to
\begin{align}
    H = \tr(\Lambda_{A}\tilde{\Lambda}_{B} S^2 - \Lambda_{A} S \tilde{\Lambda}_{B} S) . 
\end{align}
For brevity, define $a_{i} = (\Lambda_{A})_{ii}$ and $b_{i} = (\tilde{\Lambda}_{B})_{ii}$. Then, expanding in components, 
\begin{align}
    H &= \sum_{i,j=1}^{d} ( a_{i} b_{i} - a_{i} b_{j} ) S_{ij} S_{ji} 
    \\
    &= - \sum_{i,j=1}^{d} a_{i} (b_{i} - b_{j}) S_{ij}^{2},
\end{align}
as by skew-symmetry $S_{ji} = - S_{ij}$. Using the fact that the $i=j$ terms clearly vanish, we have
\begin{align}
    H &= - \sum_{i<j} a_{i} (b_{i} - b_{j}) S_{ij}^{2} - \sum_{i>j} a_{i} (b_{i} - b_{j}) S_{ij}^{2}. 
\end{align}
By re-labeling indices $i \leftrightarrow j$ and using the fact that $S_{ji}^{2} = S_{ij}^{2}$, 
\begin{align}
    \sum_{i>j} a_{i} (b_{i} - b_{j}) S_{ij}^{2} = -\sum_{i < j} a_{j} (b_{i} - b_{j}) S_{ij}^{2}
\end{align}
so upon re-combining terms
\begin{align}
    H &= - \sum_{i < j} (a_{i} - a_{j}) (b_{i} - b_{j}) S_{ij}^{2} . 
\end{align}

This now depends on the ordering of the eigenvalues, and thus on the permutation of indices. If the eigenvalues of both $A$ and $B$ are in non-increasing order such that $a_{i} \geq a_{j}$ and $b_{i} \geq b_{j}$ whenever $i < j$, we have $H < 0$, and the identity is a local maximum of $f(O)$. In contrast, if they are oppositely ordered, then $H>0$ and the identity is a local minimum. Otherwise, the stationary point is a saddle. These results are consistent with Ruhe's inequality as proved above.

\section{Phase transition with increasing pretraining task diversity}\label{sec:phase_transition_general}

As we have seen throughout, the task diversity parameter is crucial because it quantifies the quality of the pretraining data: higher $\kappa$ implies a richer span of task variations, enabling the in-context learner to infer the structure shared between pretrain and test sets more accurately. Understanding $\kappa$ thus sheds light on the implicit algorithm performed by the model. Previous work from \citet{raventos2023pretraining, lu2025asymptotictheoryincontextlearning} in particular have investigated the performance of ICL as $\kappa$ increases. In particular, \citet{lu2025asymptotictheoryincontextlearning} showed, in the isotropic task case, that there is a phase transition at $\kappa=1$: for $\kappa < 1$, the model has insufficient task diversity to be able to generalize within tasks, and is forced to memorize the training tasks, while for $\kappa > 1$, the model has seen sufficient tasks to generalize in-context efficiently. 

\begin{figure}[htbp]
    \centering
    \begin{subfigure}[b]{0.49\textwidth}
        \centering
        \includegraphics[width=\linewidth]{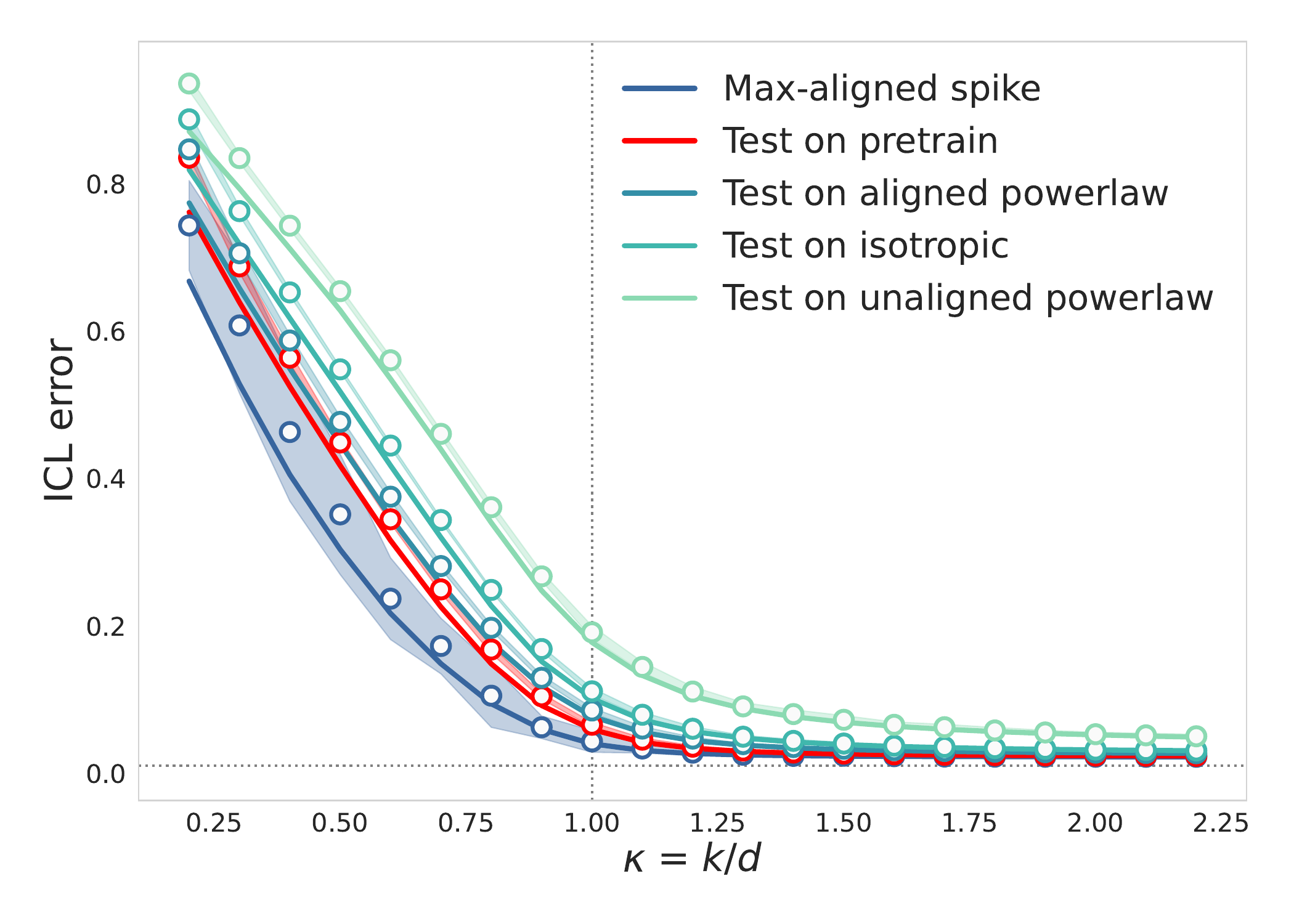}
        \caption{Full-rank phase transition}
    \end{subfigure}
    \begin{subfigure}[b]{0.49\textwidth}
        \centering
        \includegraphics[width=\linewidth]{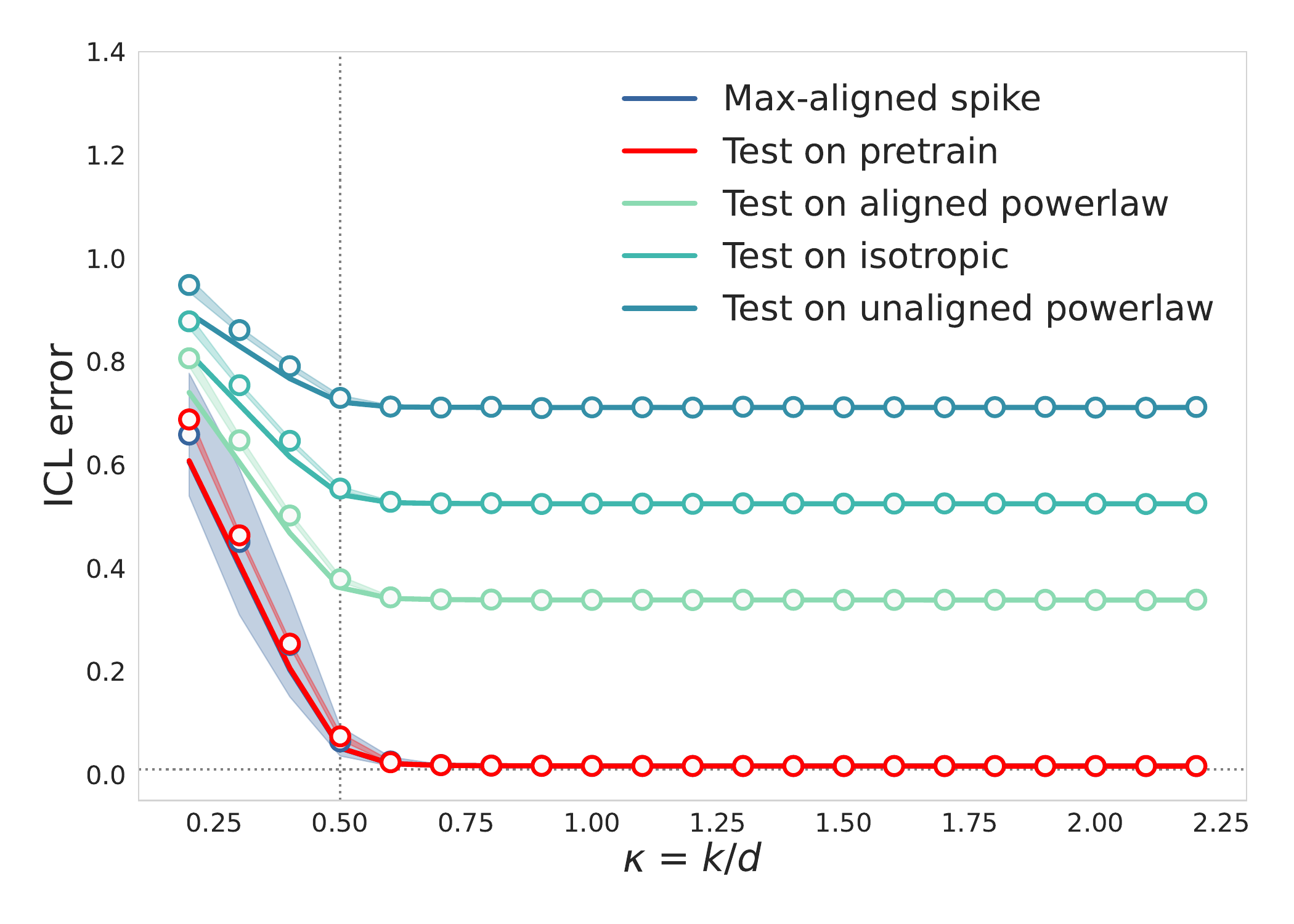}
        \caption{Half-rank phase transition}
    \end{subfigure}
    \caption{Theory curves with corresponding numerical simulations showing phase transition of ICL error in $\kappa$ for a range of test structures. For (a) $\Ctr$ is the same as in Figure \ref{fig:spike_align_demonstration} and is full-rank; for (b) $\Ctr = \text{diag}[2I_{d/2}, 0_{d/2}]$, thus half-rank. \textit{Parameters:} $d=80, \alpha=80,\tau=80,\rho=0.01.$ Tests for each     are done on $\Ctr$, $I_d$, powerlaw (spectral power 0.5, aligned and unaligned with $\Ctr$), as well as the rank-1 $\Ctst$ optimal from Result \ref{res:simplex}.}
    \label{fig:alphalimit}
\end{figure}

The settings considered in both \citet{raventos2023pretraining, lu2025asymptotictheoryincontextlearning} focus on isotropic tasks where the pretraining and testing task distributions are the same. Here, we verify that not only does this phase transition in task-diversity remain in the presence of structured pretraining task distributions, but further is independent of the test distribution (see Figure \ref{fig:alphalimit}). 

\begin{corbox}[label=res:alphakappalimit]{Phase transition in task diversity}{}
In the proportional limit of $\alpha,\tau\to\infty$ such that $\tau=\alpha/\gamma$ for $\gamma = \Theta(1)$ fixed, there is a phase transition in $\eicl$ at $\kappa = \mathrm{rank}(\Ctr)/d$. Specifically, for $r \equiv \mathrm{rank}(\Ctr)/d$ then \begin{equation}
    \lim_{\tau,\alpha\to\infty} \eicl(\Ctr,\Ctst) = \begin{cases}
        \rho + \left(1+\frac{\rho}{\rho+\ctr}\gamma\right)\tr\left[\Ctst\left(x_1\Ctr+I_d\right)^{-1}\right] &\, \kappa < r \\
        \rho + \left(1+\frac{\rho}{\rho+\ctr}\gamma\right)\tr\left[\Ctst\left(x_2\Ctr+I_d\right)^{-1}\right] &\, \kappa > r
    \end{cases}
\end{equation} where $x_1$ and $x_2$ are defined self-consistently as solutions of \begin{align}
    \tr\left[\left(x_1\Ctr+I_d\right)^{-1}\right] &= 1-\kappa \\
    \tr\left[\left(x_2\Ctr+I_d\right)^{-1}\right] &= 1-r.
\end{align}
\end{corbox}

To prove this result, we first suppose that $\Ctr$ is invertible. We start by writing the self-consistency equation \begin{equation}
    \mathcal{M}_\kappa(\sigma) = \frac{1}{d}\tr\left(\left(\left(1-\frac{1}{\kappa}+\frac{\sigma}\kappa{\mathcal{M}_\kappa(\sigma)}\right)\Ctr + \sigma I_d\right)^{-1}\right)\,.
\end{equation} 
for $\Mk(\sigma)$ equivalently as a self-consistency equation for the renormalized ridge $\tilde{\sigma}$, defined by $$\tilde{\sigma}\tr[(\Ctr + \tilde{\sigma})^{-1}] = \sigma\Mk(\sigma)$$ or equivalently \begin{equation}\tilde{\sigma} = \frac{\sigma}{1-\frac{1}{\kappa} + \frac{\tilde{\sigma}}{\kappa}\tr\left[\left(\Ctr + \tilde{\sigma}\right)^{-1}\right]}\,.\end{equation} In the $\sigma\to 0$ limit, $\tilde{\sigma}$ limits to 0 when $\kappa > 1$, and limits to the unique solution of \begin{equation}
    1-\frac{1}{\kappa} + \frac{\tilde{\sigma}}{\kappa}\tr\left[\left(\Ctr + \tilde{\sigma}\right)^{-1}\right] = 0 
\end{equation} for $\kappa < 1$. For our purposes, this means that \begin{equation}
    \lim_{\sigma\to 0} \sigma\Mk(\sigma) = \begin{cases}
        1-\kappa\,, & \kappa < 1\\
        0\,, & \kappa > 1\,
    \end{cases} 
\end{equation} as well as  \begin{align}
    \lim_{\sigma\to 0} \,\sigma F_\kappa(\sigma) &= \lim_{\sigma\to 0} \,\left(\frac{1}{\sigma}\left(1-\frac{1}{\kappa} + \frac{\sigma\Mk(\sigma)}{\kappa}\right)\Ctr + I_d\right)^{-1} 
    \\
    &= \begin{cases}
        \left(x\Ctr + I_d\right)^{-1}\,,&\kappa < 1 \\
        0 \,,& \kappa > 1
    \end{cases}
\end{align} and finally \begin{equation}
    \lim_{\sigma\to 0} \,\sigma^2 F_\kappa'(\sigma) = \begin{cases}
        -\left(x\Ctr + I_d\right)^{-1}\,,&\kappa < 1 \\
        0 \,,& \kappa > 1
    \end{cases}
\end{equation} 
for $x$ defined by $\tr\left[\left(x_1\Ctr+I_d\right)^{-1}\right] = 1-\kappa$. 

Using these characterizations of the $\sigma\to0$ limit, and noticing for our problem that if $\tau\to\infty$ (so $\tilde{\lambda}=0$), then we must take the $\sigma \to 0$ limit of $\eicl$, being careful to use 
\begin{equation}
    \tau = \frac{\rho+\ctr}{\gamma}\frac{1}{\sigma}
\end{equation}
to enforce the proportional $\alpha/\tau$ limit. Doing so gives the required answer.

The above proof requires explicitly that $\Ctr$ is invertible. If it is not, we have to be more careful with handling the $\sigma\to0$ limit of (\ref{eq:tildenu}). One potential branch of this solution is \begin{align}
    1- \frac{1}{\kappa} + \frac{1}{\kappa}\tilde{\sigma}\tr[(\Ctr + \tilde{\sigma})^{-1}] \to 0
\end{align} or equivalently \begin{align}
    \frac{1}{d}\sum_{i=1}^r\frac{\tilde{\sigma}}{\lambda_i + \tilde{\sigma}} + \frac{1}{d}(d-r)\frac{\tilde{\sigma}}{0 +\tilde{\sigma}} \to 1 -\kappa \quad \implies \quad \frac{1}{d}\sum_{i=1}^r\frac{\tilde{\sigma}}{\lambda_i + \tilde{\sigma}}\to \frac{r}{d}-\kappa
\end{align} where $r$ is the rank of $\Ctr$. This is what causes the split in behavior of the solution at $\kappa = r/d$ (which was previously 1). For $\kappa < r/d$, this is solvable at nonzero $\tilde{\sigma}$ and we end up with the familiar solution branch of \begin{align}
    \sigma\Mk(\sigma)  = \tilde{\sigma}\tr[(\Ctr + \tilde{\sigma})^{-1}]\to 1-\kappa.
\end{align} For $\kappa > r/d$, there is no longer a sensible solution at nonzero $\tilde{\sigma}$ (since $\tilde{\sigma}$ cannot be negative) and so we're forced to take $\tilde{\sigma} \to 0$. This gives \begin{align}
    \sigma\Mk(\sigma)  =  \tilde{\sigma}\tr[(\Ctr + \tilde{\sigma})^{-1}] = \frac{1}{d}\sum_{i=1}^r\frac{\tilde{\sigma}}{\lambda_i + \tilde{\sigma}} + \frac{1}{d}(d-r)\frac{\tilde{\sigma}}{0 +\tilde{\sigma}} \to 1-\frac{r}{d}\,.
\end{align} This concludes the proof of the claimed result, and recovers the previous result for invertible $\Ctr$.

\section{Context length shifts}\label{sec:context_length_scaling}
\begin{figure}[htbp]
    \centering
    \includegraphics[width=0.5\linewidth]{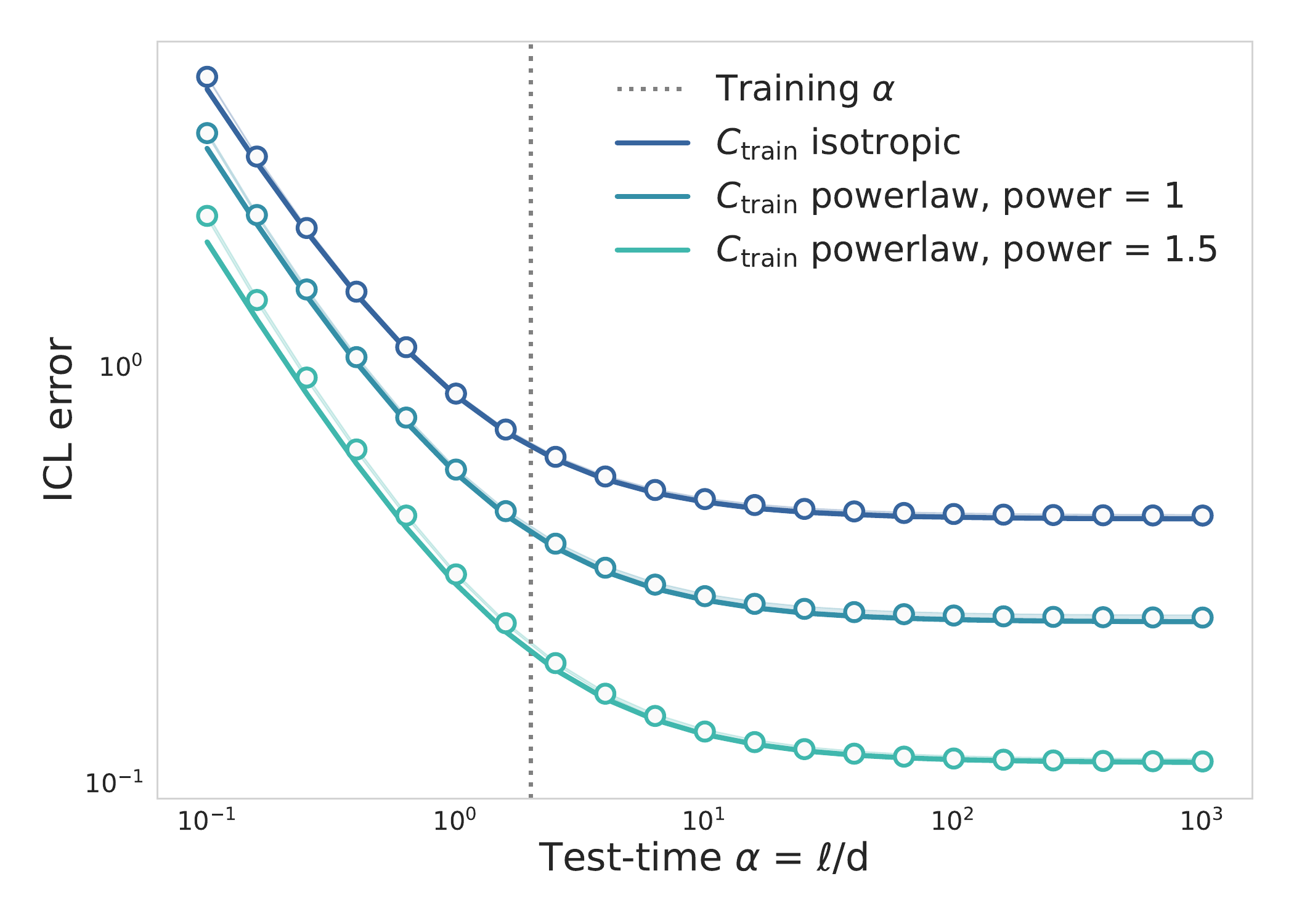}
    \caption{Theory curves with corresponding numerical simulations showing monotonicity of error in test-time context length for a variety of task structures. The dashed line corresponds to training $\alpha$. \textit{Parameters}: $d=150$, $\alpha=2$, $\tau=4$, $\kappa=1$, $\rho = 0.01$. Testing is done on the same distribution as pretraining.}
    \label{fig:alpha_test_time_scaling}
\end{figure}

We have thus far assumed that the pretraining- and test-time context lengths are the same. Our general result in (\ref{eq:full_ICL_formula_appendix_result}) allows for an $\alpha_\mathrm{test}$ that differs from the training context length $\alpha$, but shows that this shift only affects a single scaling factor in $e_\mathrm{scalar}$, where a factor of $1/\alpha$ is replaced by $1/\alpha_\mathrm{test}$. Despite $\eicl$ not being monotonic in training context length, as demonstrated by \citet{lu2025asymptotictheoryincontextlearning}, it is monotonically decreasing in test context length: testing on larger-context prompts can only decrease error, and testing on shorter-context prompts can only increase error. This is demonstrated in Figure \ref{fig:alpha_test_time_scaling} for a variety of different task structures. Recall that $\alpha_\text{test}$ only appears in $\eicl$ through $e_\text{scalar}$, as part of an effective noise term: having longer test-time contexts will give the model a better estimate of the token distribution, leading to a better predictor for the task corresponding to that context. 

\section{Experimental details}\label{sec:experimentaldetails}
Code for all simulations can be found \href{https://github.com/Pehlevan-Group/task_alignment_governs_icl_preprint/tree/main}{on Github}.

\subsection{Linear Simulations}
Linear simulations are done by sampling pretraining and testing distributions as described by (\ref{eq:pretrainingdistribution}) and (\ref{eq:testingdistribution}), and computing numerical $\Gamma^*$ by (\ref{eq:gamma_star_actual_formula_appendix}). We simulate the ridgeless $\lambda\to 0$ limit of this theory by taking $\lambda = 0.00001$.

\subsection{Nonlinear Architecture}
The architecture used for Figure \ref{fig:FIGURE3_nonlinearalignments} is a transformer formed of two transformer-blocks, constructed as follows. The input to the architecture is the $Z$ matrix (\ref{eq:Zstructure}). We apply a causal mask to ensure that each token attends to itself and prior tokens in the sequence. Each transformer-block is made of first applying single-head softmax attention to this masked input, with residual connection, and then normalised; this is then passed to a single hidden layer MLP with GELU activation, followed by a final residual connection and layer norm application. The final logit is computed by a dense layer projecting the output of the two transformer-blocks into a scalar. 

The model is pretrained with data sampled by (\ref{eq:pretrainingdistribution}). We form $n$ $Z$-matrices from these samples, which are again the inputs to the architecture, and the model is pretrained to predict the corresponding $y_{\ell+1}$ value by minimising MSE loss between the final logit and $y_{\ell+1}$. Training is done using AdamW for 1000 epochs with batch size 16 and learning rate 0.0001. 

Testing is done by sampling a batch of $n$ contexts from (\ref{eq:testingdistribution}), and tracking MSE between the true $y_{\ell+1}$ and the model output. This test sampling is repeated 500 times to average out noise.  

\end{document}